\def\boxit#1{\vbox{\hrule\hbox{\vrule\kern6pt
          \vbox{\kern6pt#1\kern6pt}\kern6pt\vrule}\hrule}}
\def\bse{\begin{eqnarray*}}
\def\ese{\end{eqnarray*}}
\def\be{\begin{eqnarray}}
\def\ee{\end{eqnarray}}
\def\bq{\begin{equation}}
\def\eq{\end{equation}}
\def\bse{\begin{eqnarray*}}
\def\ese{\end{eqnarray*}}
\def\T{^{\rm T}}
\def\imagetop#1{\vtop{\null\hbox{#1}}}
\newcommand{\corb}[1]{\textcolor{black}{#1}}
\newcommand{\corbt}[1]{\textcolor{black}{#1}}
\newcommand{\bbN}{\mathbb{N}}
\newcommand{\bbS}{\mathbb{S}}
\newcommand{\bbP}{\mathbb{P}}
\newcommand{\bD}{\mathbf{D}}
\newcommand{\bW}{\mathbf{W}}
\newcommand{\bU}{\mathbf{U}}
\newcommand{\bA}{\mathbf{A}}
\newcommand{\bB}{\mathbf{B}}
\newcommand{\bF}{\mathbf{F}}
\newcommand{\bM}{\mathbf{M}}
\newcommand{\bN}{\mathbf{N}}
\newcommand{\bS}{\mathbf{S}}
\newcommand{\bT}{\mathbf{T}}
\newcommand{\bV}{\mathbf{V}}
\newcommand{\bx}{\mathbf{x}}
\newcommand{\by}{\mathbf{y}}
\newcommand{\bv}{\mathbf{v}}
\newcommand{\bu}{\mathbf{u}}
\newcommand{\bw}{\mathbf{w}}
\newcommand{\bpsi}{\boldsymbol{\psi}}
\newcommand{\bphi}{\boldsymbol{\phi}}
\newcommand{\bchi}{\boldsymbol{\chi}}
\newcommand{\bSigma}{\boldsymbol{\Sigma}}
\newcommand{\0}{\mathbf{0}}
\newcommand{\mcB}{{\mathcal B}}
\newcommand{\mcC}{{\mathcal C}}
\newcommand{\mcD}{{\mathcal D}}
\newcommand{\mcE}{{\mathcal E}}
\newcommand{\mcF}{\mathcal{F}}
\newcommand{\mcM}{{\mathcal M}}
\newcommand{\mcO}{{\mathcal O}}
\newcommand{\mcP}{{\mathcal P}}
\newcommand{\mcS}{\mathcal{S}}
\newcommand{\mcT}{{\mathcal T}}
\newcommand{\oone}{\bold{1}}
\newcommand{\nset}{{\mathbb N}}
\newcommand{\eset}[1]{{\mathbb E} \left[ #1 \right] }
\newcommand{\ip}[2]{\langle #1,#2\rangle}
\newcommand{\Union}{\bigcup}
\newcommand{\R}{\mathbb{R}}
\newcommand{\spanof}{\operatorname{span}}
\newcommand{\rank}{\operatorname{rank}}
\newcommand{\Cov}{{\rm Cov}}
\def\R{\Bbb{R}}
\definecolor{darkgreen}{rgb}{0, 0.6, 0}
\definecolor{airforceblue}{rgb}{0.36, 0.54, 0.66}
\definecolor{applegreen}{rgb}{0.55, 0.71, 0.0}
\definecolor{asparagus}{rgb}{0.53, 0.66, 0.42}
\definecolor{cadetblue}{rgb}{0.37, 0.62, 0.63}
\definecolor{cambridgeblue}{rgb}{0.64, 0.76, 0.68}
\definecolor{olivine}{rgb}{0.6, 0.73, 0.45}
\definecolor{rufous}{rgb}{0.66, 0.11, 0.03}
\definecolor{sangria}{rgb}{0.57, 0.0, 0.04}
\definecolor{neworange}{rgb}{1, 0.64, 0}
\definecolor{flowblue}{rgb}{0.4471,    0.6235,    0.8118}
\definecolor{babyblue}{RGB}{153,204,255}
\definecolor{tan}{RGB}{210,180,140}
\tikzstyle{block} = [draw, shade, drop shadow, rounded corners=1ex,
\tikzstyle{newblock} = [draw, shade, drop shadow,rounded
\tikzstyle{obs} =
          \tikzstyle{proc} = [draw,rectangle,rounded
          \tikzstyle{emptyblock} = [draw,minimum width=2em]
          \tikzstyle{branch}=[fill,shape=circle,minimum size=3pt,inner
            \tikzstyle{vecArrow} = [thick, blue,
\def\thm@space@setup{%
  \thm@preskip=0\topsep \thm@postskip=\thm@preskip
}
\theoremstyle{plain}
\newtheorem{theorem}{Theorem}[section]
\newtheorem{lemma}[theorem]{Lemma}
\newtheorem{prop}[theorem]{Proposition}
\theoremstyle{remark}
\newtheorem{definition}[theorem]{Definition}
\newtheorem{asum}[theorem]{Assumption}
\newtheorem*{remark}{Remark}
\begin{document}

\title[Stochastic Analysis Vector Field Anomaly Detection]{Distribution-Free Stochastic Analysis and Robust Multilevel Vector Field Anomaly Detection}

\author{Julio E. Castrill\'{o}n-Cand\'{a}s$^{\ddagger}$, Michael Rosenbaum$^{\ddagger}$, Mark Kon$^{\ddagger}$}
\email{jcandas@bu.edu, mrbaum@bu.edu, mkon@bu.edu}

 \address{
   ${\ddagger}$ Department of Mathematics and Statistics, 
  Boston University, Boston, MA 
  }

\maketitle

\begin{abstract}
 Massive vector field datasets are common in multi-spectral optical and radar sensors, among many other emerging areas of application. We develop a novel stochastic functional (data) analysis approach for detecting anomalies based on the covariance structure of nominal stochastic behavior across a domain. An optimal vector field Karhunen-Lo\`{e}ve expansion is applied to such random field data. A series of multilevel orthogonal functional subspaces is constructed from the geometry of the domain, adapted from the KL expansion. Detection is achieved by examining the projection of the random field on the multilevel basis. \emph{A critical feature of this approach is that reliable hypothesis tests are formed, which do not require prior assumptions on probability distributions of the data.} The method is applied to the important problem of degradation in the Amazon forest. Due to the complexity and high dimensionality of satellite imagery, it is not feasible to assume known distributions, nor to estimate them. In addition to providing reliable hypothesis tests, our approach shows the advantage of using multiple bands of data in a vectorized complex, leading to better anomaly detection. Furthermore, using simulated data, our approach is capable of detecting subtle anomalies that are impossible to detect with PCA-based methods.
\end{abstract}

\section{Introduction}

The development of ever more massive datasets, concurrent with
advances in artificial intelligence and machine learning, is
transforming many aspects of society in extensive ways. Remote sensing
and GIS data over various temporal and spatial resolutions provide
foundational data for addressing issues within many of the facets of
human dynamics \cite{shaw2018introduction}. Increasingly, Wi-Fi and
GPS tracking via cell phones enable us to gather data at high
spatio-temporal resolutions at low cost, providing real-time solutions
for dynamic traffic management and accident prevention
\cite{petrovska2015traffic, guo2020gps}.  Geotagged social media and
direct locational information have provided ways of classifying
functional characteristics of urban locations
\cite{chen2017delineating, hasan2013spatiotemporal}. Analysis of
information from daily mobility patterns
\cite{barboza2020identifying}, IoT sensors \cite{paul2016smartbuddy},
satellites \cite{rotem2015role} and drones \cite{zhang2019predicting},
provide extensive data for systematic study of human dynamics in
migration, disease outbreaks \cite{peng2020exploring}, and threat
outbreaks. Many methods have been developed to detect and identify
anomalies. In particular, for syndromic surveillance and signal
processing the scan statistics approach has been widely used
\cite{Arias-Castro2018,Kulldorff1997,Cheung2013,Guerriero2009,
  Neill2004,Arias-Castro2005,Walther2010,Neill2012}. In addition, 
  PCA based methods have been developed for anomaly detection in the 
  context of network traffic \cite{Lakhina2004}.

    \emph{Due to the complexity and high dimensionality of many modern
    datasets, including satellite imagery, it is not feasible to
    assume known parametric representations of distributions, nor is
    it feasible to estimate them.} Most known
    probabilistic/statistical methods require prior or parametric
    knowledge of the distribution.  This makes them unsuitable for
    such datasets, as they can lead to erroneous conclusions. This is
    a main weakness of many current probabilistic/statistical
    methods. There is a need for a novel probabilistic mathematical
    theory that can tackle this problem.  We introduce a new
    perspective based on singular value decompositions of random
    fields and stochastic processes on tensor product Bochner
    spaces. A critical feature of this approach is that reliable
    hypothesis tests can be formed which do not require prior
    assumptions on probability distributions. Only a good estimate for
    data covariance is needed, making for a significantly simpler
    problem.

In this paper we develop a framework to detect anomalies in random
vector fields based on stochastic functional analysis. This approach
was recently introduced in \cite{Castrillon2022} for scalar
data. However, the detection theory is preliminary and no applications
are shown except for a few examples.  In our current paper the
approach is based on optimal vector field Karhuen-Lo\`{e}ve (KL)
tensor product expansions, and the construction of vector field
multilevel functional spaces for the detection of anomalies.  We show
that this method is well suited for vector field data over complex
geometrical domains (or networks) arising from the measurement of
different modalities from the same objects. Applications include satellite data
with multiple spectral bands. Using KL expansions, this approach
allows detection within large classes of random vector fields. The
nested multilevel spaces are natively adapted to tensor product
expansions. Their construction is elaborate and has been used in the
context of solving Partial Differential Equations (PDEs)
(\cite{Castrillon2003,D'Heedene2005,Tausch2001}).

The application of vector field KL expansions to Functional Data
Analysis (FDA) is almost non-existent and not properly understood,
despite its expansive potential application in multiple fields in extensively integrating multimodal information. Furthermore,
applications appear to be restricted to simple closed interval domains
that correspond to temporal data. The expansion from temporal to
spatio and spatio-temporal and further to extensive multimodal data
speaks to the integration that will be needed.  For example, in the
recent paper \cite{Gao2019} the authors present a method to compress
high dimensional temporal data in the form of a vector field. However,
each entry in the vector field is expanded separately using KL. This
is suboptimal as a compact optimal representation can be obtained by
using the vector field KL expansion. In another recent paper
\cite{Happ2018} a proof of the vector field KL expansion can be found.
Although there is a comment on optimality, it is not proven and the
reference they provide is for PCA and not for KL expansions. Our
approach has the following main contributions:

\begin{itemize}

\item A detailed and rigorous proof of the existence and optimality of
  the vector field KL expansion on general domains. This proof is
  based on tensor product theory from functional analysis and the
  results developed in \cite{Schwab2006}.

\item Detection of anomalous global and local signals described as
  scalar or vectorial random fields on general domains.

\item \emph{A critical contribution of this paper is the construction
of reliable hypothesis tests that do not require assumptions on the
data distribution, but only the covariance structure, a significantly
easier problem.} This is a fully non-parametric probabilistic
  framework, in particular without any Gaussian or other
  distributional assumptions of the data.  This is in stark contrast
  to traditional hypothesis testing where a parametric model of the
  data is assumed. This can lead to erroneous answers if the
  distribution of the data is not close to the assumed model.

\item The distribution free hypothesis test approach leads to a novel approach of detection. Our numerical results show that very subtle anomalies can be detected, where other methods, such as PCA based \cite{Lakhina2004} are not able to.

\item The method is demonstrated in an important remote sensing
  application with Sentinel-2 satellite data to detect degradation in the Amazon forest.

\end{itemize}

\noindent In addition to having the following features:
\begin{inparaenum}[(1)]
     \item Stochastic fusion of the anomalies in multimodal vector
       field data without any loss of information.  This represents an
       optimal fusion of the scalar components of the vectorized data.
           \item Quantification of magnitudes of the anomalies defined on a
      suitable Bochner normed space.
    \item Adaptability of vector field signal domains to be defined on
      complex topologies. This includes geospatial, spatio-temporal,
      manifold and network topologies, among others.      
    \item Multilevel filters can process large quantities
      of data with near-optimal performance.
    \item Applicability of the theory and code to existing Machine
      Learning problems, leading to significant increases to accuracy
      in their solutions (See \cite{Castrillon2025}). 
\end{inparaenum} 
      
      Note that in \cite{Lakhina2004} the authors develop an approach for
detecting network anomalies based on residual subspaces of a Principal
Component Analysis (PCA). This approach is related to the methods
developed in this paper. However, as shown in section \ref{section:performance}, our approach is significantly more robust for anomaly detection.

We introduce a novel application of this framework in the context of a
major environmental problem, deforestation in the Amazon rain forest.
Proper detection of deforestation and forest degradation events can
lead to quantification and potential mitigation of resulting effects
on climate. Climate change is understood to be driven by unmitigated
anthropogenic emissions of greenhouse gases into the atmosphere,
primarily of carbon dioxide from combustion of fossil fuels and
deforestation. 
Additionally, deforestation has been identified as
a factor of greater significance than climate change alone
\cite{skole1994}, and as both the single most important variable
affecting ecological systems \cite{chapin2000} and the most
significant threat to biodiversity \cite{sala2000}.

Satellite remote sensing is often the only viable means for gathering
deforestation information, but numerous problems hinder effective
collection of information.  Among the technical problems are
persistent and unpredictable cloud cover in deforestation hot-spots,
which complicates automated processing of very accessible optical
satellite data.

Our mathematical framework is well suited to detecting changes on land
surfaces in optical modalities.  An application of the multilevel
anomaly filter is applied to such data (e.g., collected by the Sentinel-2
satellite) that  record Amazon forest degradation.
\corbt{This application will be analyzed in detail in Section
  \ref{section:applications}.  In Figure \ref{PR:Fig1} the general
  pipeline for anomaly detection using scalar satellite or other
  datasets is demonstrated. Here the optical training dataset is used
  to construct the covariance eigenstructure. These data consist of
  measurements from the Sentinel-2 (\cite{Drusch2012}) optical 
  Satellite  Enhanced Vegetation
  Index (EVI).  The eigenstructure conveys the baseline behavior of
  the vegetation land cover. From the visible domain containing novel
  optical information at single time instances, the KL expansion of
  the optical training dataset is formed. Due to the cloud removal
  pre-processing algorithms \cite{Skakun2022}, the domain of the novel
  optical information may have regions with missing data, and so the
  domain may be irregular in shape.  From the truncated KL expansion
  the multilevel representation of the residual space is formed. By projecting
  novel inputs onto this space at fixed time, the residual map is
  obtained, and we can form  hypothesis tests to measure the
  anomalies.}

Our work here augments this approach to vector field data,
i.e. multiple structured features. The detection framework thus
extends multiple satellite detection modalities (e.g., multi-spectral
data) allowing augmented coordinated detection over scalar modalities,
e.g., in EVI data. The deforestation and
degradation detection problem can thus be framed in a precisely stated
and much more general mathematical framework, using high dimensional
probabilistic constructions.

\begin{figure}[t]
\centering

\begin{tikzpicture}[scale = 1.1, every node/.style={scale=0.9},>=latex']

    \node at (3,-1.75) {\includegraphics[scale = 0.25, trim = 5cm 8cm 5cm 7.5cm, 
         clip]{./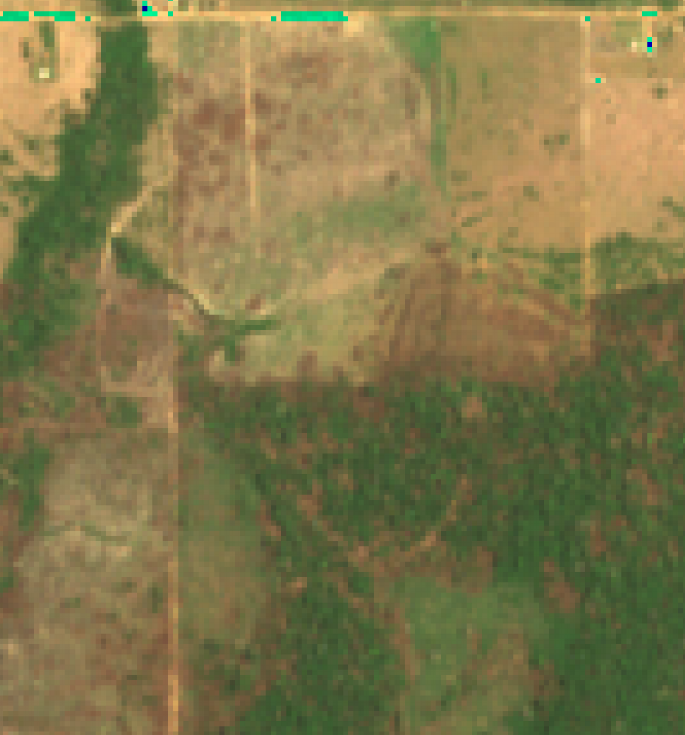}};
    \node at (3,0) {\Large Data};
    \node at (8,0) {\Large Filter};
    \node at (13,0) {\Large Statistics};
           
    \draw[tan,line width=0.5mm,dashed] (5.5,0) -- (5.5,-5.5);
    \draw[tan,line width=0.5mm,dashed] (10.5,0) -- (10.5,-5.5);
     
    \node[block] at (3,-3.65) (sourcenew)
    { \begin{tabular}{c} 
     Novel optical \\
     information 
       \end{tabular}};

       \node[block] at (3,-5) (source)
    { \begin{tabular}{c} 
     Optical training\\  dataset
       \end{tabular}};
            
    \node[block] at (13,-1) (statistics)            
           {
           \begin{tabular}{c}
           Hypothesis \\ 
           tests
           \end{tabular} 
           };

            \node[block] at (8,-1) (spaces)            
           {\begin{tabular}{c} 
           Anomaly \\
           map (residual)
           \end{tabular}} ;
           \draw[vecArrow] (spaces.east) -- (statistics.west);

            \node[block] at (8,-2.35) (multi)            
           {\begin{tabular}{c} 
           Multilevel Representation\\
           of residual subspace
           \end{tabular}} ;
           \draw[vecArrow] (multi.north) -- (spaces.south);

            \node[block] at (8,-3.65) (expansion)            
           {\begin{tabular}{c} 
           Karhunen-Lo\`{e}ve \\
           Expansion
            \end{tabular}} ;
           \draw[vecArrow] (expansion.north) -- (multi.south);
           \draw[vecArrow] (sourcenew.east) -- (expansion.west);
           \draw[vecArrow] (sourcenew.east) -- (spaces.west);
           
            \node[block] at (8,-5) (spatio)            
           {\begin{tabular}{c} 
           Covariance 
           eigenstructure \\
           $(\lambda_k,\phi_k)$           
           \end{tabular}};
           \draw[vecArrow] (spatio.north) -- (expansion.south);
           \draw[vecArrow] (source.east) -- (spatio.west);

                  \node at (13,-3.5)
              {
                \includegraphics[scale = 0.45, trim = 0cm 0cm 28.2cm 1.44cm, clip
                ]{./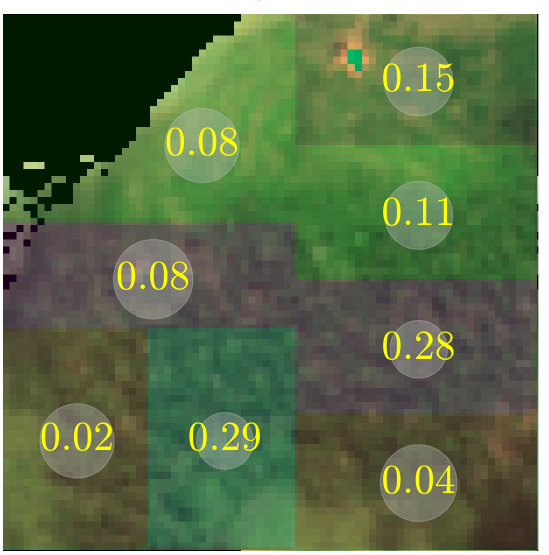}
              };  

\end{tikzpicture}
\caption{Anomaly detection pipeline. Optical training data are
    used to construct the eigenstructure. From the eigenstructure and
    the domain of the novel optical information, the
    Karhunen-Lo\`{e}ve expansion is built. The KL module produces the
    truncation expansion of the random vector field of the training
    dataset. From the truncated KL expansion the multilevel representation of
    the residual space is constructed. The next step is to project the
    novel optical information onto the multilevel basis in the
    residual space, giving rise to the anomaly map. Hypothesis tests
    and anomaly magnitudes can then be computed.}
\label{PR:Fig1}
\end{figure}

\section{Vector field Karhunen-Lo\`{e}ve}
\label{Introduction}

The Karhunen-Lo\`{e}ve expansion is an important methodology that
represents random fields in terms of spatial-stochastic tensor
expansions. It has been shown to be optimal in several ways, making it
attractive for analysis of random fields.  We are interested in data
that can be modeled as random vector fields. For example, satellite
sensors with multiple spectral bands naturally form vectorized
data, among many such examples.  
In this section the mathematical
background for the vector field KL expansion is introduced.

Let $U$ be a domain of $\R^{d}$, $d \in \nset$, and
$(\Omega,\mcF,\bbP)$ be a complete probability space, with a set of
outcomes $\Omega$, and $\mcF$ a $\sigma$-algebra of events equipped
with the probability measure $\bbP$. Let $L^{2}(U;\R^{d})$ be the
space of all square integrable functions $\bv:U \rightarrow \R^{d}$
equipped with the standard inner product $(\bu,\bv) = \int_{U} \bu\T
\bv \, \mbox{d}\bx$, for all $\bu,\bv \in L^{2}(U;\R^{d})$, where
$\bv(\bx) = [v_1(\bx),$ $\dots,v_{d}(\bx)]\T$. 

Taking uncertainty into account, suppose that the data can be
described as random vector field $\bv:\Omega \rightarrow
L^{2}(U;\R^{d})$, where $\bv(\bx,\omega) = [v_1(\bx,\omega),$
  $\dots,v_{d}(\bx,\omega)]\T$, $\bx \in U$, $\omega \in \Omega$, and
\corb{$v_i(\bx,\cdot) \in L^{2}(\Omega)$} for $i = 1,\dots,d$. Note
that from context it will be clear when $\bv(\bx)$ and
$\bv(\bx,\omega)$ is referenced.  Let $L^{2}_{\bbP}(\Omega;
L^{2}(U;\R^{d}))$ be the space of all strongly measurable functions
$\bv:\Omega \rightarrow L^{2}(U;\R^{d})$ equipped with the inner
product $(\bu,\bv)_{L^{2}_{\bbP}(\Omega; L^{2}(U))} = \int_{\Omega}
  (\bu,\bv) \, \mbox{d}\bbP$, for all $\bu,\bv \in
  L^{2}_{\bbP}(\Omega;L^{2}(U;\R^d))$.
\begin{definition}~
\begin{enumerate}[(a)]
\item Suppose $\bv \in L^{2}_{\bbP}(\Omega;L^{2}(U;\R^{d}))$, and denote
  $\eset{\bv} := [\eset{v_1},\dots,\eset{v_d}]$ as the mean of
  $\bv(\bx,\omega)$, where $\eset{v_i} : = \int_{\Omega}
  v_i(\bx,\omega) \,\emph{d} \bbP$, for $i = 1,\dots,d$.
\item For all $\bv \in L^{2}_{\bbP}(\Omega;L^{2}(U;\R^{d}))$ let
  $\Cov(v_i(\bx,\omega),v_j(\by,\omega)) := \mathbb{E} [
  (v_i(\bx,\omega)- \eset{v_i(\bx,\omega)})$ $(v_j(\by,\omega)$ $-
  \eset{v_j(\by,\omega)}) ]$ for $i,j = 1,\dots,d$, and denote the
  matrix-valued covariance function of $\bv$ as
  $\Cov(\bv(\bx),\bv(\by)):=$
\[
\begin{bmatrix}
  \Cov(v_1(\bx,\omega),v_1(\by,\omega)) & \Cov(v_1(\bx,\omega),v_2(\by,\omega)) & \dots  & \Cov(v_1(\bx,\omega),v_d(\by,\omega)) \\
  \Cov(v_2(\bx,\omega),v_1(\by,\omega)) & \Cov(v_2(\bx,\omega),v_2(\by,\omega)) & \dots  & \Cov(v_1(\bx,\omega),v_d(\by,\omega)) \\
  \vdots                                & \vdots                               & \ddots & \vdots \\
  \Cov(v_d(\bx,\omega),v_1(\by,\omega)) & \Cov(v_d(\bx,\omega),v_2(\by,\omega)) & \dots  & \Cov(v_d(\bx,\omega),v_d(\by,\omega)) \\
  \end{bmatrix}.
\]
\end{enumerate} 
\end{definition}

From the properties of Bochner integrals (see
\cite{Light1985,Harbrecht2016}) we have that
\corb{$\eset{v(\bx,\omega)} \in L^{2}(U;\R^{d})$} and that the
covariance function $\Cov(v_i(\bx,\omega),$ $v_j(\by,\omega)) \in
L^{2}(U \times U)$ for all $i,j = 1,\dots d$. Thus
$\Cov(\bv(\bx,\omega),\bv(\by,\omega)) \in L^{2}(U \times U;\R^{q
  \times q})$, where the space $L^{2}(U \times U;\R^{d \times d})$ is
equipped with the inner product
\[
\corb{(\bA,\bB)_{L^2(U \times U;\R^{q \times q})}} := \int_U \int_U
\oone\T \bA(\bx,\by) \bullet \bB(\bx,\by) \oone\,\,\mbox{d}\bx
\mbox{d}\by
\]
for all $\bA, \bB \in L^{2}(U \times U;\R^{d \times d})$, where
$\oone$ is a $d$ dimensional vector with all entries equal to one and
$\bA \bullet \bB$ corresponds to the Hadamard product of $\bA$ and
$\bB$.

Although Karhunen-Lo\`{e}ve expansions are well understood for the
scalar case, rigorous proofs and studies of existence and optimality properties for
the vector field case have been somewhat sparse. Despite the
popularity of the KL expansion and its multiple references, a rigorous
existence and optimality proof for the vector field case could not be
found except for the detailed analysis in \cite{Schwab2006}, in the
context of tensor products. However, the application of this approach is
not trivial and requires careful treatment.  We need to construct the explicit 
tensor product on the appropriate Hilbert subspaces and show that 
$L^{2}_{\bbP}(\Omega;L^{2}(U;\R^{d}))$ is isomorphic to  $L^{2}(U;\R^{d}) \otimes L^{2}_{\bbP}(\Omega)$. The details are described in Appendix \ref{appn}.

Consider the operator
\[
  \mcC_{\bv} (\bu)(\bx) := \int_{U} {\rm
      Cov}(v(\bx,\omega),v(\by,\omega)) \bu(\by)\,\mbox{d} \by
\]
for all $\bu \in L^{2}(U;\R^{d})$.  From Lemma 2 in \cite{Harbrecht2016} the
operator $\mcC_{\bv}:L^{2}(U;\R^{d}) \rightarrow L^2(U;\R^{d})$ is a
non-negative symmetric trace class operator. From Theorem 1 in
\cite{Harbrecht2016} there exists an orthonormal set of eigenfunctions
$\{\bphi_k\}_{k \in \bbN}$, where $\bphi_{k} \in L^{2}(U;\R^{d})$, and
eigenvalues $\lambda_1 \geq \lambda_2 \geq \dots\geq 0$ such that
$\mcC_{\bv} \bphi_k$ = $\lambda_k \bphi_k$ for all $k \in
\bbN$. We can now form the vector field KL expansion (See Appendix \ref{appn}).

\begin{theorem}
  Suppose  $\bv \in L^{2}_{\bbP}(\Omega;L^{2}(U;\R^{d}))$ then
  \[
  \bv(\bx,\omega) = \eset{\bv(\bx,\omega)} + \sum_{k \in \bbN} \lambda^{\frac{1}{2}}_{k} \bphi_k(\bx) Y_{k}(\omega),
  \]
where $ Y_k(\omega) = \frac{1}{\sqrt{\lambda_k}} \int_U
(\bv(\bx,\omega) - \eset{\bv(\bx,\omega)}) \T \bphi_{k}(\bx)
\,\mbox{\emph{d}} \bx$, $\eset{Y_k Y_l} = \delta_{kl}$ and $\eset{Y_k}
= 0$ for all $k,l \in \bbN$.
\end{theorem}

The KL expansion has the useful property of being optimal in the set
of all product expansions.  Suppose that $H_M \subset L^{2}(U;\R^d)$ is a
finite dimensional subspace of $L^{2}(U;\R^d)$ such that $\dim H_M = M$ and
$P_{H_M \otimes L^{2}_{\bbP}(\Omega)}: L^{2}(U;\R^{d}) \otimes
L^{2}_{\bbP}(\Omega) \rightarrow H_M \otimes L^{2}_{\bbP}(\Omega)$ is
an orthogonal projection operator. Suppose $\bv \in L^{2}_{\bbP}(\Omega;L^{2}(U;\R^{d}))$, where
$\eset{\bv} = 0$, 
from Proposition \ref{intro:prop3} 
$\bv \in
  L^{2}(U;\R^{d}) \otimes L^{2}_{\bbP}(\Omega)$, 
and
\[
\inf_{\begin{array}{c}
H_M \subset L^{2}(U;\R^{d}) \\
\mbox{dim}\, H_M = M
\end{array}
} \|\bv - P_{H_M \otimes
L^{2}_{\bbP}(\Omega)} \bv \|_{L^{2}_{\bbP}(\Omega) \otimes L^{2}(U)} =
\left( \sum_{k \geq M+1} \lambda_k \right)^{\frac{1}{2}}
\]
where the infimum is achieved \corb{only} when $H_M =
\mbox{span}\{\bphi_1,\dots,\bphi_{M}\}$.
\begin{remark}
In practice the KL expansion of a random field is intractable for even
a modest number of terms. Estimating the joint distribution of
$Y_1,\dots,Y_M$ requires massive amounts of data due to the high
dimensionality. Only under certain conditions, such as for Gaussian
processes, can the random variables $Y_1,\dots,Y_M$ be explicitly
known (and shown to be independent).  However, for the anomaly filter
built in this paper, the joint
distribution of
$Y_1,\dots,Y_M$ is not required. Furthermore, for the hypothesis test
derived in section \ref{section:mls} only the eigenpairs
$(\lambda_k,\phi_k)$ for $k = 1,\dots,M$ are needed. This makes for a
significantly easier problem whose quantities can be estimated in
practice from realizations of the random field \corb{$\bv(\bx,\omega)
  \in L^{2}_{\bbP}(\Omega;L^{2}(U;\R^{d}))$} using the method of
snapshots \cite{Castrillon2002}.
\end{remark}


\section{Anomaly detection and multilevel orthogonal eigenspaces}
\label{section:mls}

The KL expansion provides a mechanism to represent a vector valued
random field $\bv(\bx,\omega)$ in terms of optimal approximations
based on the first $M$ terms: $\bv_M(\bx,\omega) - \eset{\bv} =
\sum_{k = 1}^{M} \lambda^{\frac{1}{2}}_{k} \phi_k(\bx) Y_{k}(\omega)$.
In the rest of the discussion, without loss of generality, it is
assumed that $\eset{\bv} = \0$.  Suppose that $\bu(\bx,\omega)$ is an
observable random field and assume that the model is given by
$\bu(\bx,\omega) = \bv_M(\bx,\omega) + \bw(\bx,\omega)$.  Given
knowledge of the eigenstructure of $\bv_M(\bx,\omega)$ and the
observations $\bu(\bx,\omega)$, the goal is detection of the anomalous
process $\bw(\bx,\omega)$ and its quantification with respect to a
suitable norm.  Detection is achieved by the construction of
multilevel spaces that are adapted to local and global components of
$\{\phi_1,\dots,\phi_M\}$.

\begin{asum}
We let $\bV_{0} := \mbox{\emph{span}} \{\phi_1,\phi_2, \dots,\phi_M \}$ and
$\bV_0 \subset \bV_1 \dots \subset L^{2}(U;\R^{d})$ be a sequence of
nested subspaces in $L^{2}(U;\R^{d})$ such that
$\overline{\bigcup_{k \in \bbN_{0}} \bV_{k}} =
L^{2}(U;\R^{d})$. Furthermore, for all $k \in \bbN$, we let
$\bW_k \subset L^{2}(U;\R^{d})$ be a subspace such that $\bV_{k+1}
= \bV_{k} \oplus \bW_{k}$, where $\oplus$ is the direct sum, so that
$\overline{
\bV_0 \bigoplus_{k \in \bbN_{0}} \bW_{k}} = L^{2}(U;\R^{d})$.
\label{mls:assum1}
\end{asum}

Although the definition of these spaces is relatively simple, the
construction is elaborate with heavy notation and based on
differential operator-adapted multilevel methods from scientific
computing and computational applied mathematics, for solution of
Partial Differential Equations
\cite{D'Heedene2005,Castrillon2003}. The details of construction of
these multilevel spaces for random fields on complex geometries can be
found our recent publication \cite{Castrillon2022} for the scalar
case. Here we extend this optimal decomposition to multidimensional
vector fields.

We will assume that $U$ can be decomposed into simplices, which can
be thought of as generalizations of triangulations to arbitrary
dimensions.  This allows complex geometric shapes to be simply
approximated. First, we give some definitions.

\begin{definition} A $k$-simplex is defined to be a convex hull of vertices
  $z_0,z_1,\dots,z_k \in \R^d$ that are affinely independent.
 \end{definition}

\begin{definition}
\begin{enumerate}[i)]
\item We denote $\bx_i$ to be the barycenters of simplices $\tau_i \in
  \mcT$ and define $\bbS := \{\bx_1,\dots,\bx_N\}$.
\item The {\rm face} of a $k$-simplex is the convex hull of any $m +
  1$ subset of the points that define a $k$-simplex.
\end{enumerate}
\end{definition}

\begin{definition}
Suppose that $\mcT$ is a collection of simplices in $\R^{d}$.  Then
$\mcT$ is a $k$-simplicial complex if the following properties are
satisfied:
\begin{enumerate}[i)]
        \item Every face of a simplex in $\mcT$ is also in
        $\mcT$.
        \item The non-empty intersection of any two simplices
        $\tau_1,\tau_2 \in \mcT$ is a face of both $\tau_1$ and
        $\tau_2$.
        \item The highest dimension of any simplex in $\mcT$ is
        $k \leq d$.
\end{enumerate}
\end{definition}

The following assumption allows us to construct complex geometrical
shapes from the $k$-simplices and define a space of functions
$\bV_{n+1}$ on them that approximates the vector field
$v(\bx,\omega)$. In Figure \ref{MLRLE:fig1} an example of
triangulation of a surface constructed from $2$-simplices (triangles)
is shown.

\begin{asum}~
\begin{enumerate}[i)]
        \item $U = \cup_{\tau_i \in \mcS} \tau_i$, where $\mcS$ is a
        subset of $\mcT$ and contains $N$ simplices of order $k$.

        \item For any simplex $\tau_i \in \mcS$ and $j = 1,\dots,d$
          let $\bchi^j_{i} := c^j_i
          [0,\dots,0,1^j_{\tau_i},0,\dots,0]$, where $\bchi^j_{i} \in
          L^{2}(U;$ $\R^{d})$ and $1^j_{\tau_i}$ corresponds to the
          indicator function at the $j^{th}$ entry in the vector
          $\bchi^j_i$ on the simplex $\tau_i$.

        \item The coefficients $c^j_i$ for $i = 1, \dots, N$ and $j =
          1,\dots,d$ are chosen such that collection of functions $\bchi^j_i$,
          $\mcE := \{\{\bchi^j_i\}_{i = 1}^{N}\}_{j = 1}^{d}$, forms
        an orthonormal set in $L^{2}(U;\R^{d})$.

        \item Let $\bV_{n+1} = \mcP(\mcE) := span \{\bchi^n_i\}$. We assume
        that Karhunen-Lo\`{e}ve eigenfunctions $\bphi_i \in \mcP(\mcE)$
        for all $i = 1,\dots,M$ where $N > M$.
\end{enumerate}
\label{mls:assum3}
\end{asum}

From the set of indicator functions in $\mcE$, a multilevel basis
representation can be constructed that is adapted to the geometry of
the domain $U$ and the eigenfunctions $\{\phi_1,\phi_2, \dots$ $,\phi_M
\}$. This will allow detection of signals in the vector field in a
local and global sense.  The algorithm to construct a multilevel basis
for the scalar case is described in detail in
\cite{Castrillon2022}. The extension of this basis to the vector field
case can be essentially obtained by replacing the $L^2(U)$ inner
product with that in $L^2(U;\R^{d})$.

For the sake of
completeness the algorithm building the binary tree is described here in
detail.  We will then show how the multilevel basis is constructed for
vector field data. We construct a binary tree to efficiently locate the simplices in
$\mcS$ at different levels of resolution. Furthermore, the binary tree
will serve as a base to construct and locate the multilevel basis
functions of the spaces $\bW_{k}$ for $k = 0, \dots, n$.  The domain
$U$ is initially assumed to be embedded in a square cell of unit
length. For $N$ this can be easily done with a rescaling. We follow
the procedure described in \cite{Dasgupta2008} for the construction
of a kd-tree type decomposition. Other options include Random
Projection (RP) trees, which can be found in \cite{Dasgupta2008}.

Suppose that all the barycenters $\bx \in \bbS$ are embedded in the root cell $B^{0}_{\tt root} (
\mbox{or $B^0_0$})$ $\subset \R^d$, which corresponds to the top of the
binary tree. Without loss of generality it can be assumed that
$B^{0}_{\tt root} = [0,1]^{d}$ and $\bbS \subset B^{0}_{\tt
root}$. Now, the root cell is subdivided according to the rule in
Algorithm \ref{maketree}, thus forming two new cells
$B^{1}_{{\tt left}}$ and $B^{1}_{\tt right}$ at level 1. In general
for any cell $B^l_k$ at level $l$ and index $k$ the collection of
barycenters $\tilde \bbS = \{\bx_j | \bx_j \in B^l_k\}$ is subdivided
as follows by using the following rule \cite{Dasgupta2008} (See Algorithm \ref{chooserule}):

\def\addtriangle#1{
    \xdef\trianglesbuffer{\trianglesbuffer #1}
}

\def\calculatecoordinate(#1,#2,#3)[#4]=\f(#5,#6){
        \pgfmathsetmacro#1{#5}
        \pgfmathsetmacro#2{#6}
        \pgfmathsetmacro#3{\f({(#5)},{(#6)})}
        \pgfmathsetmacro#4{\g({(#5)},{(#6)})}
}
\def\calculatetriangle#1{
    \calculatecoordinate(\xa,\ya,\za)[\wa]=\f(\x,\y)
    \calculatecoordinate(\xb,\yb,\zb)[\wb]=\f(\x#11,\y)
    \calculatecoordinate(\xc,\yc,\zc)[\wc]=\f(\x,\y#11)
    \addtriangle{(\xa,\ya,\za)[\wa] (\xb,\yb,\zb)[\wb] (\xc,\yc,\zc)[\wc]}
}
\def\calculatetheplot{
    \foreach\x in{0,...,20}{
        \foreach\y in{0,...,20}{
            \ifnum\numexpr\x+\y<20
                \calculatetriangle+
            \fi
            %
            \ifnum\numexpr\x>0 \ifnum\numexpr\y>0 \ifnum\numexpr\x+\y<21
                \calculatetriangle-
            \fi\fi\fi
        }
    }
}

\begin{figure}
  \centering
  \begin{tikzpicture}
\begin{scope}[place/.style={circle,draw=blue!50,fill=blue!20,thick, inner sep=0pt,minimum size=1mm}]
    \begin{axis}[scale = 1.5, axis x line=none, axis y line=none, axis z line=none]
        \def\trianglesbuffer{} 
        \def\f(#1,#2){#2*#2/20+2*sin(20*#1)} 
        \def\g(#1,#2){sqrt((#1-20/3)^2+(#2-20/3)^2)} 
        \calculatetheplot
        \edef\pgfmarshal{
            \noexpand\addplot3
                [patch,patch type=triangle,point meta=explicit,opacity=1,shader=faceted,colormap/jet]
                coordinates{\trianglesbuffer};
        }
        \pgfmarshal
    \end{axis}
    \node at (-3.5,0) {};

   \node at (4.4,5.21) [place] {};
   \node at (2.23,1.5) {$U$};
   \draw[<-] (4.53,5.25) to (6,6);

   \node at (6.7,6) {\footnotesize $\begin{bmatrix} v_1(\bx,\omega) \\ v_2(\bx,\omega) \\ \vdots \\  v_d(\bx,\omega)
   \end{bmatrix}$};
   
\end{scope}
\end{tikzpicture}
\caption{Surface domain $U$ constructed from $2$-simplices (triangles) in $\R^3$. The vector random field
$\bv(\bx,\omega) \in L^{2}_{\bbP}(\Omega;L^{2}(U;\R^{d}))$ is defined
over the domain $U$. As an example this could be satellite multi-spectral data over land.}
\label{MLRLE:fig1}
\end{figure}

\begin{enumerate}[(a)]

\item Suppose $v_i$ is the unit vector in the axis coordinate direction for $i = 1,\dots,d$.

\item For each coordinate direction $i = 1,\dots,d$ project every barycenter
$\bx_i \in \tilde \bbS$ onto the unit vector $v_i$

\item Compute the sample variance of these projection coefficients for each coordinate
unit vector $v_i$.

\item Choose the unit coordinate vector $v_i$ in the direction $1\leq j \leq d$
with the maximal sample variance for the above projection
coefficients.

\item Compute the median of the projections along $v$ and split the cell in two
parts ($B^{l-1}_{{\tt left}}$ and $B^{l-1}_{\tt right}$) at this
coordinate position.
  
\end{enumerate}

\begin{algorithm}[H]
\caption{\textsc{ChooseRule}$(\tilde \bS)$ for kd-tree splitting}
\begin{algorithmic}[1]
\Require Point set $\tilde \bbS\subset \R^d$ in a cell
\Ensure Splitting predicate $\textsf{Rule}(x)$, direction $v$, threshold $\tau$
\State For each coordinate direction $e_j$, compute the sample variance of $\{x\cdot e_j: x\in \tilde \bbS\}$
\State Select direction $v \gets e_{j^\star}$ with maximal variance
\State Set threshold $\tau \gets \mathrm{median}\{x\cdot v : x\in \tilde \bbS\}$
\State Define $\textsf{Rule}(x) \equiv (x\cdot v \le \tau)$
\State \Return $(\textsf{Rule},\tau,v)$
\end{algorithmic}
\label{chooserule}
\end{algorithm}

\begin{algorithm}[H]
\caption{\textsc{MakeTree}$(\bbS,n_0)$ (recursive kd-tree construction)}
\begin{algorithmic}[1]
\Require Barycenters $\bbS=\{x_1,\dots,x_N\}$, leaf threshold $n_0$
\Ensure A binary tree $\bT$ whose nodes store cells $B_k^\ell$ and associated point sets

    \State $k$ $\leftarrow$ 0, $l$ $\leftarrow$ 0, $
    \tilde \bbS \gets \{\bx\, | \,\bx \in \bbS\}$
    \State ($\bT$, $k$, $l$) $\leftarrow$ \Call{MakeTreeNode}{$\tilde \bbS$, $n_0$, $k$, $l$}

\Function{MakeTreeNode}{$\tilde \bbS, n_0, k, \ell$}

    \State $\bT$.$B^l_k \gets \{\bx\, | \,\bx \in \tilde \bbS\}$
    \State $\bT$.$k$ $\gets$ $k$, $\bT$.$l$ $\gets$ $l$
    \State $k$ $\leftarrow$ $k$ + 1, $l \leftarrow l + 1$
    \If {$|\tilde \bbS| < n_0$} \State \Return $\bT$, $k$
    \Comment{Leaf node}
    \EndIf
    \State (Rule, threshold, $v$) $\leftarrow$ \Call{ChooseRule}{$\tilde \bbS$}
    \State ($\bT$.LeftTree, $k$) $\leftarrow$  \Call{MakeTreeNode}{$\bx \in \tilde \bbS$: Rule($\bx$) = True, $n_0$, $k$, $l$}
    \State ($\bT$.RightTree, $k$) $\leftarrow$ \Call{MakeTreeNode}{$\bx \in \tilde \bbS$: Rule($\bx$) = false, $n_0$
$k$, $l$}
     \State $\bT$.threshold $\gets$ threshold, $\bT$.$v \gets v$
\EndFunction
\State \Return $\bT$, $k$
\end{algorithmic}
\label{maketree}
\end{algorithm}

By applying Algorithm \ref{maketree} we obtain a tree structure $\bT$, which contains
all the cells $B^l_k$ at each level of resolution $l = 0,\dots,n$ for
every level $l$ and associated index $k$. Furthermore, let $\mcB$ be
the collection of all the non-empty cells $B$ in the tree $\bT$ and
$\mcB^{l}:=\{B^j_m \in \mcB \,|\, j = l\}$. From the tree structure $\bT$
and the set $\mcE$ the multilevel basis adapted to the vector field
KL expansion can be constructed (See Figure \ref{MLRLE:fig2}). 

\begin{figure}[t]

\begin{center}
\begin{tikzpicture}[scale=0.8,
   grow                    = down,
    sibling distance        = 0.4em,
    level distance          = 1.2cm,
    edge from parent/.style = {draw, edge from parent path={(\tikzparentnode) -- (\tikzchildnode)}},
]
\begin{scope}[xshift=5cm, yshift=5cm,
place/.style={circle,draw=blue!50,fill=blue!20,thick,
      inner sep=0pt,minimum size=8mm},
placer/.style={circle,draw=darkgreen!50,
  preaction={fill=olivine,opacity = 0.5}, thick,inner
  sep=0pt,minimum size=4mm}, ]


  
\Tree [.\node[place]{$\bT^{0}_{\tt root}$}; 
             [.\node[place]{$\bT^{1}_{\tt left}$};
                    [.\node[place]{$\bT^{2}_{\tt left}$}; 
                           [.\node[place]{$\bT^{3}_{\tt left}$};]
                           [.\node[place]{$\bT^{3}_{\tt right}$};] 
                    ]       
                    [.\node[place]{$\bT^{2}_{\tt right}$}; 
                           [.\node[place]{};] 
                           [.\node[place]{};] 
                    ] 
             ]                                        
             [.\node[place]{$\bT^{1}_{\tt right}$};
                          [.\node[place]{$\,$};
                                  [.\node[place]{};] 
                                  [.\node[place]{};] 
                           ]
                           [.\node[place]{}; 
                                  [.\node[place]{};] 
                                  [.\node[place]{};] 
                           ] 
                                          ]
]


\end{scope}
\end{tikzpicture}
\end{center}
\caption{Binary tree example from the simplices in $\mcS$  used
to construct the domain $U$.  Algorithm \ref{maketree} is used to decide how the
barycenters in $\tilde \bS$ ($B^k_l$) are split. The tree structure $\bT$
is built from Algorithm \ref{chooserule} and \ref{maketree} and is constructed recursively until
there are at most $n_0 - 1$ barycenters left in the cell $B^l_k$. Once
all the leaves are reached the algorithm stops. Note that it is possible that
not all of the leaves are at the same level. This depends on the location of the barycenters.} 
\label{MLRLE:fig3}
\end{figure}

\subsection{Multilevel Basis Construction}

\emph{We first show how to construct the multilevel basis functions at the
finest level (or leaf) $n$.} Suppose that $B^n_k \in \mcB^n$, and
after reordering of the numbering of the barycenters suppose that
$\{\bx_1,\dots,\bx_s\}$ are the barycenters contained in $B^n_k$. For
each simplex $\tau_i$ with corresponding barycenter $\bx_i$ there
exist $\bchi^1_i, \dots,
\bchi^d_i$ as orthonormal functions.  Thus, we form the orthonormal set $\mcE^n_k 
:= \{ \bchi^1_1, \bchi^1_2, \dots, \bchi^1_{s_{n,k}}, \bchi^2_1, \bchi^2_2,$
$\dots, \bchi^2_{s_{n,k}},\dots,
\bchi^{d}_1, \bchi^{d}_2, \dots$  $\bchi^{d}_{s_{n,k}}\}$
where $s_{n,k} = s$. 
The objective is to form a linear combination of the elements in
$\mcE^n_k$; we will construct a multilevel grid with the desired
properties. To this end let
\begin{equation*}
\begin{split}
\bphi^{n,k} _{j}
&:= \sum_{i = 1}^{s_{n,k}} \sum_{h = 1}^{d} 
c^{n,k} _{i,h,j}
  \bchi^h_i, \hspace{2mm} j \in \{1, \dots, a_{n,k}\} \\
\bpsi^{n,k}_{j}
&:= \sum_{i = 1}^{s_{n,k}} \sum_{h = 1}^{d} 
d^{n,k}_{i,h,j} \bchi^h_i, \hspace{2mm}
j \in \{a_{n,k}+1, \dots, s_{n,k}\},
\end{split}
\end{equation*}
where the coefficients $c^{n,k}_{i,h,j}, d^{n,k}_{i,h,j},
a_{n,k} \in \mathbb{R}$ are still unknown. The goal is to construct
$\bpsi^{n-1,k}_j$ such that it is orthogonal to the subspace $\bV_0$
under the $L^{2}(U;\R^d)$ inner product, i.e. for $i = 1,\dots,M$ and
$j = a_{n,k}+1, \dots, s_{n,k}$,
\begin{equation}
\int_{U}
\phi_i(\bx)\T  \bpsi^{n,k}_{j}(\bx) \, \mbox{d} \bx 
= 0.
\label{hbconstruction:eqn1}
\end{equation}
From the eigenfunctions $\phi_1, \dots, \phi_M$ of the KL expansion
  and $\mcE^{n}_{k}$ we can form the \corb{matrix}
\[
\footnotesize
\bM^{n,k} := 
\begin{bmatrix}
 ( \phi_{1}(\bx),  \bchi^{1}_1(\bx))  & 
\dots & 
 ( \phi_{1}(\bx),  \bchi^{s_{n,k}}_1(\bx))  &
 \dots &
  ( \phi_{1}(\bx),  \bchi^{1}_d(\bx))  & 
\dots & 
 ( \phi_{1}(\bx),  \bchi^{s_{n,k}}_d(\bx))
\\
( \phi_{2}(\bx),  \bchi^{1}_1(\bx))  & 
\dots & 
 ( \phi_{2}(\bx),  \bchi^{s_{n,k}}_1(\bx))  &
 \dots &
  ( \phi_{2}(\bx),  \bchi^{1}_d(\bx))  & 
\dots & 
 ( \phi_{2}(\bx),  \bchi^{s_{n,k}}_d(\bx))
 \\
 \vdots & \vdots & \vdots & \vdots & \vdots & \vdots & \vdots
\\
( \phi_{M}(\bx),  \bchi^{1}_1(\bx))  & 
\dots & 
 ( \phi_{M}(\bx),  \bchi^{s_{n,k}}_1(\bx))  &
 \dots &
  ( \phi_{M}(\bx),  \bchi^{1}_d(\bx))  & 
\dots & 
 ( \phi_{M}(\bx),  \bchi^{s_{n,k}}_d(\bx))
\end{bmatrix},
\]
where $(\cdot,\cdot)$ is the standard $L^{2}(U;\R^d)$ inner
product. From the matrix $\bM^{n,k}$ the coefficients
$c^{n,k}_{i,h,j}, d^{n,k}_{i,h,j},$ $a_{n,k} \in \mathbb{R}$ can be
computed. To this end apply the Singular Value Decomposition (SVD) to
$\bM^{n,k}$
\begin{equation}
\bM^{n,k} = \bU \bD \bV \T,
\label{hbconstruction:eqn2}
\end{equation}
where $\bU \in \R^{ M \times M}$, $\bD \in \R^{M \times s_{n,k}d}$,
$\bV \in \R^{s_{n,k}d \times s_{n,k}d}$, and let $a_{n,k}$ be the rank of the matrix
$\bM^{n,k}$, i.e.  the number of non-zero singular values of the matrix
$\bD$. Our choices of coefficients $c^{n,k}_{i,h,j}$ and $d^{n,k}_{i,h,j}$
are now set to:
\begin{equation}
  \left[ \begin{array}{ccc|ccc}
      c^{n,k}_{1,1,1} & \dots &c^{n,k}_{1,1,a_{n,k}} & d^{n,k}_{1,1,a_{n,k}+1} & \dots &d^{n,k}_{1,1,s_{n,k}d} \\
      c^{n,k}_{2,1,1} & \dots &c^{n,k}_{2,1,a_{n,k}} & d^{n,k}_{2,1,a_{n,k}+1} & \dots &d^{n,k}_{2,1,s_{n,k}d} \\
      \vdots & \vdots & \vdots & \vdots & \vdots & \vdots   \\
      c^{n,k}_{s,d,1} & \dots & c^{n,k}_{s,d,a_{n,k}} & d^{n,k}_{s,d,a_{n,k}+1} & \dots &d^{n,k}_{s,d,s_{n,k}d}
    \end{array}
\right]
 := \bV.
 \label{hbconstruction:eqn3}
\end{equation}

\setlength{\tabcolsep}{0pt}
\tikzset{edge from parent/.style={draw,edge from parent
path={(\tikzparentnode.south)-- +(0,-8pt)-| (\tikzchildnode)}}
}
\begin{figure}[t]
\begin{center}
\begin{tabular}{c c c c} 

\begin{tikzpicture}[scale=.42]

\begin{scope}
[place/.style={circle,draw=blue!50,fill=blue!20,thick, inner sep=0pt,minimum size=1mm}]
\begin{scope}[xshift = 2cm, yshift = 3.5cm, rotate=45]
\draw (0,0) to (0.5,0.8660);
\draw (0,0) to (-0.5,0.8660);
\draw (0,0) to (-1,0);
\draw (0,0) to (-0.5,-0.8660);
\draw (0,0) to (0.5,-0.8660);
\draw (0.5,0.8660) to (-0.5,0.8660); 
\draw (-0.5,0.8660) to (-1,0);
\draw (-1,0) to (-0.5,-0.8660);
\draw (-0.5,-0.8660) to (0.5,-0.8660);

\node at (0,0.5774) [place] {};
\node at (-0.50,0.2887) [place] {};
\node at (-0.50,-0.2887) [place] {};
\node at (0,-0.5774) [place] {};
\end{scope}

\begin{scope}[xshift = 2.5cm, yshift = 1.25cm, rotate = 145]
\draw (0,0) to (0.5,0.8660);
\draw (0,0) to (-0.5,0.8660);
\draw (0,0) to (-1,0);
\draw (0,0) to (-0.5,-0.8660);
\draw (0,0) to (0.5,-0.8660);
\draw (0.5,0.8660) to (-0.5,0.8660); 
\draw (-0.5,0.8660) to (-1,0);
\draw (-1,0) to (-0.5,-0.8660);
\draw (-0.5,-0.8660) to (0.5,-0.8660);

\node at (0,0.5774) [place] {};
\node at (-0.50,0.2887) [place] {};
\node at (-0.50,-0.2887) [place] {};
\node at (0,-0.5774) [place] {};
\end{scope}

\begin{scope}[xshift = 2cm, yshift = 7cm]
\draw (0,0) to (0.5,0.8660);
\draw (0,0) to (-0.5,0.8660);
\draw (0,0) to (-1,0);
\draw (0,0) to (-0.5,-0.8660);
\draw (0,0) to (0.5,-0.8660);
\draw (0.5,0.8660) to (-0.5,0.8660); 
\draw (-0.5,0.8660) to (-1,0);
\draw (-1,0) to (-0.5,-0.8660);
\draw (-0.5,-0.8660) to (0.5,-0.8660);

\node at (0,0.5774) [place] {};
\node at (-0.50,0.2887) [place] {};
\node at (-0.50,-0.2887) [place] {};
\node at (0,-0.5774) [place] {};
\end{scope}

\begin{scope}[xshift = 3cm, yshift = 7cm, rotate= 0]
\draw (0,0) to (0.5,0.8660);
\draw (0,0) to (-0.5,0.8660);
\draw (0,0) to (-1,0);
\draw (0,0) to (-0.5,-0.8660);
\draw (0,0) to (0.5,-0.8660);
\draw (0.5,0.8660) to (-0.5,0.8660); 
\draw (-0.5,0.8660) to (-1,0);
\draw (-1,0) to (-0.5,-0.8660);
\draw (-0.5,-0.8660) to (0.5,-0.8660);

\node at (0,0.5774) [place] {};
\node at (-0.50,0.2887) [place] {};
\node at (-0.50,-0.2887) [place] {};
\node at (0,-0.5774) [place] {};
\end{scope}

\begin{scope}[xshift = 5.943cm, yshift = 6.73cm, rotate = 180]
\draw (0,0) to (0.5,0.8660);
\draw (0,0) to (-0.5,0.8660);
\draw (0,0) to (-1,0);
\draw (0,0) to (-0.5,-0.8660);
\draw (0,0) to (0.5,-0.8660);
\draw (0.5,0.8660) to (-0.5,0.8660); 
\draw (-0.5,0.8660) to (-1,0);
\draw (-1,0) to (-0.5,-0.8660);
\draw (-0.5,-0.8660) to (0.5,-0.8660);

\node at (0,0.5774) [place] {};
\node at (-0.50,0.2887) [place] {};
\node at (-0.50,-0.2887) [place] {};
\node at (0,-0.5774) [place] {};
\end{scope}

\begin{scope}[xshift = 5.95cm, yshift = 5.01cm]
\draw (0,0) to (0.5,0.8660);
\draw (0,0) to (-0.5,0.8660);
\draw (0,0) to (-1,0);
\draw (0,0) to (-0.5,-0.8660);
\draw (0,0) to (0.5,-0.8660);
\draw (0.5,0.8660) to (-0.5,0.8660); 
\draw (-0.5,0.8660) to (-1,0);
\draw (-1,0) to (-0.5,-0.8660);
\draw (-0.5,-0.8660) to (0.5,-0.8660);

\node at (0,0.5774) [place] {};
\node at (-0.50,0.2887) [place] {};
\node at (-0.50,-0.2887) [place] {};
\node at (0,-0.5774) [place] {};
\end{scope}

\begin{scope}[xshift = 6.95cm, yshift = 3.28cm, rotate = 180]
\draw (0,0) to (0.5,0.8660);
\draw (0,0) to (-0.5,0.8660);
\draw (0,0) to (-1,0);
\draw (0,0) to (-0.5,-0.8660);
\draw (0,0) to (0.5,-0.8660);
\draw (0.5,0.8660) to (-0.5,0.8660); 
\draw (-0.5,0.8660) to (-1,0);
\draw (-1,0) to (-0.5,-0.8660);
\draw (-0.5,-0.8660) to (0.5,-0.8660);

\node at (0,0.5774) [place] {};
\node at (-0.50,0.2887) [place] {};
\node at (-0.50,-0.2887) [place] {};
\node at (0,-0.5774) [place] {};
\end{scope}

\begin{scope}[xshift = 5.95cm, yshift = 3.28cm, rotate = 180]
\draw (0,0) to (0.5,0.8660);
\draw (0,0) to (-0.5,0.8660);
\draw (0,0) to (-1,0);
\draw (0,0) to (-0.5,-0.8660);
\draw (0,0) to (0.5,-0.8660);
\draw (0.5,0.8660) to (-0.5,0.8660); 
\draw (-0.5,0.8660) to (-1,0);
\draw (-1,0) to (-0.5,-0.8660);
\draw (-0.5,-0.8660) to (0.5,-0.8660);

\node at (0,0.5774) [place] {};
\node at (-0.50,0.2887) [place] {};
\node at (-0.50,-0.2887) [place] {};
\node at (0,-0.5774) [place] {};
\end{scope}

    
    \draw[step=8,gray,very thin] (0, 0) grid (8, 8);



    
  \end{scope}

\end{tikzpicture}  &
\begin{tikzpicture}[scale=.42]

\begin{scope}
[place/.style={circle,draw=blue!50,fill=blue!20,thick, inner sep=0pt,minimum size=1mm}]
\begin{scope}[xshift = 2cm, yshift = 3.5cm, rotate=45]
\draw (0,0) to (0.5,0.8660);
\draw (0,0) to (-0.5,0.8660);
\draw (0,0) to (-1,0);
\draw (0,0) to (-0.5,-0.8660);
\draw (0,0) to (0.5,-0.8660);
\draw (0.5,0.8660) to (-0.5,0.8660); 
\draw (-0.5,0.8660) to (-1,0);
\draw (-1,0) to (-0.5,-0.8660);
\draw (-0.5,-0.8660) to (0.5,-0.8660);

\node at (0,0.5774) [place] {};
\node at (-0.50,0.2887) [place] {};
\node at (-0.50,-0.2887) [place] {};
\node at (0,-0.5774) [place] {};
\end{scope}

\begin{scope}[xshift = 2.5cm, yshift = 1.25cm, rotate = 145]
\draw (0,0) to (0.5,0.8660);
\draw (0,0) to (-0.5,0.8660);
\draw (0,0) to (-1,0);
\draw (0,0) to (-0.5,-0.8660);
\draw (0,0) to (0.5,-0.8660);
\draw (0.5,0.8660) to (-0.5,0.8660); 
\draw (-0.5,0.8660) to (-1,0);
\draw (-1,0) to (-0.5,-0.8660);
\draw (-0.5,-0.8660) to (0.5,-0.8660);

\node at (0,0.5774) [place] {};
\node at (-0.50,0.2887) [place] {};
\node at (-0.50,-0.2887) [place] {};
\node at (0,-0.5774) [place] {};
\end{scope}

\begin{scope}[xshift = 2cm, yshift = 7cm]
\draw (0,0) to (0.5,0.8660);
\draw (0,0) to (-0.5,0.8660);
\draw (0,0) to (-1,0);
\draw (0,0) to (-0.5,-0.8660);
\draw (0,0) to (0.5,-0.8660);
\draw (0.5,0.8660) to (-0.5,0.8660); 
\draw (-0.5,0.8660) to (-1,0);
\draw (-1,0) to (-0.5,-0.8660);
\draw (-0.5,-0.8660) to (0.5,-0.8660);

\node at (0,0.5774) [place] {};
\node at (-0.50,0.2887) [place] {};
\node at (-0.50,-0.2887) [place] {};
\node at (0,-0.5774) [place] {};
\end{scope}

\begin{scope}[xshift = 3cm, yshift = 7cm, rotate= 0]
\draw (0,0) to (0.5,0.8660);
\draw (0,0) to (-0.5,0.8660);
\draw (0,0) to (-1,0);
\draw (0,0) to (-0.5,-0.8660);
\draw (0,0) to (0.5,-0.8660);
\draw (0.5,0.8660) to (-0.5,0.8660); 
\draw (-0.5,0.8660) to (-1,0);
\draw (-1,0) to (-0.5,-0.8660);
\draw (-0.5,-0.8660) to (0.5,-0.8660);

\node at (0,0.5774) [place] {};
\node at (-0.50,0.2887) [place] {};
\node at (-0.50,-0.2887) [place] {};
\node at (0,-0.5774) [place] {};
\end{scope}

\begin{scope}[xshift = 5.943cm, yshift = 6.73cm, rotate = 180]
\draw (0,0) to (0.5,0.8660);
\draw (0,0) to (-0.5,0.8660);
\draw (0,0) to (-1,0);
\draw (0,0) to (-0.5,-0.8660);
\draw (0,0) to (0.5,-0.8660);
\draw (0.5,0.8660) to (-0.5,0.8660); 
\draw (-0.5,0.8660) to (-1,0);
\draw (-1,0) to (-0.5,-0.8660);
\draw (-0.5,-0.8660) to (0.5,-0.8660);

\node at (0,0.5774) [place] {};
\node at (-0.50,0.2887) [place] {};
\node at (-0.50,-0.2887) [place] {};
\node at (0,-0.5774) [place] {};
\end{scope}

\begin{scope}[xshift = 5.95cm, yshift = 5.01cm]
\draw (0,0) to (0.5,0.8660);
\draw (0,0) to (-0.5,0.8660);
\draw (0,0) to (-1,0);
\draw (0,0) to (-0.5,-0.8660);
\draw (0,0) to (0.5,-0.8660);
\draw (0.5,0.8660) to (-0.5,0.8660); 
\draw (-0.5,0.8660) to (-1,0);
\draw (-1,0) to (-0.5,-0.8660);
\draw (-0.5,-0.8660) to (0.5,-0.8660);

\node at (0,0.5774) [place] {};
\node at (-0.50,0.2887) [place] {};
\node at (-0.50,-0.2887) [place] {};
\node at (0,-0.5774) [place] {};
\end{scope}

\begin{scope}[xshift = 6.95cm, yshift = 3.28cm, rotate = 180]
\draw (0,0) to (0.5,0.8660);
\draw (0,0) to (-0.5,0.8660);
\draw (0,0) to (-1,0);
\draw (0,0) to (-0.5,-0.8660);
\draw (0,0) to (0.5,-0.8660);
\draw (0.5,0.8660) to (-0.5,0.8660); 
\draw (-0.5,0.8660) to (-1,0);
\draw (-1,0) to (-0.5,-0.8660);
\draw (-0.5,-0.8660) to (0.5,-0.8660);

\node at (0,0.5774) [place] {};
\node at (-0.50,0.2887) [place] {};
\node at (-0.50,-0.2887) [place] {};
\node at (0,-0.5774) [place] {};
\end{scope}

\begin{scope}[xshift = 5.95cm, yshift = 3.28cm, rotate = 180]
\draw (0,0) to (0.5,0.8660);
\draw (0,0) to (-0.5,0.8660);
\draw (0,0) to (-1,0);
\draw (0,0) to (-0.5,-0.8660);
\draw (0,0) to (0.5,-0.8660);
\draw (0.5,0.8660) to (-0.5,0.8660); 
\draw (-0.5,0.8660) to (-1,0);
\draw (-1,0) to (-0.5,-0.8660);
\draw (-0.5,-0.8660) to (0.5,-0.8660);

\node at (0,0.5774) [place] {};
\node at (-0.50,0.2887) [place] {};
\node at (-0.50,-0.2887) [place] {};
\node at (0,-0.5774) [place] {};
\end{scope}

    
    \draw[step=8,gray,very thin] (0, 0) grid (8, 8);

    \draw (4.25,0) to (4.25,8);


    
  \end{scope}

\end{tikzpicture}  &
\begin{tikzpicture}[scale=.42]

\begin{scope}
[place/.style={circle,draw=blue!50,fill=blue!20,thick, inner sep=0pt,minimum size=1mm}]
\begin{scope}[xshift = 2cm, yshift = 3.5cm, rotate=45]
\draw (0,0) to (0.5,0.8660);
\draw (0,0) to (-0.5,0.8660);
\draw (0,0) to (-1,0);
\draw (0,0) to (-0.5,-0.8660);
\draw (0,0) to (0.5,-0.8660);
\draw (0.5,0.8660) to (-0.5,0.8660); 
\draw (-0.5,0.8660) to (-1,0);
\draw (-1,0) to (-0.5,-0.8660);
\draw (-0.5,-0.8660) to (0.5,-0.8660);

\node at (0,0.5774) [place] {};
\node at (-0.50,0.2887) [place] {};
\node at (-0.50,-0.2887) [place] {};
\node at (0,-0.5774) [place] {};
\end{scope}

\begin{scope}[xshift = 2.5cm, yshift = 1.25cm, rotate = 145]
\draw (0,0) to (0.5,0.8660);
\draw (0,0) to (-0.5,0.8660);
\draw (0,0) to (-1,0);
\draw (0,0) to (-0.5,-0.8660);
\draw (0,0) to (0.5,-0.8660);
\draw (0.5,0.8660) to (-0.5,0.8660); 
\draw (-0.5,0.8660) to (-1,0);
\draw (-1,0) to (-0.5,-0.8660);
\draw (-0.5,-0.8660) to (0.5,-0.8660);

\node at (0,0.5774) [place] {};
\node at (-0.50,0.2887) [place] {};
\node at (-0.50,-0.2887) [place] {};
\node at (0,-0.5774) [place] {};
\end{scope}

\begin{scope}[xshift = 2cm, yshift = 7cm]
\draw (0,0) to (0.5,0.8660);
\draw (0,0) to (-0.5,0.8660);
\draw (0,0) to (-1,0);
\draw (0,0) to (-0.5,-0.8660);
\draw (0,0) to (0.5,-0.8660);
\draw (0.5,0.8660) to (-0.5,0.8660); 
\draw (-0.5,0.8660) to (-1,0);
\draw (-1,0) to (-0.5,-0.8660);
\draw (-0.5,-0.8660) to (0.5,-0.8660);

\node at (0,0.5774) [place] {};
\node at (-0.50,0.2887) [place] {};
\node at (-0.50,-0.2887) [place] {};
\node at (0,-0.5774) [place] {};
\end{scope}

\begin{scope}[xshift = 3cm, yshift = 7cm, rotate= 0]
\draw (0,0) to (0.5,0.8660);
\draw (0,0) to (-0.5,0.8660);
\draw (0,0) to (-1,0);
\draw (0,0) to (-0.5,-0.8660);
\draw (0,0) to (0.5,-0.8660);
\draw (0.5,0.8660) to (-0.5,0.8660); 
\draw (-0.5,0.8660) to (-1,0);
\draw (-1,0) to (-0.5,-0.8660);
\draw (-0.5,-0.8660) to (0.5,-0.8660);

\node at (0,0.5774) [place] {};
\node at (-0.50,0.2887) [place] {};
\node at (-0.50,-0.2887) [place] {};
\node at (0,-0.5774) [place] {};
\end{scope}

\begin{scope}[xshift = 5.943cm, yshift = 6.73cm, rotate = 180]
\draw (0,0) to (0.5,0.8660);
\draw (0,0) to (-0.5,0.8660);
\draw (0,0) to (-1,0);
\draw (0,0) to (-0.5,-0.8660);
\draw (0,0) to (0.5,-0.8660);
\draw (0.5,0.8660) to (-0.5,0.8660); 
\draw (-0.5,0.8660) to (-1,0);
\draw (-1,0) to (-0.5,-0.8660);
\draw (-0.5,-0.8660) to (0.5,-0.8660);

\node at (0,0.5774) [place] {};
\node at (-0.50,0.2887) [place] {};
\node at (-0.50,-0.2887) [place] {};
\node at (0,-0.5774) [place] {};
\end{scope}

\begin{scope}[xshift = 5.95cm, yshift = 5.01cm]
\draw (0,0) to (0.5,0.8660);
\draw (0,0) to (-0.5,0.8660);
\draw (0,0) to (-1,0);
\draw (0,0) to (-0.5,-0.8660);
\draw (0,0) to (0.5,-0.8660);
\draw (0.5,0.8660) to (-0.5,0.8660); 
\draw (-0.5,0.8660) to (-1,0);
\draw (-1,0) to (-0.5,-0.8660);
\draw (-0.5,-0.8660) to (0.5,-0.8660);

\node at (0,0.5774) [place] {};
\node at (-0.50,0.2887) [place] {};
\node at (-0.50,-0.2887) [place] {};
\node at (0,-0.5774) [place] {};
\end{scope}

\begin{scope}[xshift = 6.95cm, yshift = 3.28cm, rotate = 180]
\draw (0,0) to (0.5,0.8660);
\draw (0,0) to (-0.5,0.8660);
\draw (0,0) to (-1,0);
\draw (0,0) to (-0.5,-0.8660);
\draw (0,0) to (0.5,-0.8660);
\draw (0.5,0.8660) to (-0.5,0.8660); 
\draw (-0.5,0.8660) to (-1,0);
\draw (-1,0) to (-0.5,-0.8660);
\draw (-0.5,-0.8660) to (0.5,-0.8660);

\node at (0,0.5774) [place] {};
\node at (-0.50,0.2887) [place] {};
\node at (-0.50,-0.2887) [place] {};
\node at (0,-0.5774) [place] {};
\end{scope}

\begin{scope}[xshift = 5.95cm, yshift = 3.28cm, rotate = 180]
\draw (0,0) to (0.5,0.8660);
\draw (0,0) to (-0.5,0.8660);
\draw (0,0) to (-1,0);
\draw (0,0) to (-0.5,-0.8660);
\draw (0,0) to (0.5,-0.8660);
\draw (0.5,0.8660) to (-0.5,0.8660); 
\draw (-0.5,0.8660) to (-1,0);
\draw (-1,0) to (-0.5,-0.8660);
\draw (-0.5,-0.8660) to (0.5,-0.8660);

\node at (0,0.5774) [place] {};
\node at (-0.50,0.2887) [place] {};
\node at (-0.50,-0.2887) [place] {};
\node at (0,-0.5774) [place] {};
\end{scope}

    
    \draw[step=8,gray,very thin] (0, 0) grid (8, 8);

    \draw[gray,dashed] (4.25,0) to (4.25,8);

    \draw (0,5) to (4.25,5);

    \draw (4.25,4.15) to (8,4.15);
    
  \end{scope}

\end{tikzpicture}  &
\begin{tikzpicture}[scale=.42]

\begin{scope}
[place/.style={circle,draw=blue!50,fill=blue!20,thick, inner sep=0pt,minimum size=1mm}]
\begin{scope}[xshift = 2cm, yshift = 3.5cm, rotate=45]
\draw (0,0) to (0.5,0.8660);
\draw (0,0) to (-0.5,0.8660);
\draw (0,0) to (-1,0);
\draw (0,0) to (-0.5,-0.8660);
\draw (0,0) to (0.5,-0.8660);
\draw (0.5,0.8660) to (-0.5,0.8660); 
\draw (-0.5,0.8660) to (-1,0);
\draw (-1,0) to (-0.5,-0.8660);
\draw (-0.5,-0.8660) to (0.5,-0.8660);

\node at (0,0.5774) [place] {};
\node at (-0.50,0.2887) [place] {};
\node at (-0.50,-0.2887) [place] {};
\node at (0,-0.5774) [place] {};
\end{scope}

\begin{scope}[xshift = 2.5cm, yshift = 1.25cm, rotate = 145]
\draw (0,0) to (0.5,0.8660);
\draw (0,0) to (-0.5,0.8660);
\draw (0,0) to (-1,0);
\draw (0,0) to (-0.5,-0.8660);
\draw (0,0) to (0.5,-0.8660);
\draw (0.5,0.8660) to (-0.5,0.8660); 
\draw (-0.5,0.8660) to (-1,0);
\draw (-1,0) to (-0.5,-0.8660);
\draw (-0.5,-0.8660) to (0.5,-0.8660);

\node at (0,0.5774) [place] {};
\node at (-0.50,0.2887) [place] {};
\node at (-0.50,-0.2887) [place] {};
\node at (0,-0.5774) [place] {};
\end{scope}

\begin{scope}[xshift = 2cm, yshift = 7cm]
\draw (0,0) to (0.5,0.8660);
\draw (0,0) to (-0.5,0.8660);
\draw (0,0) to (-1,0);
\draw (0,0) to (-0.5,-0.8660);
\draw (0,0) to (0.5,-0.8660);
\draw (0.5,0.8660) to (-0.5,0.8660); 
\draw (-0.5,0.8660) to (-1,0);
\draw (-1,0) to (-0.5,-0.8660);
\draw (-0.5,-0.8660) to (0.5,-0.8660);

\node at (0,0.5774) [place] {};
\node at (-0.50,0.2887) [place] {};
\node at (-0.50,-0.2887) [place] {};
\node at (0,-0.5774) [place] {};
\end{scope}

\begin{scope}[xshift = 3cm, yshift = 7cm, rotate= 0]
\draw (0,0) to (0.5,0.8660);
\draw (0,0) to (-0.5,0.8660);
\draw (0,0) to (-1,0);
\draw (0,0) to (-0.5,-0.8660);
\draw (0,0) to (0.5,-0.8660);
\draw (0.5,0.8660) to (-0.5,0.8660); 
\draw (-0.5,0.8660) to (-1,0);
\draw (-1,0) to (-0.5,-0.8660);
\draw (-0.5,-0.8660) to (0.5,-0.8660);

\node at (0,0.5774) [place] {};
\node at (-0.50,0.2887) [place] {};
\node at (-0.50,-0.2887) [place] {};
\node at (0,-0.5774) [place] {};
\end{scope}

\begin{scope}[xshift = 5.943cm, yshift = 6.73cm, rotate = 180]
\draw (0,0) to (0.5,0.8660);
\draw (0,0) to (-0.5,0.8660);
\draw (0,0) to (-1,0);
\draw (0,0) to (-0.5,-0.8660);
\draw (0,0) to (0.5,-0.8660);
\draw (0.5,0.8660) to (-0.5,0.8660); 
\draw (-0.5,0.8660) to (-1,0);
\draw (-1,0) to (-0.5,-0.8660);
\draw (-0.5,-0.8660) to (0.5,-0.8660);

\node at (0,0.5774) [place] {};
\node at (-0.50,0.2887) [place] {};
\node at (-0.50,-0.2887) [place] {};
\node at (0,-0.5774) [place] {};
\end{scope}

\begin{scope}[xshift = 5.95cm, yshift = 5.01cm]
\draw (0,0) to (0.5,0.8660);
\draw (0,0) to (-0.5,0.8660);
\draw (0,0) to (-1,0);
\draw (0,0) to (-0.5,-0.8660);
\draw (0,0) to (0.5,-0.8660);
\draw (0.5,0.8660) to (-0.5,0.8660); 
\draw (-0.5,0.8660) to (-1,0);
\draw (-1,0) to (-0.5,-0.8660);
\draw (-0.5,-0.8660) to (0.5,-0.8660);

\node at (0,0.5774) [place] {};
\node at (-0.50,0.2887) [place] {};
\node at (-0.50,-0.2887) [place] {};
\node at (0,-0.5774) [place] {};
\end{scope}

\begin{scope}[xshift = 6.95cm, yshift = 3.28cm, rotate = 180]
\draw (0,0) to (0.5,0.8660);
\draw (0,0) to (-0.5,0.8660);
\draw (0,0) to (-1,0);
\draw (0,0) to (-0.5,-0.8660);
\draw (0,0) to (0.5,-0.8660);
\draw (0.5,0.8660) to (-0.5,0.8660); 
\draw (-0.5,0.8660) to (-1,0);
\draw (-1,0) to (-0.5,-0.8660);
\draw (-0.5,-0.8660) to (0.5,-0.8660);

\node at (0,0.5774) [place] {};
\node at (-0.50,0.2887) [place] {};
\node at (-0.50,-0.2887) [place] {};
\node at (0,-0.5774) [place] {};
\end{scope}

\begin{scope}[xshift = 5.95cm, yshift = 3.28cm, rotate = 180]
\draw (0,0) to (0.5,0.8660);
\draw (0,0) to (-0.5,0.8660);
\draw (0,0) to (-1,0);
\draw (0,0) to (-0.5,-0.8660);
\draw (0,0) to (0.5,-0.8660);
\draw (0.5,0.8660) to (-0.5,0.8660); 
\draw (-0.5,0.8660) to (-1,0);
\draw (-1,0) to (-0.5,-0.8660);
\draw (-0.5,-0.8660) to (0.5,-0.8660);

\node at (0,0.5774) [place] {};
\node at (-0.50,0.2887) [place] {};
\node at (-0.50,-0.2887) [place] {};
\node at (0,-0.5774) [place] {};
\end{scope}

    
    \draw[step=8,gray,very thin] (0, 0) grid (8, 8);

    \draw[gray,dashed] (4.25,0) to (4.25,8);

    \draw[gray,dashed] (0,5) to (4.25,5);
    \draw (2.2,5) to (2.2,8);
    \draw (0,2.4) to (4.25,2.4);


    \draw[gray,dashed] (4.25,4.15) to (8,4.15);
    \draw (4.25,5.87) to (8,5.87);
    \draw (6.75,0) to (6.75,4.15);
    
  \end{scope}

\end{tikzpicture} 
\\ 
\imagetop{\begin{tikzpicture}[scale=0.6]
\begin{scope}[xshift=5cm, yshift=5cm,
place/.style={circle,draw=blue!50,fill=blue!20,thick,
      inner sep=0pt,minimum size=1.5mm},
placer/.style={circle,draw=darkgreen!50,
  preaction={fill=olivine,opacity = 0.5}, thick,inner
  sep=0pt,minimum size=1.5mm}, ]


  
\Tree [.\node[place]{$B^{0}_{0}$}; 
]


\end{scope}
\end{tikzpicture}}  &
\imagetop{\begin{tikzpicture}[scale=0.6,
    grow                    = down,
    sibling distance        = 6em,
    level distance          = 1.2cm,
    edge from parent/.style = {draw, edge from parent path={(\tikzparentnode) -- (\tikzchildnode)}},
    ]
\begin{scope}[xshift=5cm, yshift=5cm,
place/.style={circle,draw=blue!50,fill=blue!20,thick,
      inner sep=0pt,minimum size=6mm},
placer/.style={circle,draw=darkgreen!50,
  preaction={fill=olivine,opacity = 0.5}, thick,inner
  sep=0pt,minimum size=1.5mm}, ]


  
\Tree [.\node[place]{$B^{0}_{0}$}; 
             [.\node[placer]{$B^{1}_{1}$};
             ]                                        
             [.\node[placer]{$B^{1}_{2}$};
             ]      
]


\end{scope}
\end{tikzpicture}}  &
\imagetop{\begin{tikzpicture}[scale=0.6,
    grow                    = down,
    sibling distance        = 2em,
    level distance          = 1.2cm,
    edge from parent/.style = {draw, edge from parent path={(\tikzparentnode) -- (\tikzchildnode)}}
    ]
  
\begin{scope}[xshift=5cm, yshift=5cm,
place/.style={circle,draw=blue!50,fill=blue!20,thick,
      inner sep=0pt,minimum size=1.5mm},
placer/.style={circle,draw=darkgreen!50,
  preaction={fill=olivine,opacity = 0.5}, thick,inner
  sep=0pt,minimum size=1.5mm}, ]


  
\Tree [.\node[place]{$B^{0}_{0}$}; 
             [.\node[place]{$B^{1}_{1}$};
                    [.\node[placer]{$B^{2}_{3}$}; 
                    ]       
                    [.\node[placer]{$B^{2}_{4}$}; 
                    ] 
             ]                                        
             [.\node[place]{$B^{1}_{2}$};
                          [.\node[placer]{$B^{2}_{5}$};
                           ]
                           [.\node[placer]{$B^{2}_{6}$}; 
                           ] 
                                          ]
]


\end{scope}
\end{tikzpicture}}  &
\imagetop{\begin{tikzpicture}[scale=0.6,
   grow                    = down,
    sibling distance        = 0.4em,
    level distance          = 1.2cm,
    edge from parent/.style = {draw, edge from parent path={(\tikzparentnode) -- (\tikzchildnode)}},
]
\begin{scope}[xshift=5cm, yshift=5cm,
place/.style={circle,draw=blue!50,fill=blue!20,thick,
      inner sep=0pt,minimum size=1.5mm},
placer/.style={circle,draw=darkgreen!50,
  preaction={fill=olivine,opacity = 0.5}, thick,inner
  sep=0pt,minimum size=1.5mm}, ]


  
\Tree [.\node[place]{$B^{0}_{0}$}; 
             [.\node[place]{$B^{1}_{1}$};
                    [.\node[place]{$B^{2}_{3}$}; 
                           [.\node[placer]{$B^{3}_{7}$};]
                           [.\node[placer]{$B^{3}_{8}$};] 
                    ]       
                    [.\node[place]{$B^{2}_{4}$}; 
                           [.\node[placer]{$B^{3}_{9}$};] 
                           [.\node[placer]{$B^{3}_{10}$};] 
                    ] 
             ]                                        
             [.\node[place]{$B^{1}_{2}$};
                          [.\node[place]{$B^{2}_{5}$};
                                  [.\node[placer]{$B^{3}_{11}$};] 
                                  [.\node[placer]{$B^{3}_{12}$};] 
                           ]
                           [.\node[place]{$B^{2}_{6}$}; 
                                  [.\node[placer]{$B^{3}_{13}$};] 
                                  [.\node[placer]{$B^{3}_{14}$};] 
                           ] 
                                          ]
]


\end{scope}
\end{tikzpicture}}

\end{tabular}
\end{center}
\caption{Cartoon example of the construction of a kd-tree from the triangular simplices in
$\mcT$.}
\label{MLRLE:fig2}
\end{figure}

\begin{lemma}
Under this choice (equation \eqref{hbconstruction:eqn3}),
$\bpsi^{n,k}_{a_{n,k}+1}, \dots, \bpsi^{n,k}_{s}$ satisfy equation
\eqref{hbconstruction:eqn1}.
\end{lemma}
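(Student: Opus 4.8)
The plan is to recognize condition \eqref{hbconstruction:eqn1} as a homogeneous linear system and to read its solution space directly off the SVD \eqref{hbconstruction:eqn2}. Fix a leaf cell $B^n_k$ and abbreviate $s := s_{n,k}$, $a := a_{n,k}$, $\bM := \bM^{n,k}$. Order the columns of $\bM$ by the composite index $(i,h)$, $i = 1,\dots,s$, $h = 1,\dots,q$, in exactly the order used for the rows of the coefficient array in \eqref{hbconstruction:eqn3}, so that $\bM$ is an $M \times sq$ matrix whose $(m,(i,h))$ entry is the $L^2(U;\R^q)$ inner product $(\phi_m,\chi^h_i)$. By linearity of that inner product in its second slot, for any coefficient vector $\bc = (c_{i,h})_{(i,h)}$ and the associated function $g := \sum_{i,h} c_{i,h}\chi^h_i$ one has $(\bM\bc)_m = \sum_{i,h}(\phi_m,\chi^h_i)c_{i,h} = (\phi_m,g)_{L^2(U;\R^q)}$. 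Taking $g = \bpsi^{n,k}_j$ with coefficient vector $\bd_j := (d^{n,k}_{i,h,j})_{(i,h)}$, this shows that \eqref{hbconstruction:eqn1} holds for a given index $j$ if and only if $\bM\bd_j = \0$.

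Next I would identify $\bd_j$ with a right singular vector. By the assignment \eqref{hbconstruction:eqn3} the displayed coefficient array equals $\bV$, so $\bd_j$ is precisely the $j$-th column of $\bV$ for $j = a+1,\dots,sq$. Multiplying \eqref{hbconstruction:eqn2} on the right by $\bV$ and using $\bV\T\bV = \bI$ gives $\bM\bV = \bU\bD$, hence $\bM\bd_j$ is the $j$-th column of $\bU\bD$. Since $a$ is, by definition, the rank of $\bM$ — i.e. the number of nonzero singular values — and the singular values are arranged in nonincreasing order along the diagonal of $\bD$, every column of $\bD$ with index exceeding $a$ is the zero vector, and therefore so is every column of $\bU\bD$ with index exceeding $a$. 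Thus $\bM\bd_j = \0$ for all $j = a+1,\dots,sq$, which by the previous paragraph is exactly the orthogonality relation \eqref{hbconstruction:eqn1} for $\bpsi^{n,k}_{a+1},\dots,\bpsi^{n,k}_{sq}$.

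The argument is essentially the one-line fact that the trailing $sq-a$ columns of $\bV$ span $\ker\bM^{n,k}$; the only thing requiring care — and the only place a slip could occur — is the bookkeeping of the composite index $(i,h)$ used to label the columns of $\bM^{n,k}$ and the rows of the array in \eqref{hbconstruction:eqn3}, together with the observation that, $\bM^{n,k}$ genuinely having $s_{n,k}q$ columns, the factors in \eqref{hbconstruction:eqn2} have sizes $\bU\in\R^{M\times M}$, $\bD\in\R^{M\times s_{n,k}q}$, $\bV\in\R^{s_{n,k}q\times s_{n,k}q}$ (correcting the dimensions displayed just before the statement). Note that orthonormality of the $\chi^h_i$ plays no role here --- it is only needed afterward, when one verifies that the $\bphi^{n,k}_j$ and $\bpsi^{n,k}_j$ themselves form an orthonormal system.
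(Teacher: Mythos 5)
Your proof is correct and follows essentially the same route as the paper's: both reduce \eqref{hbconstruction:eqn1} to the statement that the trailing columns of $\bM^{n,k}\bV = \bU\bD$ vanish because $\bM^{n,k}$ has rank $a_{n,k}$, the paper phrasing this via the auxiliary matrix $\bN^{n,k}=\bM^{n,k}\bV$ where you phrase it column-by-column as $\bM\bd_j=\0$. Your side remark correcting the displayed dimensions to $\bD\in\R^{M\times s_{n,k}q}$ and $\bV\in\R^{s_{n,k}q\times s_{n,k}q}$ is also right and fixes a genuine typo in the text preceding the lemma.
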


\begin{proof}
Following the argument in \cite{Tausch2001, Castrillon2016a}, let

\begin{equation*}
\begin{split}
\bN^{n,k} := 
&\left[
\begin{matrix}
 ( \phi_{1},  \bphi^{n,k}_{1} )  & 
\dots & 
 ( \phi_{1},  \bphi^{n,k}_{a_{n,k}} )   \\
\vdots & \ddots & \vdots \\
 ( \phi_{M},  \bphi^{n,k}_{1} )  & 
\dots & 
( \phi_{M} , \bphi^{n,k}_{a_{n,k}} )   \\
\end{matrix} 
\right|
\left.
\begin{matrix}
 ( \phi_{1},  \bpsi^{n,k}_{a_{n,k} + 1} )  & 
\dots & 
 ( \phi_{1},  \bpsi^{n,k}_{s_{n,k}d} )   \\
\vdots & \ddots & \vdots \\
 ( \phi_{M},  \bpsi^{n,k}_{a_{n,k} + 1} )  & 
\dots & 
 ( \phi_{M} , \bpsi^{n,k}_{s_{n,k}d} )   \\
\end{matrix}
\right].
\end{split}
\end{equation*}
Thus from the choice of coefficients $c^{n,k}_{i,h,j}$ and
$d^{n,k}_{i,h,j}$ we have that
$\bN^{n,k} = \bM^{n,k} \bV$.
From equation \eqref{hbconstruction:eqn2} we conclude that
$\bN^{n,k} = \bM^{n,k} \bV = \bU \bD$.
Since $M^{n,k}$ is of rank $a_{n,k}$ we have 
$\bD = [ \bSigma \, |\, \0 ]$,
where $\bSigma \in \R^{M \times a_{n,k}}$ is a diagonal matrix with
the non-zero singular values of $\bM^{n,k}$ and $\0 \in \R^{M \times
(s_{n,k}d - a_{n,k}) }$ is the zero matrix.  Thus $\bU \bD =
[\bU \bSigma\,|\,\0]$ and $\bN^{n,k} = [\bU \bSigma\,|\,\0]$. It
follows that columns $a_{n,k}+1,\dots,s$ of $\bV$ form an
orthonormal basis of the nullspace of $\bM^{n,k}$ and therefore
$\bpsi^{n,k}_{a_{n,k}+1}, \dots, \bpsi^{n,k}_s$ satisfy
equation \eqref{hbconstruction:eqn1}.
\end{proof}

\begin{lemma}
Let $D_k^{n} :=\{\bpsi^{n,k}_{a_{n,k}+1}, \dots, 
\bpsi^{n,k}_{s} \}$ and  $C_k^{n}:=\{ \bphi^{n,k}_{1}, 
\dots, \bphi^{n,k}_{a_{n,k}}\}$.  Then $D_k^{n} \cup C^{n}_k$ form an orthonormal set.
\end{lemma}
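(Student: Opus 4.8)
The plan is to observe that every function in $C^n_k$ and $D^n_k$ is the image of a column of the orthogonal matrix $\bV$ from the SVD \eqref{hbconstruction:eqn2} under one fixed linear isometry, so that orthonormality of $D^n_k\cup C^n_k$ reduces to orthonormality of the columns of $\bV$, which is automatic.

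First I would record that $\mcE^q_n$ is orthonormal in $L^2(U;\R^q)$: by Assumption~\ref{mls:assum3} the global collection $\mcE=\{\{\bchi^j_i\}_{i=1}^N\}_{j=1}^q$ is orthonormal, and $\mcE^q_n$ is simply the sub-collection of those $\chi^h_i$ whose barycenter $\bx_i$ lies in $B^n_k$, so it is orthonormal and has $s_{n,k}q$ elements. Fixing the enumeration of the pairs $(i,h)$ that is used to order the columns of $\bM^{n,k}$, define $\mcJ:\R^{s_{n,k}q}\to L^2(U;\R^q)$ by $\mcJ(\bc)=\sum_{i=1}^{s_{n,k}}\sum_{h=1}^{q}c_{i,h}\,\chi^h_i$. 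Orthonormality of $\mcE^q_n$ gives $(\mcJ(\bc),\mcJ(\bc'))_{L^2(U;\R^q)}=\sum_{i,h}c_{i,h}c'_{i,h}$, i.e. $\mcJ$ is a linear isometry from $\R^{s_{n,k}q}$ (with its Euclidean inner product) onto $\spn\mcE^q_n$; in particular it carries orthonormal families to orthonormal families.

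Next I would match the two families to columns of $\bV$. Comparing the definitions of $\bphi^{n,k}_j$ and $\bpsi^{n,k}_j$ with the assignment \eqref{hbconstruction:eqn3}, the coefficient vector of $\bphi^{n,k}_j$ for $1\le j\le a_{n,k}$ is the $j$-th column of $\bV$, and the coefficient vector of $\bpsi^{n,k}_j$ for $a_{n,k}+1\le j\le s_{n,k}q$ is the $j$-th column of $\bV$; hence $C^n_k\cup D^n_k=\mcJ(\{\text{columns of }\bV\})$. Since $\bV$ is the right orthogonal factor of an SVD it satisfies $\bV\T\bV=\bI$, so its columns form an orthonormal basis of $\R^{s_{n,k}q}$. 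Applying the isometry $\mcJ$ preserves inner products, so the functions in $C^n_k\cup D^n_k$ are pairwise orthogonal with unit norm, which is the assertion.

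The only delicate point — and essentially the whole of the work — is bookkeeping: one must be sure that the ordering of the pairs $(i,h)$ used to flatten the doubly-indexed family $\{\chi^h_i\}$ into the columns of $\bM^{n,k}$ in \eqref{hbconstruction:eqn2} is the same ordering used in \eqref{hbconstruction:eqn3} when reading the coefficients $c^{n,k}_{i,h,j}$, $d^{n,k}_{i,h,j}$ off $\bV$. Once that identification is fixed nothing remains beyond $\bV\T\bV=\bI$. As a sanity check one may note that $a_{n,k}+(s_{n,k}q-a_{n,k})=s_{n,k}q$ equals the number of columns of $\bV$, so the correspondence is a bijection and $C^n_k\cup D^n_k$ has exactly $s_{n,k}q$ elements.
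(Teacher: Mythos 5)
Your proof is correct and is essentially the paper's own argument, just written out in full: the paper's proof simply cites the unitarity of $\bV$ together with the coefficient assignment \eqref{hbconstruction:eqn3}, which is exactly the isometry-plus-orthogonal-columns reduction you make explicit. The added detail (orthonormality of $\mcE^q_n$, the flattening isometry $\mcJ$, and the column count) is a faithful elaboration rather than a different route.
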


\begin{proof}
This follows from the fact that $\bV$ is a unitary matrix and from
the choice of coefficients from equation \eqref{hbconstruction:eqn3}.
\end{proof}

For every cell $B^n_{k} \in \mcB^n$ in the tree $\bT$ at level $n$, the
SVD orthogonalisation process is repeated. Let $\mcD^{n}$ be the
collection of orthonormal basis functions such that $\mcD^{n}
= \cup_{B^n_k \in \mcB^n} D_k^{n}$, and the multilevel space be defined
as $\bW_{n} := \mbox{\emph{span}}_{D_k^{n} \in \mcD^{n}}\{ D_{k}^{n}\}$
and $\bW_{n,k} := \mbox{\emph{span}}_{\bpsi^{n,k}_{g} \in D_k^{n} }
\{ \bpsi^{n,k}_{g} \}$ for any $B^n_k \in \mcB^n$.

\begin{algorithm}[htbp]
\caption{\textsc{LeafBasis}$(B_k^l, \{\varphi_i\}_{i=1}^M, \mcE^l_k )$}
\begin{algorithmic}[1]
\Require Leaf cell $B_k^l$, KL modes $\{\varphi_i\}_{i=1}^M$, $\mcE^l_k$ 
\Ensure Local sets $D_k^n$ (detail/nullspace basis, contributes to $\bW_n$) and $C_k^n$ (coarse part, passed upward)
\State Gather indices of simplices/barycenters in $B_k^n$ 
\State Assemble $\bM^{n,k}\in \R^{M\times (s_{n,k}d)}$ with entries $(\bM^{n,k})_{i,(h,r)}= \langle \varphi_i, \bchi_r^h \rangle$
\State Compute SVD: $\bM^{n,k}=UDV^\top$
\State $a_{n,k}\gets \mbox{rank} (\bM^{n,k})$
\State Form linear combinations of $\mcE^{d}_{n}$ using columns of $V$:
\State \hspace{0.75cm} $\{\phi^{n,k}_j\}_{j=1}^{a_{n,k}}$ and $\{\psi^{n,k}_j\}_{j=a_{n,k}+1}^{s_{n,k}d}$
\State $D_k^n \gets \{\psi^{n,k}_{a_{n,k}+1},\dots,\psi^{n,k}_{s_{n,k}d}\}$
\State $C_k^n \gets \{\phi^{n,k}_{1},\dots,\phi^{n,k}_{a_{n,k}}\}$
\State \Return $(D_k^n, C_k^n)$
\end{algorithmic}
\label{leafbasis}
\end{algorithm}

\begin{remark}
Note that a leaf is not necessarily at the highest level $n$. Depending on the distribution of the barycenters, a leaf can be located at a lower level $\ell \leq n$. In Algorithm \ref{leafbasis} the pseudocode for the construction of the multilevel basis is shown for any leaf at level $\ell \leq n$.
\end{remark}

\emph{Construction of non-leaf multilevel basis.}
Although $C_k^{n}$ forms an orthonormal set, these functions are not
in general orthogonal to $V_0$. However, it is clear that
$\cup_{B^n_k \in \mcB^n} C^{n}_k$ form an orthonormal set.  The next
step is to work up the tree.  For any two sibling cells denoted as
$B^{n}_{\tt{left}}$ and $B^{n}_{\tt{right}}$ and corresponding basis
functions $C^{n}_{\tt{left}}$ and $C^{n}_{\tt{right}}$ at level $n$,
let $\mcE^{n-1}_{k} := C^{n}_{\tt{left}} \cup C^{n}_{\tt{right}}$ for
some index $k$ and let $B^{n-1}_{k}$ be the corresponding cell at
level $\mcB^{n-1}$. The orthogonalisation process is repeated for the
functions in $\mcE^{n-1}_{k}$. Rewrite the elements in
$\mcE^{n-1}_{k}$ as
$\{\bchi^{n-1}_{1}, \dots, \bchi^{n-1}_{s_{n-1,k}} \}$ and form the set
of equations
\begin{equation*}
\begin{split}
\bphi^{n-1,k} _{j}
&:= \sum_{i = 1}^{s_{n-1,k}} 
c^{n-1,k} _{i,j}
  \bchi^{n-1}_i, j \in \{1, \dots, a_{n-1,k}\}; \\
\bpsi^{n-1,k}_{j}
&:= \sum_{i = 1}^{s_{n-1,k}} 
d^{n-1,k}_{i,j} \bchi^{n-1}_{i},
j \in \{a_{n-1,k}+1, \dots, s_{n-1,k}\}.
\end{split}
\end{equation*}
We can form the matrix
\[
\bM^{n-1,k} := 
\begin{bmatrix}
 ( \phi_{1}(\bx),  \bchi^{n-1}_1(\bx))  & 
\dots & 
 ( \phi_{1}(\bx),  \bchi^{n-1}_{s_{n-1,k}}(\bx))  
\\
( \phi_{2}(\bx),  \bchi^{n-1}_1(\bx))  & 
\dots & 
 ( \phi_{2}(\bx),  \bchi^{n-1}_{s_{n-1,k}}(\bx))  &
 \\
 \vdots & \vdots & \vdots
\\
( \phi_{M}(\bx),  \bchi^{n-1}_1(\bx))  & 
\dots & 
 ( \phi_{M}(\bx),  \bchi^{n-1}_{s_{n-1,k}}(\bx))  
 \end{bmatrix}
\]
and apply the SVD $\bM^{n-1,k} = \bU \bD \bV\T$. Suppose that
$a_{n,k}$ is the rank of the matrix $\bM^{n-1,k}$.  Then under the
choice
\[
  \left[ \begin{array}{ccc|ccc}
      c^{n-1,k}_{1,1} & \dots &c^{n-1,k}_{1,a_{n,k}} & d^{n-1,k}_{1,a_{n-1,k}+1} & \dots &d^{n-1,k}_{1,s_{n-1,k}} \\
      c^{n-1,k}_{2,1} & \dots &c^{n-1,k}_{2,a_{n,k}} & d^{n-1,k}_{2,a_{n-1,k}+1} & \dots &d^{n-1,k}_{2,s_{n-1,k}} \\
      \vdots & \vdots & \vdots & \vdots & \vdots & \vdots   \\
      c^{n-1,k}_{s_{n-1,k},1} & \dots &c^{n-1,k}_{s_{n-1,k},a_{n,k}} & d^{n-1,k}_{s_{n-1,k},a_{n-1,k}+1} & \dots &d^{n-1,k}_{s_{n-1,k},s_{n-1,k}}
    \end{array}
\right]
 := \bV,
\]
for $i = 1,\dots,M$ and $j = a_{n-1,k}+1, \dots, s_{n-1,k}$ we have
that $\int_{U}
\phi_i(\bx)\T  \bpsi^{n-1,k}_{j}(\bx) \, \mbox{d} \bx 
= 0$.

For every cell $B^{n-1}_{k} \in \mcB^n$ in the tree $\bT$ at level
$n-1$ the SVD orthogonalisation process is repeated. Let $\mcD^{n-1}$
be the collection of orthonormal basis functions such that $\mcD^{n-1}
= \cup_{B^{n-1}_k \in \mcB^{n-1}} D_k^{n-1}$ and the multilevel space
be defined as $\bW_{n-1} := \mbox{\emph{span}}_{D_k^{n-1} \in
  \mcD^{n-1}}\{ D_{k}^{n-1}\}$ and $\bW_{n-1,k} :=
\mbox{\emph{span}}_{\bpsi^{n-1,k}_g \in D_k^{n-1}}$ $\{
\bpsi^{n-1,k}_{g} \}$ for any $B^{n-1}_k \in \mcB^{n-1}$.

For any two sibling cells denoted as $B^{n-1}_{\tt{left}}$ and
$B^{n-1}_{\tt{right}}$ and corresponding basis functions
$C^{n-1}_{\tt{left}}$ and $C^{n-1}_{\tt{right}}$ at level $n$, let
$\mcE^{n-2}_{k} := C^{n-1}_{\tt{left}} \cup C^{n-1}_{\tt{right}}$ for
some index $k$ and let $B^{n-2}_{k}$ be the corresponding cell at
level $\mcB^{n-2}$. It is clear that $\mcE^{n-2}_{k}$ is an
orthonormal set.  The orthogonalisation process is repeated for all
the levels of the tree until the level $0$ is reached.

In Algorithm \ref{multilevelbasis} the process is described.  It is
not hard to show that this process will terminate in at most
$\mcO(nN)$ steps. Thus we have proved
\begin{theorem}~
Decompose $\bV_{n+1}$ as
$\bV_{n+1}
\rightarrow 
  \bV_0 \oplus
  \bW_{0} \oplus \dots \bW_{n}$
  and
\begin{enumerate}[(i)]
\item The complexity cost of the multi-level basis is bounded by $\mcO(nN)$.

\item The multi-level basis vectors of $\bV_0 \oplus \bW_{0} \oplus
\dots \bW_{n}$ form an orthonormal set.

\end{enumerate}  
\end{theorem}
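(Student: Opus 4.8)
The plan is to handle the two assertions separately, mirroring the scalar argument of \cite{Castrillon2022} with every $L^{2}(U)$ inner product replaced by the $L^{2}(U;\R^q)$ one; the two Lemmas above — together with the identical conclusions derived inline for the coarser levels — already supply the per-cell facts, so what remains is bookkeeping up the tree.

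For (i) I would bound the cost level by level. Building the $kd$-tree $T$ via Algorithms 1--2 scans the $N$ barycenters once per level over at most $n+1$ levels, for $\mcO(nN)$ work. At the finest level each leaf $B^{n}_{k}$ holds at most $n_{0}q$ functions $\chi^{h}_{i}$, so forming $\bM^{n,k}$, taking its SVD \eqref{hbconstruction:eqn2}, and assembling $\bphi^{n,k}_{j},\bpsi^{n,k}_{j}$ via \eqref{hbconstruction:eqn3} is $\mcO(1)$ per leaf once $M,q,n_{0}$ are treated as fixed, and there are $\mcO(N)$ leaves. At a coarser level $\ell<n$ the input $\mcE^{\ell}_{k}=C^{\ell+1}_{\tt left}\cup C^{\ell+1}_{\tt right}$ has at most $2M$ elements, so each cell is again $\mcO(1)$, and a binary tree on at most $N$ leaves has $\mcO(N)$ cells per level; summing over the $n+1$ levels yields the stated $\mcO(nN)$ bound, and since the recursion visits each cell exactly once it terminates.

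For (ii) and the direct-sum decomposition I would isolate three ingredients: \textbf{(a)} for every cell $B^{m}_{k}$ the set $C^{m}_{k}\cup D^{m}_{k}$ is orthonormal and spans $I^{m}_{k}:=\spn(\mcE^{m}_{k})$ — this is the second Lemma, since $\bV$ in the SVD is unitary; \textbf{(b)} every element of $D^{m}_{k}$ is orthogonal to $\bV_{0}=\spn\{\phi_{1},\dots,\phi_{M}\}$, because under the coordinate isometry $\R^{s_{m,k}}\to I^{m}_{k}$ fixed by the orthonormal set $\mcE^{m}_{k}$ the space $\spn(D^{m}_{k})$ is the image of the nullspace of $\bM^{m,k}$ (this is the first Lemma and its inline analogues at each level); and \textbf{(c)} a function built inside $B^{m}_{k}$ is supported on $\bigcup_{\bx_{i}\in B^{m}_{k}}\tau_{i}$ — proved by induction up the tree from $\mcE^{\ell}_{k}=C^{\ell+1}_{\tt left}\cup C^{\ell+1}_{\tt right}$ — so, since the simplices of $\mcS$ meet only along lower-dimensional faces and the cells at a fixed level partition $\bbS$, two functions attached to cells of which neither is an ancestor of the other are $L^{2}(U;\R^q)$-orthogonal. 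The crux is a telescoping identity: with $G^{\ell}_{k}:=\spn\{\chi^{h}_{i}:\bx_{i}\in B^{\ell}_{k}\}$, induction up the tree gives the orthogonal decomposition
\[
G^{\ell}_{k}=\spn(C^{\ell}_{k})\ \oplus \bigoplus_{B^{m}_{j}\preceq B^{\ell}_{k}}\spn(D^{m}_{j}),
\]
where $\preceq$ means ``is a descendant of, or equal to'', with base case $G^{n}_{k}=\spn(\mcE^{n}_{k})=\spn(C^{n}_{k})\oplus\spn(D^{n}_{k})$ and inductive step $G^{\ell}_{k}=G^{\ell+1}_{\tt left}\oplus G^{\ell+1}_{\tt right}$ together with $\spn(C^{\ell+1}_{\tt left})\oplus\spn(C^{\ell+1}_{\tt right})=\spn(\mcE^{\ell}_{k})=\spn(C^{\ell}_{k})\oplus\spn(D^{\ell}_{k})$ and the disjoint-support orthogonality of \textbf{(c)}.

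Evaluated at the root, where $G^{0}_{0}=\spn(\mcE)=\bV_{n+1}$, the identity reads $\bV_{n+1}=\spn(C^{0}_{0})\oplus\bW_{0}\oplus\cdots\oplus\bW_{n}$ orthogonally, each $\bW_{m}=\spn_{D^{m}_{k}}\{D^{m}_{k}\}$ splitting orthogonally over its cells by \textbf{(c)}. To identify $\spn(C^{0}_{0})$ with $\bV_{0}$: Assumption \ref{mls:assum3}(iv) gives $\phi_{i}\in\bV_{n+1}$, and by \textbf{(b)} $\phi_{i}\perp\bW_{m}$ for all $m\ge1$, so $\bV_{n+1}=I^{0}_{0}\oplus\bW_{1}\oplus\cdots\oplus\bW_{n}$ forces $\bV_{0}\subseteq I^{0}_{0}$; since $\spn(D^{0}_{0})$ is precisely the part of $I^{0}_{0}$ orthogonal to all $\phi_{i}$, its orthogonal complement in $I^{0}_{0}$ — which is $\spn(C^{0}_{0})$ by \textbf{(a)} — equals $\bV_{0}$. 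Hence $\bV_{n+1}=\bV_{0}\oplus\bW_{0}\oplus\cdots\oplus\bW_{n}$, and (ii) follows: orthonormality inside $\{\phi_{i}\}$ and inside each $\mcD^{m}$ is \textbf{(a)}; orthogonality between $\mcD^{m}$ and $\mcD^{m'}$ for $m\neq m'$ is read off the orthogonal decomposition of $G^{\ell}_{k}$ for a common ancestor cell together with \textbf{(c)}; and orthogonality of each $\mcD^{m}$ to $\{\phi_{i}\}$ is \textbf{(b)}. I expect \textbf{(c)} and the telescoping identity to be the main obstacle — propagating supports correctly through the up-sweep and checking that the many ``$D$'' blocks sitting at different levels of a single subtree really are mutually orthogonal — while everything else collapses to the two Lemmas and a short orthogonal-complement argument inside $I^{0}_{0}$.
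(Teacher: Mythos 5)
Your argument is correct and follows essentially the same route as the paper, which treats the theorem as an immediate consequence of the SVD-based up-sweep construction together with the two preceding lemmas (the paper simply writes ``Thus we have proved'' after describing the recursion). Your write-up merely makes explicit the bookkeeping the paper leaves implicit --- the $\mcO(1)$-per-cell cost accounting, the disjoint-support orthogonality across non-nested cells, the telescoping decomposition up the tree, and the identification of $\spn(C^{0}_{0})$ with $\bV_{0}$ --- so there is no substantive divergence.
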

These multilevel basis functions can now be used to detect the anomaly
$\bw(\bx,\omega)$ at the various levels of resolution.
\begin{algorithm}[htbp]
\caption{\textsc{MultilevelBasis}($\bT, \{\varphi_i\}_{i=1}^M, \mcE)$}
\begin{algorithmic}[1]
\Require Tree $\bT$ with levels $\ell=0,\dots,n$, KL modes $\{\varphi_i\}_{i=1}^M$, $\mcE$
\Ensure Bases for multilevel spaces $\bW_0,\dots,\bW_n$ (equivalently sets $D^0,\dots,D^n$)
\State Determine leaf level $n$ from $\bT$ (termination depth of \textsc{MakeTree})
\ForAll{leaf cells $B_k^n$ at level $n$}
    \State $(D_k^n, C_k^n)\gets$ \Call{LeafBasis}{$B_k^n, \{\varphi_i\}_{i=1}^M, \mcE$}
\EndFor
\State $D^n \gets \Union_{B_k^n} D_k^n$;\quad $\bW_n \gets \spanof(D^n)$
\For{$\ell=n-1$ \textbf{down to} $0$}
    \ForAll{cells $B_k^\ell$ with non-empty children $B_{\mathrm{left}}^{\ell+1}, B_{\mathrm{right}}^{\ell+1}$} 
        \State $\mcE_k^\ell \gets C_{\tt{left}}^{\ell+1}\cup C_{\tt{right}}^{\ell+1}$ 
        \State Assemble $\bM^{\ell,k}$ with entries $(\bM^{\ell,k})_{i,j}=\ip{\varphi_i}{\eta_j}$ for $\{\eta_j\}$ enumerating $\mcE_k^\ell$
        \State Compute SVD: $\bM^{\ell,k}=UDV^\top$ and set $a_{\ell,k}\gets \rank(\bM^{\ell,k})$
        \State Split $\mcE_k^\ell$ into:
        \State \hspace{0.75cm} $C_k^\ell$ (first $a_{\ell,k}$ combinations) and $D_k^\ell$ (remaining nullspace part)
    \EndFor
     \ForAll{cells $B_k^\ell$ without children (i.e. a leaf)}
     \State $(D_k^\ell, C_k^\ell)\gets$ \Call{LeafBasis}{$B_k^\ell, \{\varphi_i\}_{i=1}^M, \mcE^l_k$}
     \EndFor
    \State $D^\ell \gets \Union_{B_k^\ell} D_k^\ell$;\quad $\bW_\ell \gets \spanof(D^\ell)$ 
\EndFor
\State \Return $\{\bW_\ell\}_{\ell=0}^n$ (combinations$\{D^\ell\}_{\ell=0}^n$)
\end{algorithmic}
\label{multilevelbasis}
\end{algorithm}


\subsection{Multilevel Detection}

\begin{lemma}
Suppose that $\bv \in L^{2}_{\bbP}(\Omega;L^{2}(U;\R^d))$ with KL
expansion $\bv =
\sum_{i \in \bbN} \lambda^{\frac{1}{2}}_{i} \phi_i(\bx)$  $Y_{i}(\omega)$.
Then for all $l \in \bbN_0$, $B^l_{k} \in \mcB^l$ and for the associated
orthogonal projection coefficients $d^{l,k}_i(\omega)
= \int_U \bv \T \bpsi^{l,k}_i \,\mbox{\emph{d}} \bx$ we have that
$\eset{d^{l,k}_i} = 0$ and $\eset{(d^{l,k}_i)^2} \leq \sum_{j\geq
M+1} \lambda_{j}$.
\label{mls:lemma1}
\end{lemma}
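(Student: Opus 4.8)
The plan is to substitute the KL expansion into the definition of $d^{l,k}_p$ and then exploit two structural facts already established: by construction every detail function $\psi^{l,k}_p$ is orthogonal in $L^{2}(U;\R^q)$ to $\bV_0 = \spn\{\phi_1,\dots,\phi_M\}$ (equation~\eqref{hbconstruction:eqn1} at the finest level, together with the analogous identities verified at each coarser level of the tree during the upward sweep), and $\|\psi^{l,k}_p\|_{L^{2}(U;\R^q)} = 1$. Writing $c_j := (\phi_j,\psi^{l,k}_p)_{L^{2}(U;\R^q)}$, the orthogonality fact gives $c_1 = \dots = c_M = 0$.

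First I would show that, with convergence in $L^{2}_{\bbP}(\Omega)$,
\[
d^{l,k}_p(\omega) = \int_U \bv(\bx,\omega)\T \psi^{l,k}_p(\bx)\,\mbox{d}\bx = \sum_{j\in\bbN}\lambda_j^{\frac{1}{2}}\, c_j\, Y_j(\omega) = \sum_{j\ge M+1}\lambda_j^{\frac{1}{2}}\, c_j\, Y_j(\omega),
\]
using that the KL series converges in $L^{2}_{\bbP}(\Omega;L^{2}(U;\R^q))$ (the KL theorem above) and that the spatial functional $\bu\mapsto(\bu,\psi^{l,k}_p)_{L^{2}(U;\R^q)}$ is bounded, hence its value at the limit of the partial sums is the $L^{2}_{\bbP}(\Omega)$-limit of its values at those partial sums; convergence of the resulting series in $L^2_\bbP(\Omega)$ follows from $\sum_j \lambda_j c_j^2 \le \sum_j \lambda_j < \infty$ (trace class).

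For the mean, passing $\eset{\cdot}$ through the $L^{2}_{\bbP}(\Omega)$-convergent partial sums and using $\eset{Y_j}=0$ gives $\eset{d^{l,k}_p}=0$ (equivalently, $\eset{d^{l,k}_p}=\int_U\eset{\bv}\T\psi^{l,k}_p\,\mbox{d}\bx=0$ since $\eset{\bv}=\0$). For the second moment, let $S_N := \sum_{j=M+1}^{N}\lambda_j^{\frac{1}{2}}c_j Y_j$; since $S_N\to d^{l,k}_p$ in $L^{2}_{\bbP}(\Omega)$ and $\eset{Y_jY_{j'}}=\delta_{jj'}$,
\[
\eset{(d^{l,k}_p)^2} = \lim_{N\to\infty}\eset{S_N^2} = \lim_{N\to\infty}\sum_{j=M+1}^{N}\lambda_j c_j^2 = \sum_{j\ge M+1}\lambda_j c_j^2 .
\]
Finally, Bessel's inequality for the orthonormal system $\{\phi_j\}_{j\in\bbN}$ in $L^{2}(U;\R^q)$ gives $\sum_{j\in\bbN}c_j^2\le\|\psi^{l,k}_p\|_{L^{2}(U;\R^q)}^2=1$, so $c_j^2\le 1$ for every $j$, and therefore $\eset{(d^{l,k}_p)^2}\le\sum_{j\ge M+1}\lambda_j=\sum_{i\ge M+1}\lambda_i$, as claimed.

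The substantive inputs are just the $\bV_0$-orthogonality of the detail functions $\psi^{l,k}_p$ and Bessel's inequality; the only thing requiring care is the chain of interchanges among the spatial integral, the expectation, and the infinite sums, and these are all justified by $L^{2}$-convergence (of the KL expansion and of its images under bounded linear functionals) together with Cauchy--Schwarz control of the cross terms. Accordingly I do not expect a genuine obstacle here, only routine verification of these limit exchanges.
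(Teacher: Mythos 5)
Your proof is correct and is essentially the argument the paper intends: the paper's own ``proof'' is only a one-line deferral to the scalar-case argument in \cite{Castrillon2022}, and that argument is exactly what you have written out (expand $\bv$ in its KL series, use that $\psi^{l,k}_p$ is a unit vector orthogonal to $\bV_0$ so the first $M$ coefficients vanish, then apply orthonormality of the $Y_j$ and Bessel's inequality to bound the tail). Your care with the $L^2$-limit interchanges is appropriate and does not change the substance.
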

\begin{proof}
The proof is a simple extension of the argument
given in \cite{Castrillon2022}.
\end{proof}

As $M$ increases, not only is the approximation error of the KL expansion
reduced and dominated by the sum of eigenvalues, but the variance
of the coefficients $d^{l,k}_d$ for the corresponding cell $B^{l}_{k}$
is also controlled by the same quantity.  We shall use this property
to construct a reliable hypothesis test for detection of anomalous
signals in any of the cells $B^{l}_{k} \in \mcB^l$ for $l = 0,\dots,n$.

\begin{theorem}[Detection: Hypothesis Test]
  Suppose that $\bu(\bx,\omega) = \bv(\bx,\omega) + \bw(\bx,\omega)$  and 
\begin{equation*}
\begin{split}
H_0:\bu(\bx,\omega) = \bv(\bx,\omega) \hspace{1cm} H_A:\bu(\bx,\omega) \neq \bv(\bx,\omega).
\end{split}
\end{equation*}
Let $1 \geq \alpha \geq 0 $ be the significance level. If the
null hypothesis $H_0$ is true:
it follows that (a)
\[
\bbP(|d^{l,k}_p(\omega)| \geq \alpha^{-\frac{1}{2}}\sum_{i \geq M
+ 1}\lambda_i ) \leq \alpha.
\]
and (b) for any cell $B^{l}_{k} \in \mcB$ we have
\[
\bbP\left(\sum_{\bpsi^{l,k}_{p} \in D^{l}_{k}}  
\left(d^{l,k}_p(\omega)\right)^{2} \geq \alpha^{-1}\sum_{i \geq M
+ 1}\lambda_i \sum_{\bpsi^{l,k}_{p} \in D^{l}_{k}}
\left(
b^{l,k}_{i,p}
\right)^2
\right)
\leq \alpha, 
\]
where 
$b^{l,k}_{i,p} := \int_U 
\bphi_{i}^{\top} \bpsi^{l,k}_p\,\emph{d}\bx$.
\label{mls:detection}
\end{theorem}

\begin{proof}
(a) The result follows from Lemma \ref{mls:lemma1} and the Chebyshev
inequality. (b) Recall that $\bW_{l,k} := \mbox{\emph{span}}_{\bpsi^{l,k}_g \in D_k^{n}}\{ \bpsi^{l,k}_g \}$ and suppose $P^{l,k}:L^{2}(U;\R^d) \rightarrow \bW_{n,l}$ is an orthogonal projection, then
\[
\bv^{l,k}(\bx,\omega) :=
P^{l,k} \bv(\bx,\omega) = 
\sum_{\bpsi^{l,k}_{p} \in D^{l}_{k}}  
d^{l,k}_{p} \bpsi^{l,k}_{p} 
\]
where $\{d^{l,k}_{1}, d^{l,k}_{2}, \dots, d^{l,k}_{p}, \dots \}$ 
are the orthogonal projection coefficients. Alternatively,
\[
\bv^{l,k}(\bx,\omega) = P^{l,k} \bv(\bx,\omega) 
= \sum_{i=M+1}^{\infty} \lambda_{i}^{\frac{1}{2}} P^{l,k}\bphi_{i}(\bx)Y_i(\omega)
= \sum_{i=M+1}^{\infty} \lambda_{i}^{\frac{1}{2}} Y_i(\omega)
\sum_{\bpsi^{l,k}_{p} \in D^{l}_{k}} 
b^{l,k}_{i,p}
\bpsi^{l,k}_{p}
\]
and therefore
\[
\sum_{\bpsi^{l,k}_{p} \in D^{l}_{k}}  d^{l,k}_{p} \bpsi^{l,k}_{p} 
= 
\sum_{i=M+1}^{\infty} \lambda_{i}^{\frac{1}{2}} Y_i(\omega) 
\sum_{\bpsi^{l,k}_{p} \in D^{l}_{k}} 
b^{l,k}_{i,p} \bpsi^{l,k}_{p}
\]
It is not hard to show that (e.g, Parseval's Theorem)
\[
\sum_{\bpsi^{l,k}_{p} \in D^{l}_{k}} 
\left( d^{l,k}_{p} \right)^{2}
= 
\int_U
\left( 
\bv^{l,k}(\bx,\omega) 
\right)^{\top}
\bv^{l,k}(\bx,\omega) 
\,\mbox{d}\bx
\] 
and therefore $\eset{
\sum_{\bpsi^{l,k}_{p} \in D^{l}_{k}} 
\left( d^{l,k}_{p} \right)^{2}
} = $
\[
\eset{
\int_U 
\left(
\sum_{i=M+1}^{\infty} \lambda_{i}^{\frac{1}{2}} Y_i(\omega) 
\sum_{\bpsi^{l,k}_{p} \in D^{l}_{k}} 
b^{l,k}_{i,p} \bpsi^{l,k}_{p}
\right)^{\top}
\left(
\sum_{j=M+1}^{\infty} \lambda_{j}^{\frac{1}{2}} Y_j(\omega) 
\sum_{\bpsi^{l,k}_{p} \in D^{l}_{k}} 
b^{l,k}_{j,p} \bpsi^{l,k}_{p}
\right) \,\mbox{d}\bx
}.
\]
Since $Y_1, \dots,Y_M,\dots$ are all uncorrelated and have unit variance,
\[
\eset{
\sum_{\bpsi^{l,k}_{p} \in D^{l}_{k}} 
\left( d^{l,k}_{p} \right)^{2}
}
=
\sum_{i=M+1}^{\infty} \lambda_{i} 
\int_U \left( \sum_{\bpsi^{l,k}_{p} \in D^{l}_{k}} 
b^{l,k}_{i,p} \bpsi^{l,k}_{p}\right)^{\top}
\sum_{\bpsi^{l,k}_{g} \in D^{l}_{k}} 
b^{l,k}_{i,g} \bpsi^{l,k}_{g}
\,\mbox{d}\bx
\]
Furthermore, since all the functions in $D^{l}_{k}$ are orthonormal, we conclude that
\[
\eset{
\sum_{\bpsi^{l,k}_{p} \in D^{l}_{k}} 
\left( d^{l,k}_{p} \right)^{2}
}
=
\sum_{i=M+1}^{\infty} \lambda_{i} 
\sum_{\bpsi^{l,k}_{p} \in D^{l}_{k}}
\left(
b^{l,k}_{i,p}
\right)^2
\]
From Markov's inequality the result follows
\end{proof}

\begin{remark}
(\textbf{Important})
With this hypothesis test the coefficients $d^{l,k}_p$ can be
  used as detectors of anomalous signals in the cell $B^l_k$.  Here
  are the key features of this detector:
\begin{itemize}[noitemsep,topsep=0pt]
\item Decay of the eigenvalues with $M$ controlling the sharpness of the bound.
\item For validity of the hypothesis test only a good estimate of the
  covariance function is needed.
  \item \textbf{No assumptions on independence nor underlying
    distribution (e.g. Normal, Poisson, etc.) of the data}.
\end{itemize}
In Algorithm \ref{algorithm5} we describe how to use multilevel anomaly detection method with distribution free hypothesis test to classify anomalous cells.
\end{remark}

\begin{algorithm}[ht]
\caption{Multilevel anomaly detection}
\label{algorithm5}
\begin{algorithmic}[1]
\Require Training data $\{v^{(m)}\}_{m=1}^{N_{\mathrm{train}}}$, test field $u$, truncation level $M$, significance level $\alpha$, $n_0$
\Ensure Cellwise decisions on anomaly detection
\State Estimate the mean $\mu$ and covariance operator from $\{v^{(m)}\}_{m=1}^{N_{\mathrm{train}}}$
\State Compute the leading KL eigenpairs $\{(\lambda_i,\bphi_i)\}_{i=1}^M$
\State Build the kd-tree $\bT \gets \Call{MakeTree}{S,n_0}$ and multilevel detail spaces $\{D_k^\ell\} \gets \Call{MultilevelBasis}{\bT,\{\bphi_i\}_{i=1}^M,\mcE}$
\State Center the test field: $\widetilde u \gets u-\mu$
\For{each level $\ell$ and each cell $B_k^\ell$}
    \State Compute $d_p^{\ell,k}=\int_U \widetilde u(x)^\top \psi_p^{\ell,k}(x)\,dx$ for all $\psi_p^{\ell,k}\in D_k^\ell$
    \State Form $T_k^\ell=\sum_{\psi_p^{\ell,k}\in D_k^\ell}(d_p^{\ell,k})^2$
    \State Form $\tau_k^\ell(\alpha)=\alpha^{-1}\!\left(\sum_{i\ge M+1}\lambda_i\right)\!\left(\sum_{\psi_p^{\ell,k}\in D_k^\ell}\sum_{i\ge M+1}(b_{i,p}^{\ell,k})^2\right)$
    \Statex \hspace{\algorithmicindent} where $b_{i,p}^{\ell,k}=\int_U \bphi_i(x)^T \psi_p^{\ell,k}(x)\,dx$
    \If{$T_k^\ell \ge \tau_k^\ell(\alpha)$}
        \State declare $B_k^\ell$ anomalous
    \Else
        \State declare $B_k^\ell$ nominal
    \EndIf
\EndFor
\end{algorithmic}
\end{algorithm}

An alternative approach for detecting signals is to measure the size
of the anomaly with respect to a suitable norm.

\begin{theorem}
Suppose that $\bu(\bx,\omega) = \bv_{M}(\bx,\omega) + \bw(\bx,\omega)$
for some $\bw(\bx,\omega) \in$ \\ $L^{2}_{\bbP}(\Omega;L^{2}(U;\R^d))$,
where $\bw(\bx,\cdot) \in \bV_{0}^{\perp} \cap \bV_{n+1}$ almost
surely.  Then
\begin{equation*}
\begin{split}
&\sum_{l=0}^n
\sum_{\bpsi^{l,k}_{p} \in D^{l}_{k}} \sum_{D^{l}_{k} \in \mcD^l}
(d^{l,k}_{p})^{2} = 
\| \bw(\bx,\omega)\|^{2}_{L^{2}(U;\R^d)}
\,\,\,\mbox{(a.s.) and} \\
&\sum_{l=0}^n  
\sum_{\bpsi^{l,k}_{p} \in D^{l}_{k}} \sum_{D^{l}_{k} \in \mcD^l}
\eset{(d^{l,k}_{p})^{2}} = 
\| \bw \|^{2}_{ L^{2}_{\bbP}(\Omega;L^{2}(U;\R^d)) }.
\end{split}
\end{equation*}
\end{theorem}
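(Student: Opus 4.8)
The plan is to read off both identities from Parseval's identity, once it is recognized that the multilevel functions
\[
\Psi := \{\,\psi^{l,k}_{p} : 0 \le l \le n,\ D^{l}_{k} \in \mcD^{l},\ \psi^{l,k}_{p} \in D^{l}_{k}\,\}
\]
form a complete orthonormal basis of the finite-dimensional space $\bV_{0}^{\perp} \cap \bV_{n+1}$. First I would record this basis fact. By the decomposition theorem above, $\bV_{n+1} = \bV_{0} \oplus \bW_{0} \oplus \dots \oplus \bW_{n}$ and the union of $\{\phi_{1},\dots,\phi_{M}\}$ with $\bigcup_{l=0}^{n}\mcD^{l}$ is an orthonormal set; since $\bW_{l} = \spn_{D^{l}_{k}\in\mcD^{l}}\{D^{l}_{k}\}$, the set $\Psi$ is exactly the orthonormal system spanning $\bW_{0}\oplus\dots\oplus\bW_{n}$. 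Because $\bV_{0}\subseteq\bV_{n+1}$, the orthogonal complement of $\bV_{0}$ taken inside $\bV_{n+1}$ coincides with $\bV_{0}^{\perp}\cap\bV_{n+1}$, and the orthogonal direct-sum decomposition identifies this with $\bW_{0}\oplus\dots\oplus\bW_{n}$. Hence $\Psi$ is a complete orthonormal basis of $\bV_{0}^{\perp}\cap\bV_{n+1}$, and it is a \emph{finite} set since $\bV_{n+1}$ is finite-dimensional (it is spanned by the finite orthonormal family $\mcE$).

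Next I would fix $\omega$ in the full-measure event on which $\bw(\bx,\cdot)\in\bV_{0}^{\perp}\cap\bV_{n+1}$ and expand $\bw(\cdot,\omega)$ in the basis $\Psi$. The coefficient attached to $\psi^{l,k}_{p}$ is $\int_{U}\bw(\bx,\omega)\T\psi^{l,k}_{p}(\bx)\,\mbox{d}\bx$; since $\bv_{M}(\cdot,\omega)\in\bV_{0}=\spn\{\phi_{1},\dots,\phi_{M}\}$ and every $\psi^{l,k}_{p}$ is orthogonal to $\bV_{0}$ (equation \eqref{hbconstruction:eqn1}, propagated up the tree, and part (ii) of the decomposition theorem), this coefficient equals $\int_{U}\bu(\bx,\omega)\T\psi^{l,k}_{p}(\bx)\,\mbox{d}\bx = d^{l,k}_{p}(\omega)$. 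Parseval's identity in $\bV_{0}^{\perp}\cap\bV_{n+1}$ then gives $\|\bw(\bx,\omega)\|_{L^{2}(U;\R^{q})}^{2} = \sum_{l=0}^{n}\sum_{D^{l}_{k}\in\mcD^{l}}\sum_{\psi^{l,k}_{p}\in D^{l}_{k}}(d^{l,k}_{p}(\omega))^{2}$, which is the first (almost-sure) identity.

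For the second identity I would integrate the a.s.\ identity over $\Omega$ with respect to $\bbP$. The left-hand side becomes $\|\bw\|^{2}_{L^{2}_{\bbP}(\Omega;L^{2}(U;\R^{q}))}$ by the definition of the Bochner norm, and is finite because $\bw\in L^{2}_{\bbP}(\Omega;L^{2}(U;\R^{q}))$. On the right, the triple sum over $\Psi$ is finite, each $\omega\mapsto d^{l,k}_{p}(\omega)$ is measurable (a bounded deterministic linear functional applied to the strongly measurable $\bw$) and $\eset{(d^{l,k}_{p})^{2}}<\infty$ by Lemma \ref{mls:lemma1}, so exchanging the finite sum with the expectation is immediate and yields $\sum_{l,k,p}\eset{(d^{l,k}_{p})^{2}}$. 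I expect the only genuinely delicate point to be the completeness of $\Psi$ in $\bV_{0}^{\perp}\cap\bV_{n+1}$: absent the decomposition theorem one would have to check that at every node of the tree the SVD step replaces $\mcE^{l}_{k}$ by an orthonormal set of equal cardinality spanning the same subspace, so that no dimensions are lost while climbing the tree; granted that, the remaining content is merely Parseval together with the definition of the Bochner norm.
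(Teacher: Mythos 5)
Your proof is correct and follows essentially the same route as the paper, which simply states that the result follows from the orthogonality of the multilevel basis of $\bW_0 \oplus \dots \oplus \bW_n$; you have merely spelled out the details (completeness of the $\psi^{l,k}_p$ in $\bV_0^{\perp}\cap\bV_{n+1}$, the identification of the coefficients of $\bu$ with those of $\bw$ via orthogonality to $\bV_0$, Parseval, and integration over $\Omega$).
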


\begin{proof} The result follows from the orthogonality of the multilevel
basis of $\bW_0 \oplus \dots \bW_n$.
\end{proof} 

Thus under the model $\bu(\bx,\omega) = \bv_{M}(\bx,\omega)
+ \bw(\bx,\omega)$ the size of the anomaly for $\bw(\bx,\omega)$ can be
calculated from the projection coefficients of the basis functions
in $\bV_{0}^{\perp} \cap \bV_{n+1} = \bW_0 \oplus \dots \bW_n$.

In many cases the nominal behaviour of the signal cannot be captured
assuming a finite dimensional random field
$\bv_M(\bx,\omega)$. Suppose that $\bu(\bx,\omega) = \bv(\bx,\omega)
+ \bw(\bx,\omega)$, but $\bw(\bx,\cdot) \in
$$\bV_{0}^{\perp} \cap \bV_{n+1}$ almost surely. In this case the tail of
the KL expansion of $\bv(\bx,\omega)$ intersects with the
anomaly. However, the size of the anomaly can still be bounded.
\begin{theorem}
Let $t_{M} := \sum_{j\geq M +1} \lambda_j$, $s_{M} := \sum_{j\geq M +1} \sqrt{\lambda_j}$,
and suppose that
$\bu(\bx,\omega) = \bv(\bx,\omega) + \bw(\bx,\omega) $ for some
$\bw(\bx,\omega) \in L^{2}_{\bbP}(\Omega;L^{2}(U;\R^d))$, where
$\bw(\bx,\cdot) \in \bV_{0}^{\perp} \cap \bV_{n+1}$ almost surely.
Then
\begin{equation*}
\begin{split}
\| \bw \|^{2}_{ L^{2}_{\bbP}(\Omega;L^{2}(U;\R^d)) } ( 1 - 2
s_M
) 
+
t_M 
&\leq 
\sum_{l=0}^{n} 
\sum_{\bpsi^{l,k}_{p} \in D^{l}_{k}} 
\sum_{D^{l}_{k} \in \mcD^l}
\eset{(d^{l,k}_{p})^{2}} \\
&\leq 
\| \bw \|^{2}_{ L^{2}_{\bbP}(\Omega;L^{2}(U;\R^d)) }4
( 1 + 2
s_M
) 
+
t_M.
\end{split}
\end{equation*}
\label{mls:theo3}
\end{theorem}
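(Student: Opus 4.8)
The plan is to recognize the left-hand sum as a single squared Hilbert-space norm and then split the observation along the Karhunen--Lo\`eve expansion. By the multilevel decomposition theorem above, the collection $\mathcal{P}:=\{\psi^{l,k}_p:\psi^{l,k}_p\in D^l_k,\ D^l_k\in\mcD^l,\ 0\le l\le n\}$ is an orthonormal basis of $\bW_0\oplus\cdots\oplus\bW_n=\bV_0^\perp\cap\bV_{n+1}$; let $P$ denote the orthogonal projection of $L^2(U;\R^q)$ onto this space. Then $d^{l,k}_p(\omega)=(\bu(\cdot,\omega),\psi^{l,k}_p)_{L^2(U;\R^q)}$ are exactly the expansion coefficients of $P\bu(\cdot,\omega)$, so Parseval's identity gives $\sum_{l,k,p}(d^{l,k}_p(\omega))^2=\|P\bu(\cdot,\omega)\|_{L^2(U;\R^q)}^2$ pathwise, and, integrating over $\Omega$ (interchanging sum and integral as in Lemma \ref{mls:lemma1}), $\sum_{l,k,p}\eset{(d^{l,k}_p)^2}=\eset{\|P\bu\|_{L^2(U;\R^q)}^2}$.

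Next I would write $\bu=\bv_M+\bv_{>M}+\bw$, where $\bv_M=\sum_{j\le M}\lambda_j^{1/2}\phi_jY_j\in\bV_0$ and $\bv_{>M}=\sum_{j> M}\lambda_j^{1/2}\phi_jY_j$ is the KL tail. Since $\bv_M\in\bV_0$ it lies in the kernel of $P$, and since $\bw(\cdot,\omega)\in\bV_0^\perp\cap\bV_{n+1}$ almost surely we have $P\bw=\bw$; hence $P\bu=P\bv_{>M}+\bw$, and expanding the square, using $(P\bv_{>M},\bw)=(\bv_{>M},P\bw)=(\bv_{>M},\bw)$,
\[
\eset{\|P\bu\|^2_{L^2(U;\R^q)}}=\eset{\|P\bv_{>M}\|^2_{L^2(U;\R^q)}}+2\,\eset{(\bv_{>M},\bw)_{L^2(U;\R^q)}}+\|\bw\|^2_{L^2_\bbP(\Omega;L^2(U;\R^q))}.
\]
The last term is already the quantity of the statement; it is the identity of the preceding theorem, corresponding to the case $\bv=\bv_M$. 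For the first term, Bessel's inequality and orthonormality of $\{\phi_j\}$ give $\|P\bv_{>M}(\cdot,\omega)\|^2\le\|\bv_{>M}(\cdot,\omega)\|^2=\sum_{j>M}\lambda_jY_j(\omega)^2$, so $\eset{Y_jY_l}=\delta_{jl}$ yields $0\le\eset{\|P\bv_{>M}\|^2}\le\sum_{j>M}\lambda_j=t_M$ (equal to $t_M$ when the KL tail is resolved by $\bV_{n+1}$), which supplies the additive $t_M$ in the bounds.

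The crux is the cross term $C:=\eset{(\bv_{>M},\bw)_{L^2(U;\R^q)}}$. Here I would expand $(\bv_{>M},\bw)=\sum_{j>M}\lambda_j^{1/2}Y_j\,w_j$ with $w_j(\omega):=(\phi_j,\bw(\cdot,\omega))_{L^2(U;\R^q)}$, so $C=\sum_{j>M}\lambda_j^{1/2}\eset{Y_jw_j}$, and bound each summand by Cauchy--Schwarz in $L^2_\bbP(\Omega)$, $|\eset{Y_jw_j}|\le\eset{Y_j^2}^{1/2}\eset{w_j^2}^{1/2}=\eset{w_j^2}^{1/2}$; combining this with Bessel's inequality $\sum_{j>M}\eset{w_j^2}\le\|\bw\|^2_{L^2_\bbP(\Omega;L^2(U;\R^q))}$, the normalization $\|\phi_j\|_{L^2(U;\R^q)}=1$, and $\sum_{j>M}\lambda_j^{1/2}=s_M$, one controls $|2C|$ by the $2s_M$-correction term in the statement. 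Feeding the three estimates back into the displayed identity then produces both the lower and upper bounds. I expect the cross term to be the only genuine obstacle: it is where one must extract precisely the weight $s_M=\sum_{j>M}\sqrt{\lambda_j}$ (rather than a coarser $\sqrt{t_M}$-type factor) and carry out the sign bookkeeping for the two-sided estimate; the remaining ingredients are the orthogonality identities for the multilevel basis and the trace-class KL control already recorded in Lemma \ref{mls:lemma1}.
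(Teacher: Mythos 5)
Your skeleton is the right one, and it is essentially what the paper intends: the paper's own ``proof'' is a one-line deferral to the scalar Theorem~3 of \cite{Castrillon2022}, and the natural reconstruction is exactly your decomposition $\bu=\bv_M+\bv_{>M}+\bw$, Parseval over the multilevel basis of $\bV_0^\perp\cap\bV_{n+1}$, and Cauchy--Schwarz on the cross term. However, the final assembly does not close as written, for two concrete reasons. First, your cross-term estimate is homogeneous of degree one in $\bw$: from $|\eset{Y_jw_j}|\le\eset{w_j^2}^{1/2}\le\|\bw\|_{L^2_\bbP(\Omega;L^2(U;\R^q))}$ you get $|2C|\le 2s_M\,\|\bw\|_{L^2_\bbP(\Omega;L^2(U;\R^q))}$, whereas the statement requires the correction $2s_M\,\|\bw\|^2_{L^2_\bbP(\Omega;L^2(U;\R^q))}$. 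These agree only when $\|\bw\|_{L^2_\bbP}\ge 1$ (for the upper bound) and likewise for the lower bound; no such normalization is assumed, so the step ``one controls $|2C|$ by the $2s_M$-correction term in the statement'' is precisely the step that is not justified. You need either to prove the inequality with $2s_M\|\bw\|$ in place of $2s_M\|\bw\|^2$ (which is what your estimates actually deliver) or to supply an argument converting one into the other; as it stands there is a genuine mismatch of homogeneity that you pass over.

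Second, your lower bound needs $\eset{\|P\bv_{>M}\|^2}=t_M$, not merely $\le t_M$: you get
\[
\sum_{l,k,p}\eset{(d^{l,k}_p)^2}\;\ge\;\|\bw\|^2_{L^2_\bbP(\Omega;L^2(U;\R^q))}-2|C|+\eset{\|P\bv_{>M}\|^2},
\]
and the additive $+t_M$ in the claimed lower bound is only recovered if the KL tail $\sum_{j>M}\lambda_j^{1/2}\phi_jY_j$ lies in $\bV_{n+1}$ almost surely (it is automatically orthogonal to $\bV_0$, but Assumption \ref{mls:assum3} only places $\phi_1,\dots,\phi_M$ in $\mcP(\mcE)$). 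You flag this parenthetically but do not resolve it; without that hypothesis the stated lower bound does not follow from your chain of estimates. Both points are fixable (the second by adding the hypothesis $\bv(\cdot,\omega)\in\bV_{n+1}$ a.s., which holds in the discrete setting of the paper), but they need to be addressed explicitly rather than absorbed into the ``sign bookkeeping.''
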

\begin{proof}
The result is a simple extension of the proof of Theorem 3
in \cite{Castrillon2022}.
\end{proof}

\begin{remark} The implementation of the vector field anomaly detection, which includes the multilevel basis construction and hypothesis tests, can be downloaded from
\\ {\tt https://github.com/jcandas/Multimodal-Anomaly-Detection}.
\end{remark}

\section{Performance tests}
\label{section:performance}
In order to obtain a quantitative evaluation of our method, we design a test to mimic performance on noisy, 2-D imagery. The test consists of the attempted detection of a Gaussian anomaly in generated, synthetic images. We start by drawing samples from the stochastic process described in Example 1 of \cite{Castrillon2022}. To this end, we let
$$
v_m(x,\omega)=1+Y_1(\omega) \left(\frac{\sqrt{\pi}L}{2}\right)^{\frac{1}{2}} + \sum_{k=2}^m \lambda_k^{\frac{1}{2}} \phi_k(x)Y_k(\omega)
$$
be our stochastic process defined for $x\in[0,1]$ where
$$
\phi_k(x) := \begin{cases} 
      \sin{\frac{\lfloor \frac{k}{2}\pi x}{L_p}}, \text{ if $k$ is even} \\
      \cos{\frac{\lfloor \frac{k}{2}\pi x}{L_p}}, \text{ if $k$ is odd} \\
   \end{cases}\,\,\,\mbox{and}\,\,\, \sqrt{\lambda_k} := (\sqrt{\pi}L)^{\frac{1}{2}}\exp \left( -\frac{(\lfloor \frac{k}{2} \rfloor \pi L)^2}{8} \right).
$$
For our test, we set $m=10$, $L=L_p=0.25$, and take $Y_1,...,Y_m$ to be i.i.d. $U(-\sqrt{3}, \sqrt{3})$ distributed random variables. To generate a realization of $v_m(x,\omega)$, we draw samples of $Y_1,...,Y_m$ according to the aforementioned distribution. This produces a function, $v_m(x,\omega)$, within the domain [0,1].

\begin{figure}[t]
\centering
\includegraphics[scale = 0.4]{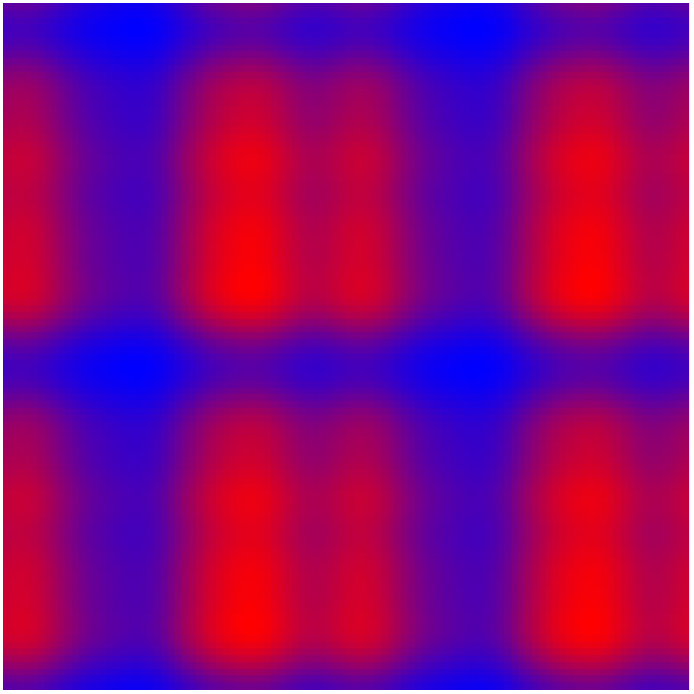}
\includegraphics[scale = 0.4]{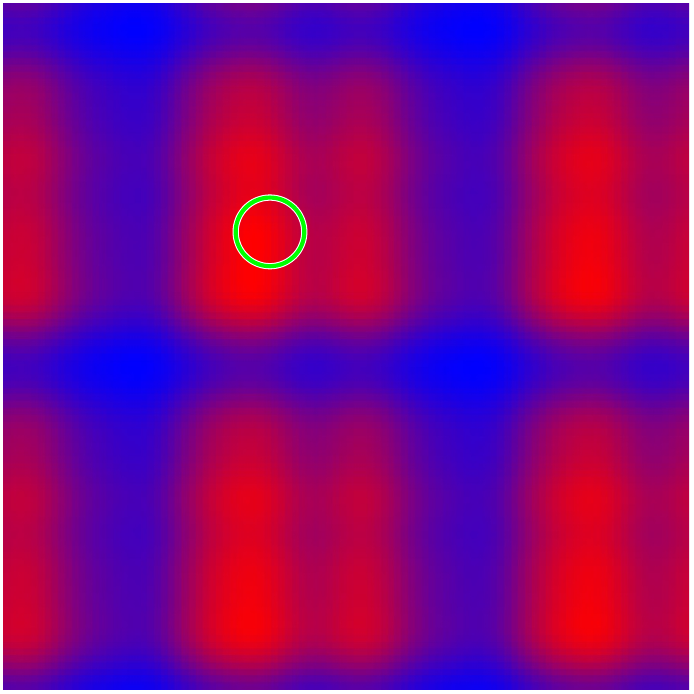}
\includegraphics[scale = 0.4, trim = 2.25cm 1.2cm 2.25cm 1cm, clip]{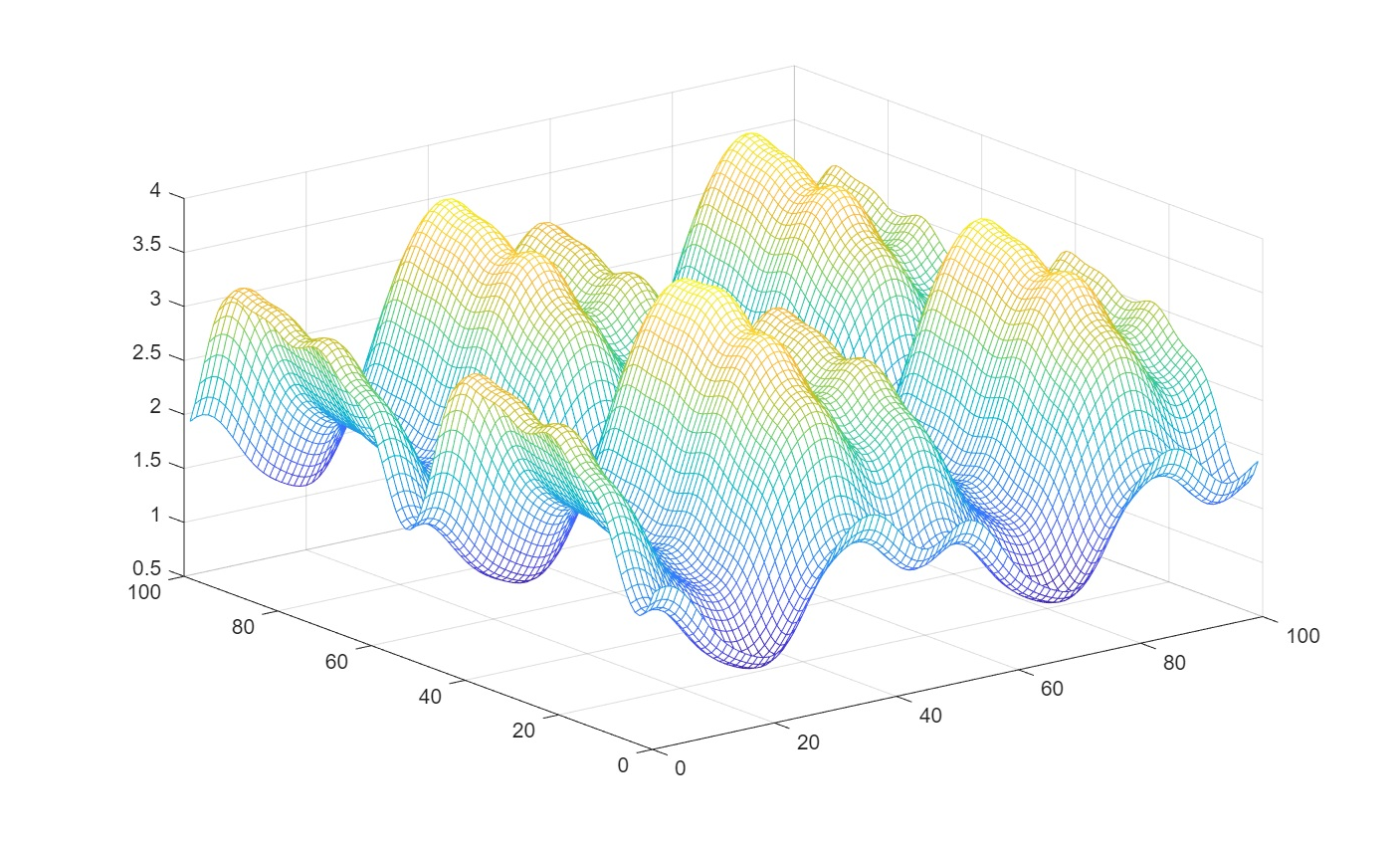}
\caption{Example synthetic image generated based on a stochastic process. Left color plot is a nominal image and right color plot includes an injected Gaussian anomaly, height 0.2 (circled in green). Plots are normalized as blue = minimum value; red = maximum value. Mesh grid below depicts same synthetic image with injected Gaussian.}
\label{performancetest:fig1}
\end{figure}

We then generate our synthetic images as the tensor product of two, independent realizations of this process. The output is a 2-dimensional oscillatory function in the domain $[0,1]\times[0,1]$, whose output ranges roughly from 0 to 2. An example is shown in Figure \ref{performancetest:fig1}.

For our performance test, we generate a training set consisting of 100 independent realizations of the 2-dimensional synthetic images. For a test set, we include both nominal and anomalous images. Nominal images are generated in the same manner as the training images. Anomalous examples are initially generated in the same manner as the training images, but then have a 2-dimensional Gaussian overlaid on top. These Gaussians have covariance matrix $\big[\begin{matrix}
0.05 & 0\\
0 & 0.05
\end{matrix}\big]$ and means that are drawn randomly from $U[0.1,0.9]\times[0.1,0.9]$ (we exclude points near the exterior of the domain in order to ensure the majority of the Gaussian is present within the image). We scale the Gaussians to have heights ranging from 0 to 0.2 and choose them specifically such that we have heights on the order of $10^{-4}$, $10^{-3}$, $10^{-2}$, and $10^{-1}$. An example synthetic image with an injected Gaussian of height 0.2 is shown in Figure \ref{performancetest:fig1}. Overall, the test set includes 200 nominal images, and 200 images at each of 28 different Gaussian heights, for a total of 5800 test images.

We evaluate the detection capabilities of our KL-based method, using the train and test sets described above. The first step is to use the training set to construct a covariance operator, on which we perform our KL expansion. The KL expansion produces a set of 100 eigenfunctions, which we arrange in descending order of associated eigenvalue. We choose $M=85$ as the truncation parameter, and let the last 15 eigenfunctions constitute the residual space. Then, we construct a multilevel basis according to the procedure outlined in Algorithm \ref{multilevelbasis}, creating our tree with levels 0 to 6 (level 0 contains the entire image domain and level 6 provides the finest granularity). We then classify each image as either nominal, or anomalous based on the process described in Algorithm \ref{algorithm5}, using a significance level of $\alpha=0.05$. A statistically significant result at any level of our tree triggers an anomalous classification for the image in question. For an image with an injected Gaussian anomaly, we consider our level of localization to be the highest (or finest) level at which we correctly detect an anomaly in the cell containing the center of the Gaussian.

For a comparison, we also employ a PCA-based residual space anomaly detection method, described in \cite{Lakhina2004}. The first step of the method involves running Principal Component Analysis (PCA) on our covariance operator to obtain the eigenvectors and associated eigenvalues. We then select a truncation parameter as the cutoff point between the principal components and the residual components. In order to maintain a consistent comparison, the truncation parameter is chosen to be $M=85$, the same value used for our KL method. We project the test image onto the principal components, calculating the projection error as the difference between the original image and the projection. One should note that this projection error is the same as the projection image onto the residual space of the last 15 components. We refer to the L2-norm squared of the projection error as the Squared Prediction Error (SPE), which we use as our test statistic. Our threshold for detection is the Q-statistic, discussed in \cite{Jackson1979}; we compute it using a significance level of $\alpha=0.05$, the same value used for our KL method.

\begin{figure}[t]
\centering
\includegraphics[scale = 0.5]{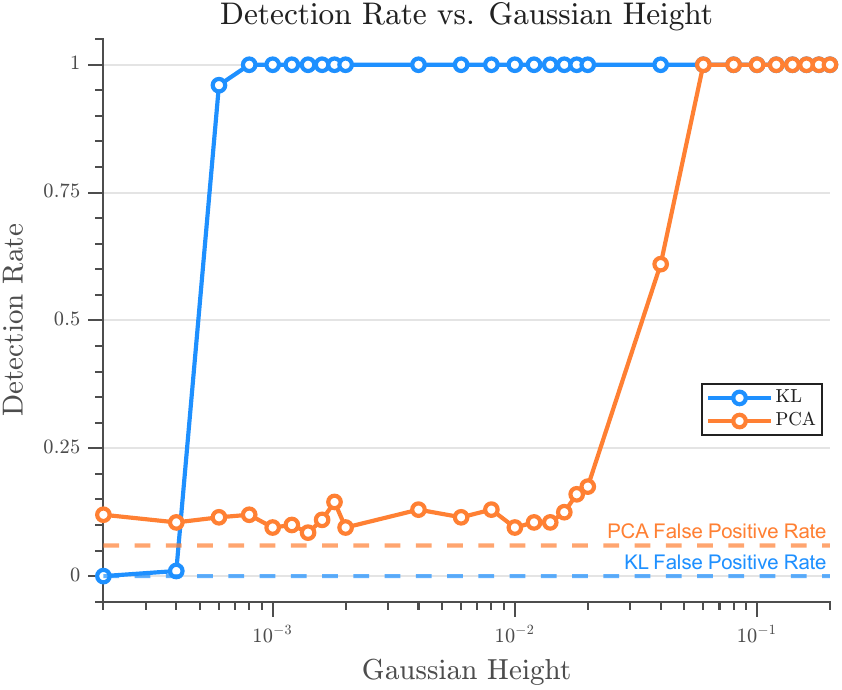}
\includegraphics[scale = 0.5]{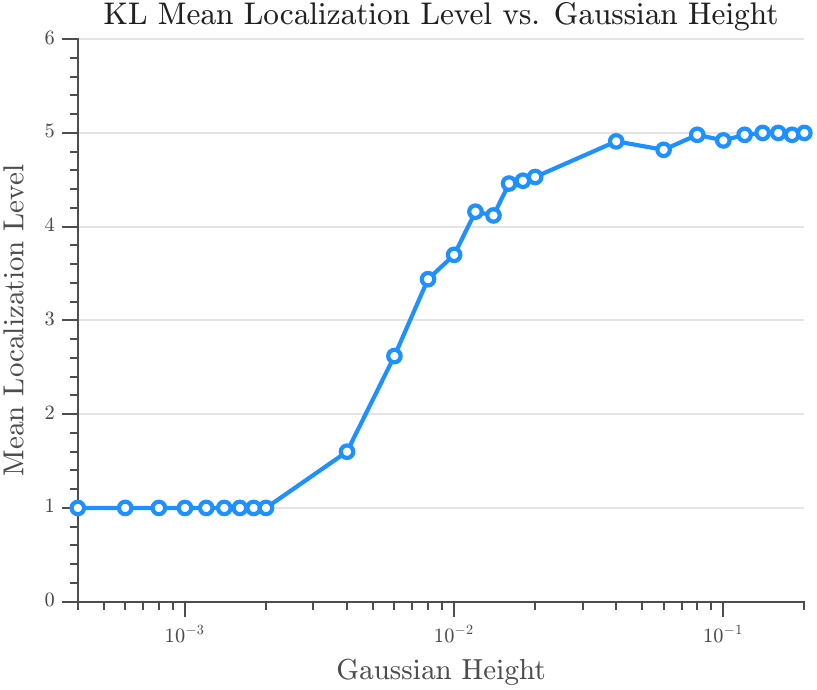}
\includegraphics[scale = 0.5]{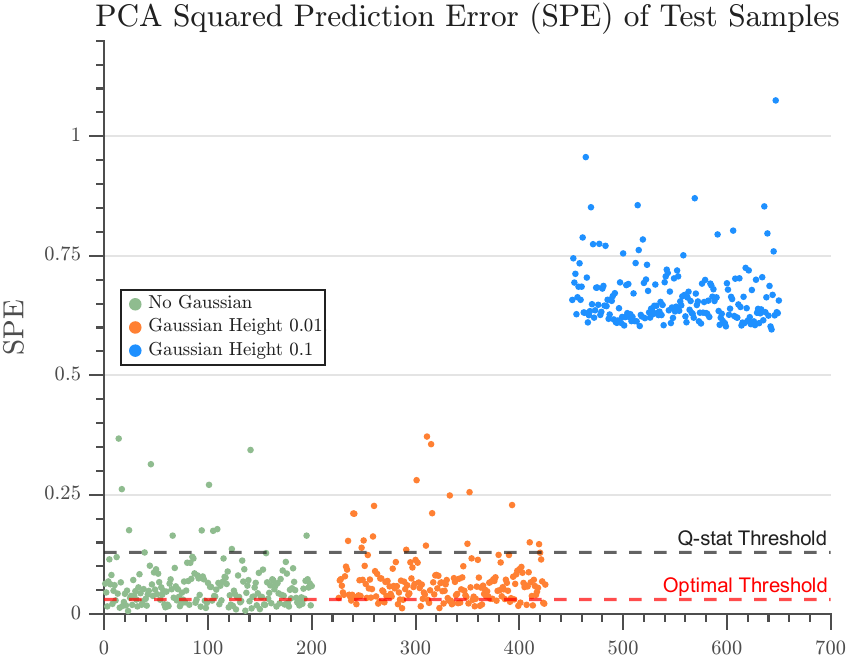}
\caption{Plots showing results from running our KL-based method and a PCA-based method on our synthetic test set. Top left plot shows detection statistics for both methods, plotted against the height of the anomalous Gaussian. False detection rates on the nominal test images are shown as dotted lines for both methods. Top right plot shows the KL method's mean level of localization as a function of Gaussian height. Bottom plot shows Squared Prediction Error (SPE) for each sample in the test set with no Gaussian (green), Gaussian height 0.01 (orange), and Gaussian height 0.1 (blue). Dotted black line shows the Q-statistic threshold at $\alpha=0.05$ confidence level; red dotted line shows the optimal separating threshold between the green and orange points.}
\label{performancetest:fig2}
\end{figure}

One should note that our KL-based detector is conceptually similar to the PCA-based detector; both involve projecting the test image onto the residual space of a centered SVD process, and classifying based on the size of the projection. There are two main differences: 1. the method by which we classify based on the residual projection and 2. the method by which our KL method determines localization through a multilevel basis.

We run both the KL and the PCA methods on our test set of 200 nominal images and 5600 anomalous images. The detection statistics, separated by Gaussian height, are shown in Figure \ref{performancetest:fig2}, top left. Our method maintains perfect detection among anomalous images until the Gaussian heights decrease down to the order of $10^{-4}$. This is in contrast with the PCA method's results, where detection starts to degrade once the Gaussian heights decrease down to the order of $10^{-2}$. Additionally, our method registers zero false positives among the 200 nominal test images (denoted by the dotted blue line), whereas the PCA-based method registers a $6\%$ false positive rate (denoted by the dotted orange line). Our improved performance can be attributed to the difference in the methods' procedures for determining classification from the residual projection.

Using our KL expansion, we leverage the probabilistic properties of the data and the global covariance structure in the test image to directly compute $p$-values connected to regions of the test image. Our direct computation of $p$-values for each separate test image provides an adaptable threshold that keeps our method sensitive to anomalies while maintaining robustness against false detections.

In contrast, the Q-statistic threshold used in the PCA-detector is static across the test set, and assumes the underlying data is normally distributed (recall, our method makes no such assumption). This threshold translates to decreased sensitivity as the anomaly height decreases. In Figure \ref{performancetest:fig2}, bottom, there is clear separation between the SPE's of the nominal test images and those with Gaussian height 0.1. However, when the Gaussian height decreases to 0.01, the clouds of SPE's between the nominal and anomalous images converge to similar levels. As such, even if we had prescient knowledge of the labels in the test set, and were to apply the optimally separating threshold (red dotted line) to the SPE's, we would still be unable to clearly separate the nominal and anomalous images.

Additionally, our KL method provides localization via the multilevel basis. The localization of the KL method is shown as a function of Gaussian height in Figure \ref{performancetest:fig2}, top right. Recall that each level of localization involves splitting the domain of the test image roughly in half, so level 0 includes the entire image, level 1 splits the image into $\frac{1}{2}$'s, level 2 into $\frac{1}{4}$'s... We are able to consistently localize the anomaly to levels 4-5 (within the correct $\frac{1}{16}$ to $\frac{1}{32}$ of the test image) while the Gaussian height is on the order of $10^{-2}$ or greater. Additionally, even when the Gaussian height is extremely small (order of $10^{-4}$ or $10^{-3}$) we are still able to correctly localize the anomaly at level 1, or to the correct half of the image.

Meanwhile, the PCA-based method provides classification, but not localization. There are methods of adapting PCA-based methods to provide localization, such as calculating the residual projection within a sliding window. While these methods provide localization for an anomaly, they typically focus solely on a subset of the image's domain, ignoring the global structure within the rest of the image. Our KL method provides localization within the multilevel basis while still considering the global structure of the test image.

\section{Application: Forest degradation}
\label{section:applications}

This mathematical framework is well suited for detecting changes in
terrestrial land surfaces based on both optical and radar data. Here, we apply it to data collected from the
Sentinel-2 satellite (see \cite{Drusch2012}), 
for the detection of forest
degradation in the Amazon. Detection here is a complex task, as can be
seen from Figure \ref{PR:Fig2}. In particular, detection of changes in
the state of a forest is significantly hindered by the presence of
misleading anomaly artifacts such as cloud cover.  We 
demonstrate the application of the multilevel anomaly filter to
Sentinel-2 satellite optical sensor data. The $p$-values are calculated from
Theorem \ref{mls:detection}. Note that these $p$-values do not require 
the distribution of the data, only the covariance structure, which is 
a significantly easier problem.

\begin{remark}
A cloud masking detection algorithm is applied from \cite{Zhu2012}. These
algorithms are not perfect, and in many instances clouds fail to be
detected or removed (see Figure \ref{PR:Fig2}). The
approach developed above provides a much more viable mechanism for automatically detecting and accounting for these cloud artifacts. The detection of clouds is explored in \cite{Castrillon2026deforest}.
\end{remark}

\begin{figure}[htb]
  \centering

    \begin{tabular}{c c}
    {\includegraphics[scale = 0.4, trim = 5cm 8cm 5cm 6.4cm, clip]{./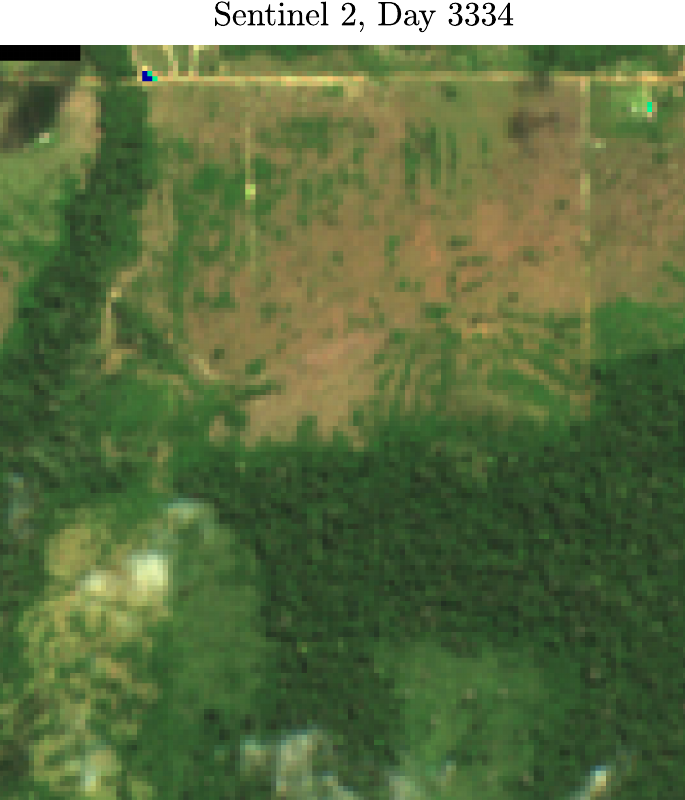}} &
    {\includegraphics[scale = 0.4, trim = 5cm 8cm 5cm 6.4cm, clip]{./figures/RGBImageFrame3484.pdf}} \\
    {\includegraphics[scale = 0.4, trim = 5cm 8cm 5cm 6.4cm, clip]{./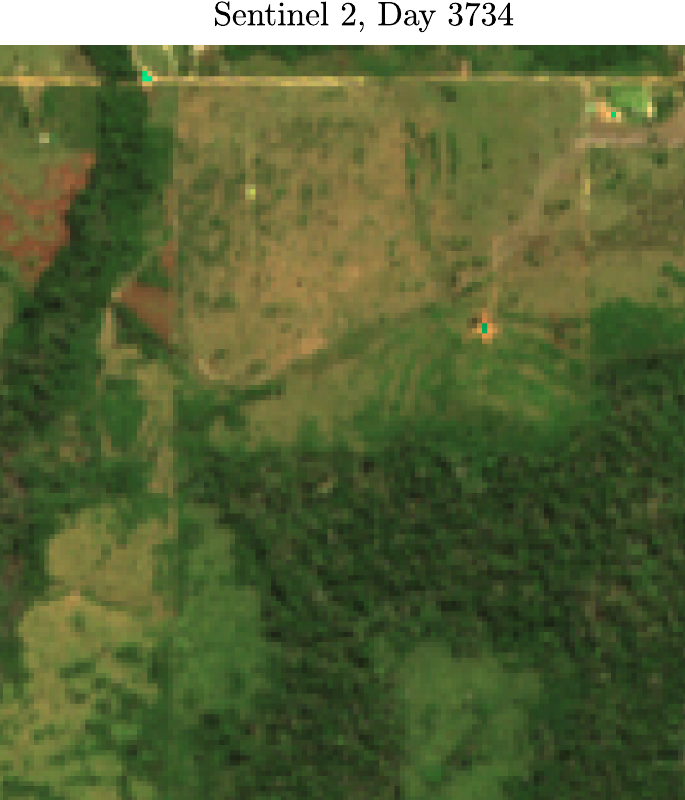}} &
    {\includegraphics[scale = 0.4, trim = 5cm 8cm 5cm 6.4cm, clip]{./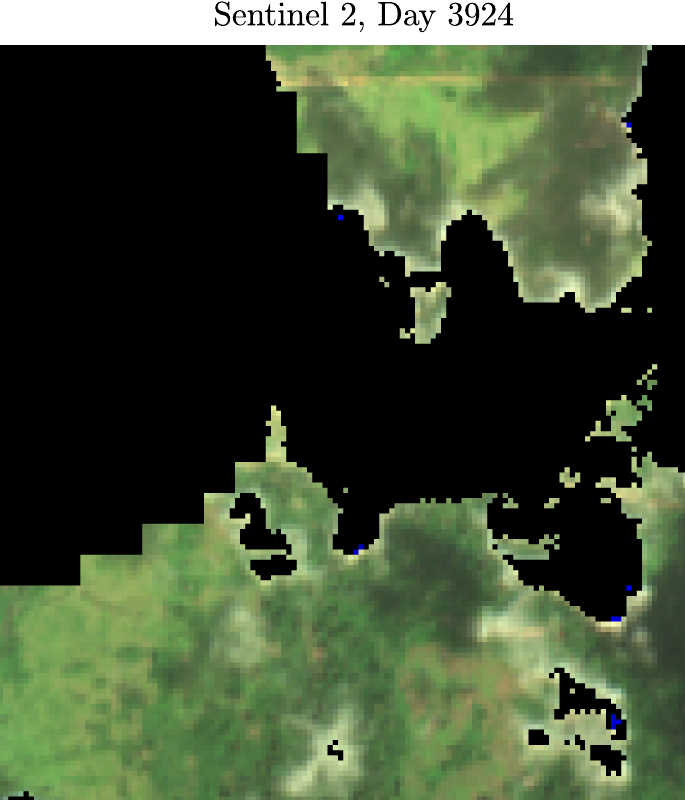}} 
    \end{tabular}
\caption{Deforestation sequence from Sentinel 2 satellite data.  Four
  frames of Amazon forest at \emph{days} 3334, 3484, 3774 and 3924
  showing the clearing and regrowth of forest.  Note that from the brown
  discoloration at day 3484 swaths of the trees are
  cleared. By day 3774 the forest vegetation grows back from 
  nearby trees. However, as we will see, it does not return to the
  earlier state, as this is new forest.  This will be clear when we
  apply the multilevel filter to the EVI data. However, it will be
  significantly more pronounced within the multispectral data.
  Day 3924 corresponds to a cloudy day. The black pixel corresponds to
  data removed by the standard cloud removal
  algorithms.} 
\label{PR:Fig2}
\end{figure}

\begin{figure}[htb]
  \centering
\begin{tikzpicture}
    \begin{scope}[scale = 1.35, every node/.style={scale=1.35}]
    \node at (0,6)
              {
                \includegraphics[scale = 0.25]{./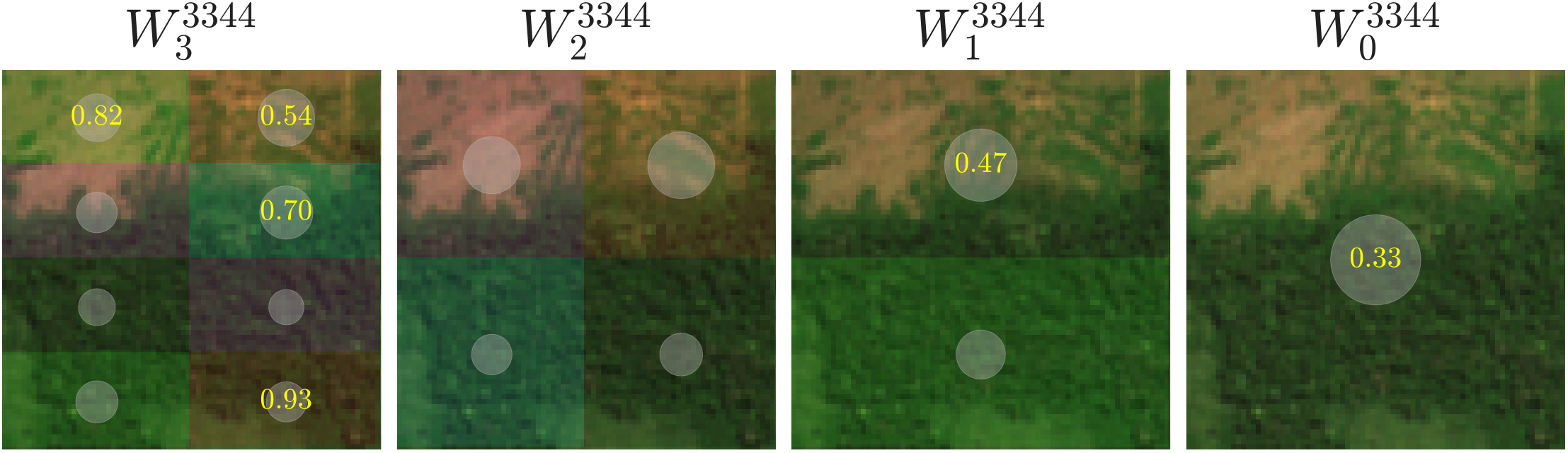}
              };
    \node at (0,3)
              {
                \includegraphics[scale = 0.25]{./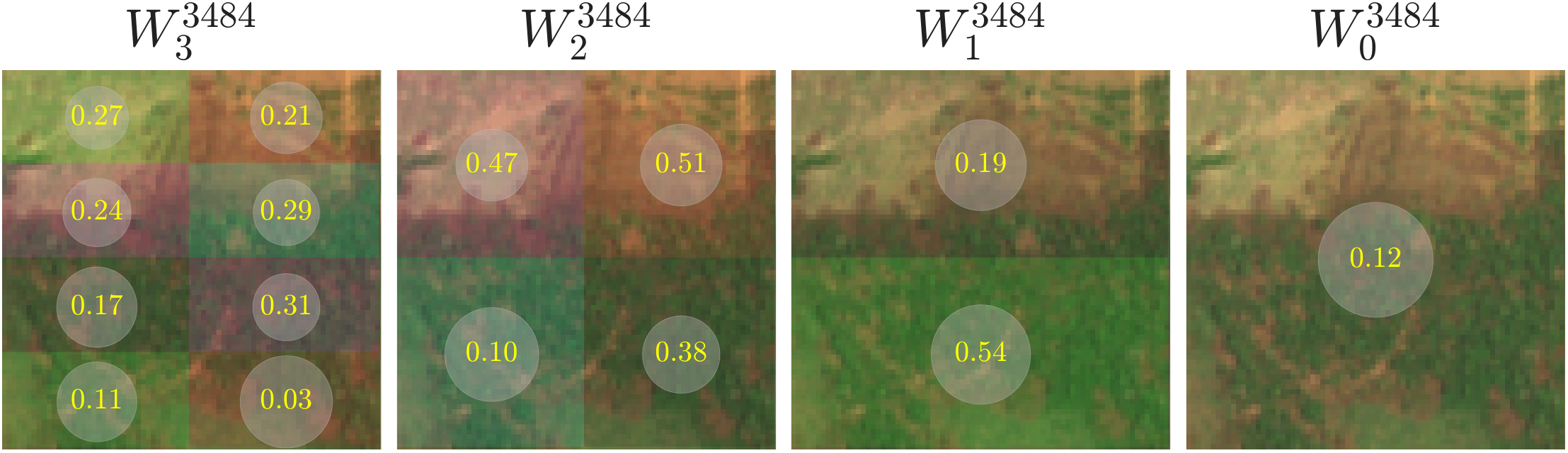}
               };
                  \node at (0,0)
              {
                \includegraphics[scale = 0.25]{./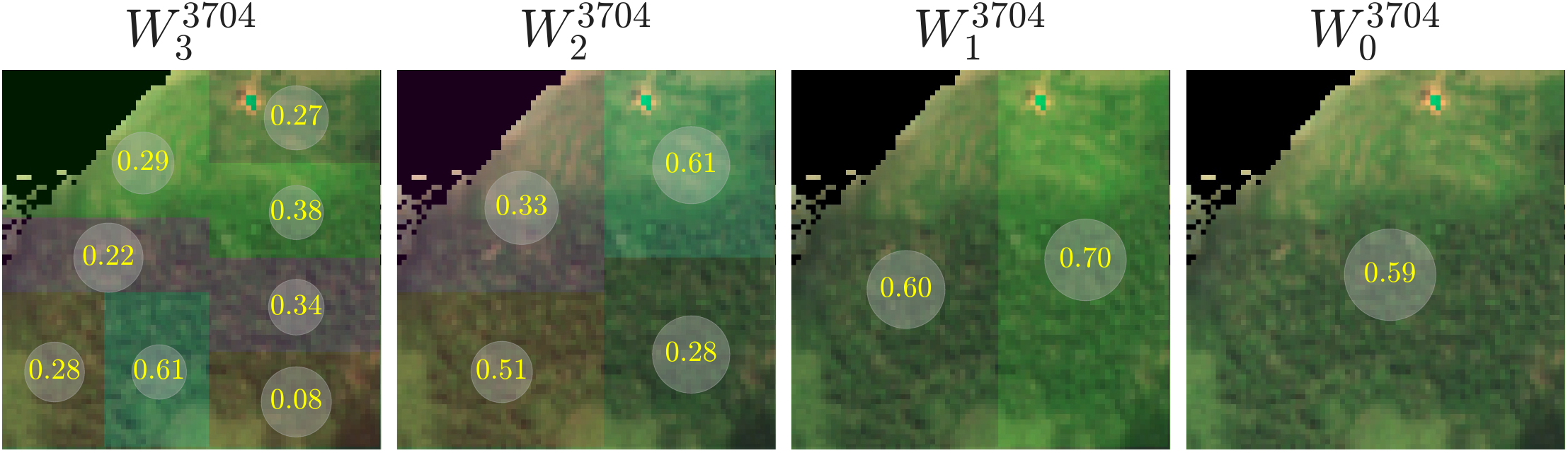};
              };
    \end{scope}  
  \end{tikzpicture}
 \caption{Multilevel anomaly map for days 3344, 3484 and 3704. The top image corresponds to the multilevel cells $B^l_k \in \mcB^l$ on the patch of terrain for each level $l=0,1,\dots,3$ overlayed on the RGB map. The cell supports for each level of resolution are traced by the points of different translucent colours. Notice that the cell size is smaller for higher levels and larger for lower levels. For each cell $B^l_k \in \mcB^l$ and corresponding basis functions $D^l_k \in \mcD^l$, the $p$-value for that cell (Theorem \ref{mls:detection}) is shown, i.e, the probability that the null hypothesis $H_0$ is correct. The $p$-value is printed in yellow when it is smaller than 1.  The grey circle corresponds to the size of $E^l_k$. For day 3344 the sizes of the anomalies $E^l_k$ are shown for each cell $B^l_k$. On day 3484 a part of the tree cover in the forest is removed, so that magnitudes of the anomalies increase. Furthermore the $p$-values decrease significantly for different cells indicating a change in the vegetation. By day 3704 the forest has largely recovered and most of the $p$-values increase, but sizes of the anomalies are still somewhat larger than for day 3344. This is due to the new vegetation. As will be observed, these changes will be significantly more pronounced for multilevel  anomaly detection on the multispectral Sentinel 2 data. Note that the shapes of the cells change due to missing data from the cloud cover.}
\label{PR:Fig4}
\end{figure}

\begin{figure}[ht]
\centering
\begin{tikzpicture}[>=latex',scale = 1, every node/.style={scale=1}]
         \node at (0,0) {\includegraphics[scale = 0.44, trim = 0cm 7.7cm 0cm 0cm,
             clip]{./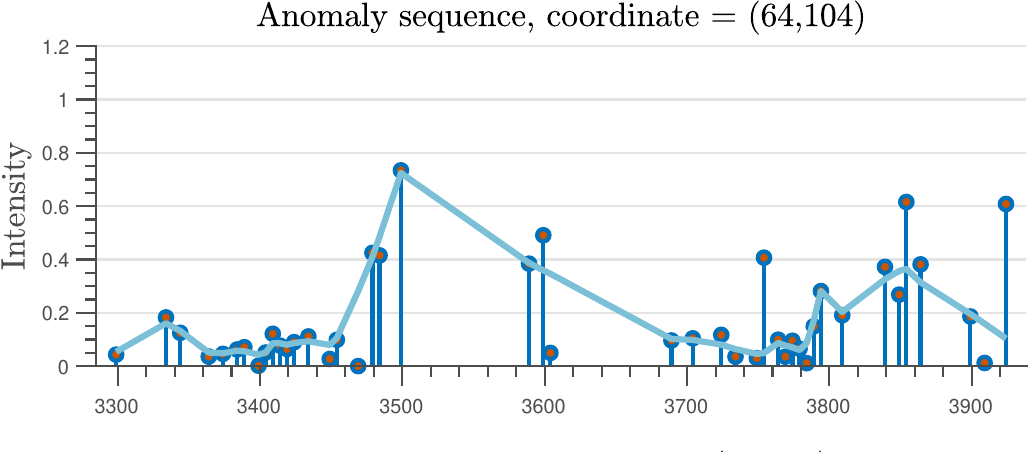}};

         \node at (7.5,0.03) {\includegraphics[scale = 0.44, trim = 0cm 0cm 0cm 7.5cm,
             clip]
             {./figures/AmazonForest/SequenceAnomalyFrame-64-104-crop.pdf}};
\end{tikzpicture}

\caption{Anomaly sequence in a pixel-level time series of satellite data mapping
environmental change illustrated in Figure \ref{PR:Fig2}. The
  multilevel anomaly filter is applied spatially to each frame, with
  the anomaly quantified and plotted against time. A robust LOESS
  is performed on the sequence (blue line).  Logging of the forest
  is detected on day 3484, where anomaly level increases. After
  logging, the forest is allowed to recover, with recovery determined
  on day 3774.}
\label{PR:Fig3}
\end{figure}

From the optical bands, the Enhanced Vegetation Index (EVI) (see \cite{Huete2002}), is created and used primarily to detect vegetation land cover. In Figure \ref{PR:Fig2}) the changes during the test period are shown. This example illustrates the complexity of degradation. On day 3334 the forest is in stable condition. On day 3484 logging of the forest occurs and we notice changes in land.  Furthermore, the forest does not remain in a degraded state and by day 3734  leaves from adjacent trees grow over the bare ground. Note that the anomaly introduced by imperfect cloud masking on day 3909 significantly complicates the task of detecting changes in the forest. 

\subsection{Anomaly detection on scalar data ($d = 1$)}

The multilevel detection approach is first tested on scalar Sentinel-2 data. From 71 Sentinel-2 data frames, starting from day 1 to day 3200, the covariance structure is computed and the multilevel anomaly filter is constructed. Each pixel of the $150 \times 150$ area corresponds to a land patch of $10m \times 10m$. However, the domain $U$ corresponds to the candidate test frame with the black pixels (missing data) removed. We further extract $75 \times 75$ land cover, construct the multilevel filter and apply the multilevel filter to each of the EVI test frames starting from day 3300. 

We choose the truncation parameter $M = 60$. This corresponds to about 4 orders of magnitude decay from the first eigenvalue (See
Figure \ref{truncation}). An alternative choice is to apply a  cross-validation method, but that would depend on the nature of the anomaly. For example, we could use a small area to calibrate the parameter $M$ and then apply detection in the full test area. We are developing alternate algorithms to obtain "good" truncation parameters. These will be described in more detail in a future publication.

Given the assumption that $\bw(\bx) \in \bV_{0}^{\perp} \cap \bV_{n+1}$ an anomaly map is reconstructed as $\bw(\bx) = \sum_{l=0}^n \sum_{\bpsi^{l,k}_{p} \in D^{l}_{k}} \sum_{D^{l}_{k} \in \mcD^l} d^{l,k}_{p} \bpsi^{l,k}_{p}(\bx)$ for each frame (day) of the test data. Note that since our data are finite dimensional, this is equivalent to projecting the test frames on the first $M$ eigenfunctions and subtracting it from the test frame data i.e. the residual component.

\begin{figure}[tb]
\centering
\includegraphics[scale = 0.5]{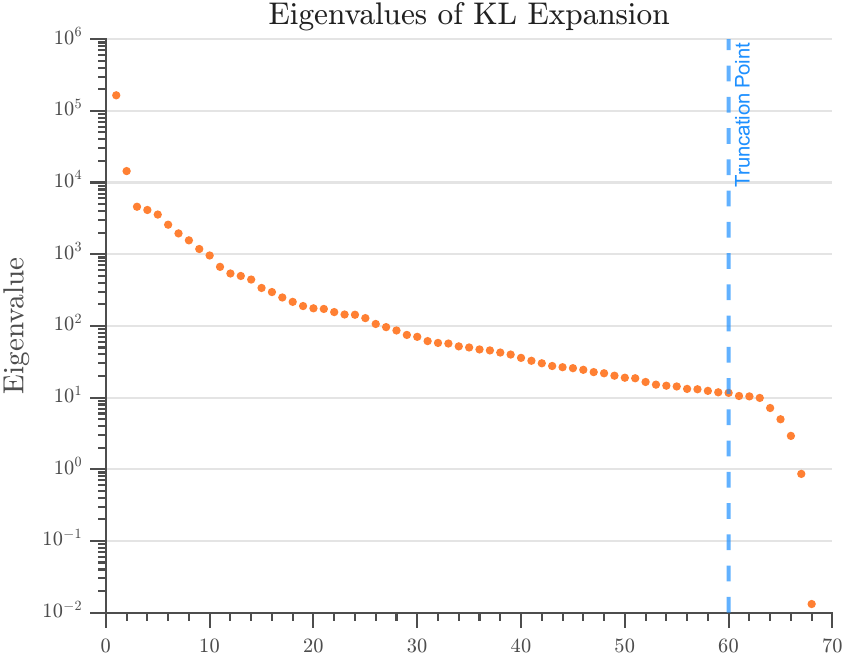}
\caption{Eigenvalue decay of nominal data and truncation parameter $M$}
\label{truncation}
\end{figure}

The projection coefficients $d^{l,k}_p$ are computed for levels $l = 0,\dots,3$ for each available test day e.g. $\bW^{\mbox{day}}_0,\bW^{\mbox{day}}_1,$ $\dots,\bW^{\mbox{day}}_3$. The contribution to the total size of the anomaly $\|\bw(\bx)\|_{L^{2}(U;\R^d)}$ for any fixed realisation on day, within each cell $B^l_k$, is computed as $E^l_k := \sqrt{\sum_{\bpsi^{l,k}_{p} \in D^{l}_{k}} (d^{l,k}_{p})^{2}}$. Note that for any fixed day $\|\bw(\bx)\|^2_{L^{2}(U;\R^d)} := \sum_{l = 0}^{n}$ $ \sum_{\bpsi^{l,k}_{p} \in D^{l}_{k}} \sum_{D^{l}_{k} \in \mcD^l} (d^{l,k}_{p})^{2}$.

In Figure \ref{PR:Fig4} the multilevel anomaly map for days 3344, 3484 and 3704 are shown for levels of resolution $l = 0,\dots,3$. For each cell $B^l_k$ the projection coefficients $d^{l,k}_{p}$ are computed along with the corresponding $p$-values of the cell i.e. the probability the null hypothesis $H_0$ is correct. The translucent rectangle corresponds to the compact support of the functions in $B^l_k$. The grey circle in the middle of the cell $B^l_k$ reflects the size of anomaly given by $E^l_k:=\left(\sum_{\bpsi^{l,k}_{p} \in D^{l}_{k}} (d^{l,k}_{p})^{2}\right)^{\frac{1}{2}}$ for each cell $B^l_k \in \mcB^{l}$. From days 3344, 3484 and 3704 we can see the changes in the forest as indicated by the decrease and then increase of the $p$-values corresponding to the cells $B^l_k \in \mcB^l$.

In Figure \ref{PR:Fig3} (a) the anomaly for pixel coordinate (64,104) is tracked in time. A robust LOESS (\cite{Cleveland1979,Matlab2025b}) is applied to the anomaly sequence. On day 3499 we note the large jump in the anomaly for that pixel coordinate, due to the removal of trees. As time evolves, the anomaly metric grows, reflecting the clearing of trees. However, the vegetation from nearby trees grows into the area (day 3774), and eventually the anomaly level returns to zero. This is expected as leaves in the adjacent trees grow to cover the area.

\subsection{Anomaly detection on scalar data ($d = 6$)}

Results are shown below for application of the multilevel filter to
vector field data.  From the Sentinel 2 dataset used in the previous
experiment, the same land cover of $75 \times 75$ pixels is
extracted. However, here we do not compute the EVI index from
individual optical bands but instead use all of the optical bands 1 -
6 (blue, green, red, near-infrared, shortwave infrared 1 and shortwave
infrared 2) as a vector field defined on the land cover.  The
covariance functional matrix across all bands is estimated and the
eigenspace is constructed for $M = 60$ eigenfunctions.  The anomaly
sequence is computed from the vectorial field multilevel filter
$\bw(\bx) =
\sum_{l=0}^n
\sum_{\bpsi^{l,k}_{p} \in D^{l}_{k}} \sum_{D^{l}_{k} \in \mcD^l}
d^{l,k}_{p} \bpsi^{l,k}_{p}(\bx)$
for each available day.

\begin{figure}[htb]
  \centering
  
  \begin{tikzpicture}
    \begin{scope}[scale = 1.35, every node/.style={scale=1.35}]
    \node at (0,6)
              {
                \includegraphics[scale = 0.25]{./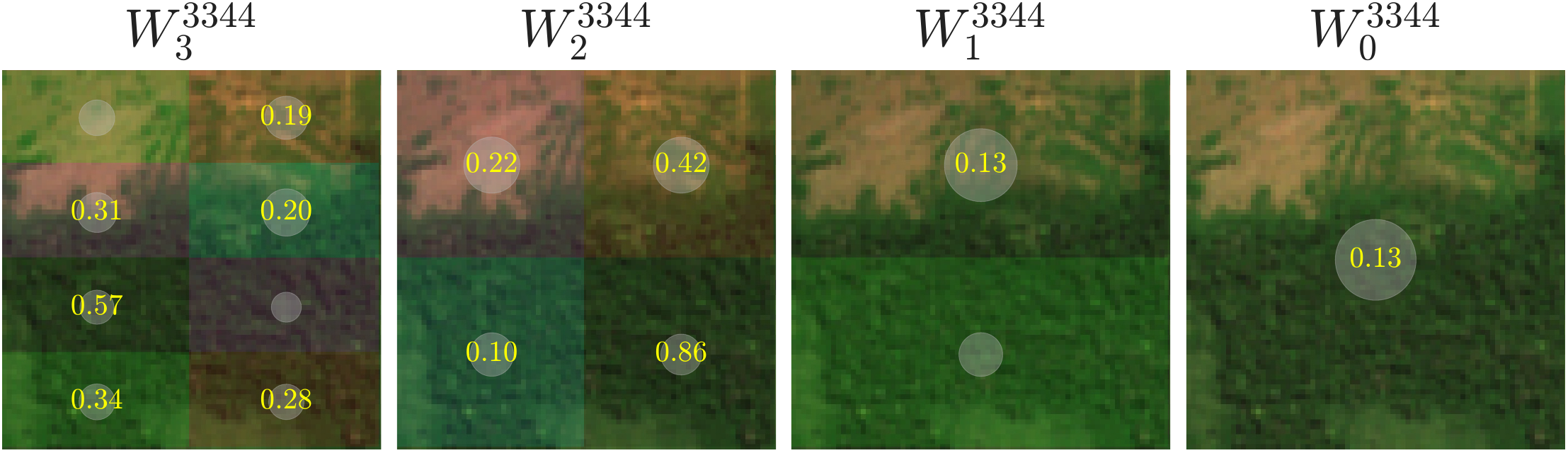}
              };
    \node at (0,3)
              {
                \includegraphics[scale = 0.25]{./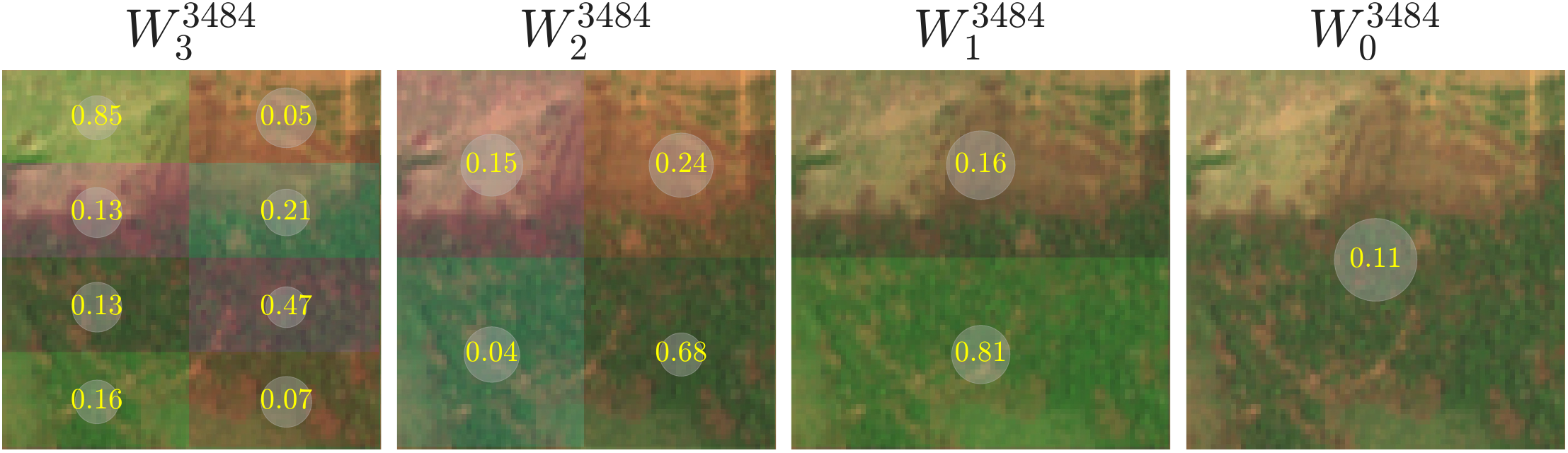}
               };
                  \node at (0,0)
              {
                \includegraphics[scale = 0.25]{./figures/AmazonForest/sentinel_vec_day3704_eig60.pdf}
              };
    \end{scope}  
  \end{tikzpicture}
\caption{Multilevel projection coefficient anomaly
  map for vector field Sentinel-2 data on a $75 \times 75$ land cover
  patch for days 3344, 3484 and 3704. Six spectral bands are used that
  include blue, green, red, near-infrared, shortwave infrared 1 and
  shortwave infrared 2. For visualization purposes the anomaly sizes $E^l_k$
  are divided by 1000.  For day 3344 the sizes of the anomalies $E^l_k$
  are shown for each cell $B^l_k$. On
day 3484 part of the trees in the forest are removed and thus
the sizes of the anomalies increase significantly. On day 3704
the forest has mostly recovered, but in contrast to the scalar EVI
data from Figure \ref{PR:Fig4} the sizes of the anomalies further
increase. This is due to the near-infrared (band 4) spectral data
which detects new chlorophyll. This is consistent with
Figure \ref{PR:Fig7} where the anomaly of band 4 increases
significantly at day 3704.}
\label{PR:Fig6}
\end{figure}

\begin{figure}[htb]
\centering
\begin{tikzpicture}[>=latex',scale = 0.98, every node/.style={scale=0.98}]
         \node at (0,0) {\includegraphics[scale = 0.49, trim = 0cm 11.4cm 0cm 0cm,
             clip]{./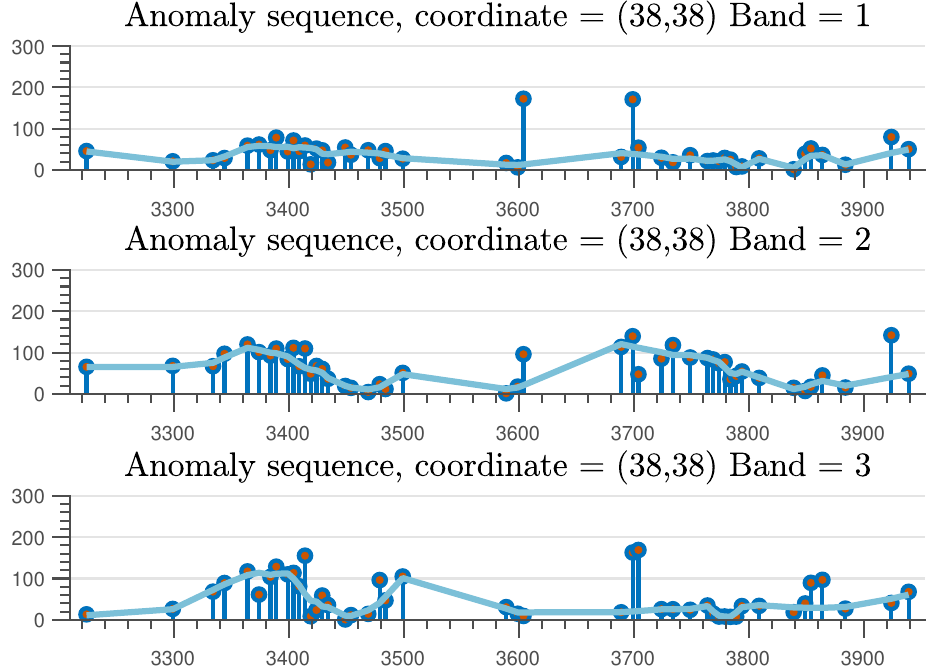}};
         \node at (7.6,0.01) {\includegraphics[scale = 0.49, trim = 0cm 0cm 0cm 11.35cm,
             clip]{./figures/AmazonForest/SequenceAnomalyFrameVec-38-38-crop.pdf}};

\end{tikzpicture}
\caption{Deforestation anomaly sequence for multi modality Sentinel 2 data. Anomalies
are detected with respect to all bands. By day 3484 the
clearing of the trees is captured by the anomaly increase of bands 1
- 5. As the trees grow back the anomalies are slowly
reduced. The
clearing of the trees in parts of the forest eventually leads to
regrowth of the forest with new leaves. The near-infrared band is used
to detect the presence of new chlorophyll in the leaves. Notice that
the anomaly for this band peaks at day 3740.  This is consistent with regrowth.}
\label{PR:Fig7}
\end{figure}

It is advantageous to study forest degradation by incorporating all
correlations of the anomalies across spatial and band data.  In
Figure \ref{PR:Fig6} sizes of anomalies for days 3344, 3484 and 3704
are shown. For each cell $B^l_k \in \mcB^l$, $l = 0,1,2,3$, the size of the anomaly $E^l_k$ is calculated and the corresponding $p$-value. For day 3344
sizes of anomalies $E^l_k$ are shown for each cell. 
Anomalies are relatively small. However, for the cell $B^0_0$ we observe a small $p$-value, which indicates that changes have already occurred.

On day 3484 part of the tree cover is removed, and 
sizes of the anomalies increase significantly. This is consistent
with the scalar EVI anomaly map shown in Figure \ref{PR:Fig4}.
On day 3704 the forest has essentially recovered, but in contrast to
the scalar EVI, the anomalies increase further. This
is due to the near-infrared (band 4) spectral data that reflect the new
chlorophyll from forest regrowth. This is consistent with
Figure \ref{PR:Fig7} where the anomaly signal led by the near-infrared band
increases significantly by day 3740.

In Figure \ref{PR:Fig7} the anomaly time sequences for all 6 bands are
shown for pixel (38,38).  From the bands (1-5) observe the
relation of the clearing of trees by day 3484 to increases in the
anomaly levels for all 5 bands. In addition, particularly in band 4
(near infrared), by day 3740 the anomaly becomes more
notable. Near infrared as a modality detects new leaves
(chlorophyll). This is observable with the growth of vegetation after
the clearing of some of the trees around day 3484. By the last day the
anomaly level in near infrared slowly decreases, consistent with aging
of the leaves.

\section{Conclusions}

In this paper we have developed a multilevel basis that is adapted to
the truncated Karhunen-Lo\`{e}ve expansion of a nominal random vector
field.  We show a full rigorous proof of the existence and optimality
of the KL expansion for random vector fields. Numerical construction
of the multilevel basis for the vector field is achieved with an
efficient algorithm originating in the numerical solution of
PDEs. Detection is achieved by examining relative contributions to the
multilevel spaces. Anomalies are quantified using a suitable norm and
reliable hypothesis tests with weak assumptions. We apply our approach
to the problem of deforestation and degradation in the Amazon forest.
This approach does not replace, but rather augments current
statistical approaches.  For example, the multilevel filter amplitudes
can signal anomalous behaviour that can be tracked through identifiable
transitions using change point detection methods. Furthermore, these
anomalous behaviours can be identified at multiple change points,
further elucidating temporal behaviours. We are currently further
investigating these latter aspects.

Another problem of very general interest in this area is the reliable
identification of cloud cover. We have demonstrated a more effective
approach to this problem by examining variable temporal behaviour
anomalies. In \cite{Castrillon2026deforest} the authors explore 
detection of cloud cover.


\section*{Acknowledgments and Disclosure of Funding} 
We appreciate the help from Pontus Olofsson, Hanfeng Gu and Xiaojing
Tang in providing the Sentinel-2 dataset. Furthermore,
their expertise in remote sensing was useful in putting context to
this work and in interpreting the results.  We also appreciate the
feedback from Sucharita Gopal. Finally, we appreciate the insightful 
feedback from the JMLR reviewers. This material is based on work
supported by the National Science Foundation under Grants Nos. 1736392
and 2319011.


\appendix
\section{}
\label{appn}







In this appendix, we show the proof for the existence and optimality
of the vector field KL expansion by using tensor theory. However, we
first need to show what is meant by the tensor product spaces
$L^{2}(U;\R^d) \otimes L^{2}_{\bbP}(\Omega)$ and
$L^{2}(U;\R^d) \otimes L^{2}(U;\R^d)$.  We shall now prove a series of
useful results that are used to show existence and optimality of the
vector field KL expansion. The proofs for
Propositions \ref{intro:prop1}, \ref{intro:prop2}
and \ref{intro:prop3} are based on the arguments given in Chapter II.4
from \cite{Reed1972} and extended to the vector field case.

\begin{prop}
Let $\{\bv_k\}_{k \in \bbN}$ be a complete orthonormal basis of
$L^{2}(U;\R^{d})$ then $\{\bv_k(\bx)$ $\bv_l(\by) \T$
$\}_{k,l \in \bbN}$ is a complete orthonormal basis of $L^{2}(U \times
U;\R^{d \times d})$.
\label{intro:prop1}
\end{prop}
\begin{proof}
First we show that $\{\bv_k(\bx) \bv_l(\by) \T\}_{k,l \in \bbN}$ are
orthonormal.  Note
\begin{equation}
\begin{split}
&(\bv_i(\bx) \bv_j(\by) \T, \bv_k(\bx) \bv_l(\by) \T)_{L^2(U;\R^{d \times d})} =
\int_U \int_U
\oone\T (\bv_i(\bx) \bv_j(\by) \T
\bullet
\bv_k(\bx) \bv_l(\by) \T)
\oone\,\,\mbox{d}\bx \mbox{d}\by
\\
&=
\int_U \int_U
\oone \T
\begin{bmatrix}
v^i_1(\bx) v^k_1(\bx) \\
v^i_2(\bx) v^k_2(\bx) \\
\vdots \\
v^i_d(\bx) v^k_d(\bx)
\end{bmatrix}
\begin{bmatrix}
v^j_1(\by) v^l_1(\by) & \dots & v^j_d(\by) v^l_d(\by) 
\end{bmatrix}
\oone \,\,\mbox{d}\bx \mbox{d}\by
\\
&=
\int_U 
\bv_i(\bx) \T \bv_k(\bx) \,\,\mbox{d}\bx 
\int_U \bv_j(\by) \T \bv_l(\by) \,\, \mbox{d}\by 
=
\delta[i-k] \delta[j-l].
\end{split}
\end{equation}
Let $\bF \in L^{2}(U \times U;\R^{d \times d})$ and suppose that
\[
\int_U \int_U  \oone\T (\bF(\bx,\by) \bullet \bv_k(\bx) \bv_l(\by)) \oone \,\,\mbox{d}\bx \mbox{d}\by = 0
\]
for all $k,l \in \bbN$. Then
\begin{equation}
\begin{split}
&\int_U \int_U  \oone\T (\bF(\bx,\by) \bullet \bv_k(\bx) \bv_l(\by)) \oone \,\,\mbox{d}\bx \mbox{d}\by
=
\int_U \int_U \sum_{i,j} F_{i,j}(\bx,\by) v_i^k(\bx)  v_j^l(\by) \,\,\mbox{d}\bx \mbox{d}\by \\
&=
\int_U \sum_{j=1}^{d} 
v_j^l(\by) \left(
\int_U \sum_{i=1}^{d}
F_{i,j}(\bx,\by)
v_i^k(\bx) \,\,\mbox{d}\bx
\right) 
\mbox{d}\by = 0
\end{split}
\end{equation}
Since $\{\bv_l\}_{l \in \bbN}$ is a complete orthonormal basis of
$L^{2}(U;\R^{d})$, it follows
\begin{equation}
\int_U \sum_{i=1}^{d}
F_{i,j}(\bx,\by)
v_i^k(\bx) \,\,\mbox{d}\bx = 0
\label{back:eqn1}
\end{equation}
almost everywhere for all $k \in \bbN$. Now, for each $k \in \bbN$ let
$D_k$ be the set of measure zero on which
\eqref{back:eqn1} does not hold. Letting $D:= \cup_{k \in \bbN} D_k$,
if $\by \notin D$ then
\[
\int_U \sum_{i=1}^{d}
F_{i,j}(\bx,\by)
v_i^k(\bx) \,\,\mbox{d}\bx = 0
\]
for all $k \in \bbN$. Since $\{\bv_k\}_{k \in \bbN}$ is a complete
orthonormal basis for $L^{2}(U;\R^{d})$ it follows $\bF(\bx,\by) = \0$
almost everywhere with respect to the measure d$\bx$d$\by$.
\end{proof}

\begin{prop}
$L^{2}(U;\R^d) \otimes L^{2}(U;\R^d)$
is isomorphic to
$L^{2}(U \times U;\R^{d \times d})$.
\label{intro:prop2}
\end{prop}

\begin{proof}
From Proposition 1 we have that $\{\bv_k \bv_l \T
\}_{k,l \in \bbN}$
is a complete orthonormal basis for $L^{2}(U \times U;\R^{d \times d})$. Conversely, from Proposition 2 in \cite{Reed1972} we have that
$\{\bv_k \otimes \bv_l \}_{k,l \in \bbN}$ is a complete orthonormal
basis for $L^{2}(U;\R^d) \otimes L^{2}(U;\R^d)$. 
Now, define the map
$\mcM: L^{2}(U;\R^d) \otimes L^{2}(U;\R^d)
\rightarrow L^{2}(U \times U;\R^{d \times d})
$ such that 
\[
\mcM(\bv_k \otimes \bv_l ) = \bv_k \bv_l \T 
\]
for all $k,l \in \bbN$. Let $f \in L^{2}(U;\R^d) \otimes
L^{2}(U;\R^d)$, so that we can write $f = \sum_{k,l} \alpha_{k,l} \bv_k \otimes \bv_l$.
Define the map $\mcM': L^{2}(U;\R^d) \otimes L^{2}(U;\R^d)
\rightarrow L^{2}(U \times U;\R^{d \times d})$
as
\[
\mcM' f = \sum_{k,l} \alpha_{k,l} \mcM(\bv_k \otimes \bv_l)
= \sum_{k,l} \alpha_{k,l} \bv_k \bv_l \T.
\]
We have that
\[
\begin{split}
(\mcM'f, \mcM'f)_{L^{2}(U \times U;\R^{d \times d})}
&=
\sum_{k,l} \sum_{k',l'}
\alpha_{k,l} \alpha_{k',l'} (\bv_k \bv_l \T,
\bv_{k'} \bv_{l'} \T)
_{L^{2}(U \times U;\R^{d \times d})}
= \sum_{k,l} \alpha^2_{k,l}.
\end{split}
\]
Conversely, we have 
\[
(f,f)_{L^{2}(U;\R^d) \otimes L^{2}(U;\R^d)} =
\sum_{k,l} \sum_{k',l'}
\alpha_{k,l} \alpha_{k',l'} (\bv_k \otimes \bv_l, \bv_{k'} \otimes \bv_{l'})_{L^{2}(U;\R^d) \otimes L^{2}(U;\R^d)}
= \sum_{k,l} \alpha^2_{k,l}.
\]
\end{proof}

\begin{prop}
Let $\{\bv_k\}_{k \in \bbN}$ is a complete orthonormal basis (cob) of
$L^{2}(U;\R^{d})$  and $\{s_k\}_{k \in \bbN}$ is a cob in $L^{2}_{\bbP}(\Omega)$. Then $L^{2}(U;\R^d) \otimes L^{2}_{\bbP}(\Omega)$
is isomorphic to $L^{2}_{\bbP}(\Omega;L^{2}(U;\R^d))$ and 
$\{\bv_k(\bx)$ $s_l(\omega)\}_{k,l \in \bbN}$ is a cob of 
$L^{2}(U;\R^d) \otimes L^{2}_{\bbP}(\Omega)$.
\label{intro:prop3}
\end{prop}

\begin{proof}
Let $\{\bv_k\}_{k \in \bbN}$ be a complete orthonormal basis of
$L^{2}(U;\R^{d})$ and $\{s_l\}_{l \in \bbN}$ be a complete orthonormal basis of
$L^{2}_{\bbP}(\Omega)$. Now, define  the map
$\mcM: L^{2}(U;\R^d) \otimes L^{2}_{\bbP}(\Omega)
\rightarrow L^{2}_{\bbP}(\Omega;L^{2}(U;\R^d))$ as
\[
\mcM(\bv_k \otimes s_l(\omega) ) = \bv_k s_l(\omega).
\]
The result follows from an argument  similar to that of the proofs of
Proposition $\ref{intro:prop1}$ and $\ref{intro:prop2}$. Thus, the tensor product is well defined.
\end{proof}

Consider the operator
\[
  \mcC_{\bv} (\bu)(\bx) 
  := 
  \int_{U} {\rm
      Cov}(\bv(\bx,\omega),\bv(\by,\omega)) 
      \bu(\by)\,\mbox{d} \by
\]
for all $\bu \in L^{2}(U;\R^d)$.  From Lemma 2 in \cite{Harbrecht2016}
operator $\mcC_{\bv}:L^{2}(U;\R^d) \rightarrow L^2(U;\R^d)$ is a
non-negative symmetric, trace class operator. From Theorem 1 in
\cite{Harbrecht2016} there exists an orthonormal set of eigenfunctions
$\{\bphi_k\}_{k \in \bbN}$, where $\bphi_{k} \in L^{2}(U;\R^d)$, and
eigenvalues $\lambda_1 \geq \lambda_2 \geq \dots\geq 0$ such that
$\mcC_{\bv} \bphi_k$ = $\lambda_k \bphi_k$ for all $k \in
\bbN$. Furthermore, for any $\bu \in L^{2}(U;\R^d)$ we have that
$\mcC_{\bv}(\bu) = \sum_{k \in \bbN}
\lambda_{k}(\bu,\bphi_k)_{L^{2}(U;\R^d)} \bphi_{k}$.

\begin{lemma}[Vector field Mercer theorem]
\[
{\Cov}(\bv(\bx,\omega),\bv(\by,\omega)) 
= \sum_{k \in \bbN} 
\lambda_k \bphi_k(\bx)  \bphi_k(\by) \T.
\]  
\label{KL:Mercer}
\end{lemma}
\begin{proof}
Let
\[
C_{\bv} := \sum_{k \in \bbN} \lambda_k \bphi_k(\bx)  \bphi_k(\by) \T.
\]
Since $\{\bphi_k\}_{k \in \bbN}$ forms a complete orthonormal basis (cob) of $L^{2}(U)$ then from Proposition \ref{intro:prop1} $\{\bphi_k \bphi_l\T\}_{k,l \in \bbN}$ forms a cob in $L^{2}(U \times U;\R^{d \times d})$. Since $\sum_{k \in
    \bbN} \lambda_k < \infty$, we have that 
    $\|C_{\bv}\|_{L^{2}(U \times U;\R^{d \times d})} =  \sum_{k \in \bbN} \lambda_k \|\bphi_k(\bx)  \bphi_k(\by) \T\|_{L^{2}(U \times U;\R^{d \times d})} = 
    \sum_{k\in \bbN} \lambda_k < \infty$ and thus    
    $C_{\bv} \in L^{2}(U \times U;\R^{d \times d})$.
    
Similarly to Theorem 2.3 in \cite{Schwab2006} we can show the identity
\[
(\mcC_{\bv} \bu,\bw)_{L^{2}(U;\R^d)} = (C_{\bv}, \bu \bw \T)_{L^2(U \times U;\R^{d \times d})} \forall \bu, \bw \in L^{2}(U;\R^d).
\]
Since $\{\bv_k \bv_l \T\}_{k,l \in \bbN}$ is a complete orthonormal
basis for $L^2(U \times U; \R^{d \times d})$ we conclude that
$\Cov(\cdot,\cdot) = C_{\bv}$ almost everywhere. 
\end{proof}
We can now show the KL expansion for a random vector field.

\begin{theorem}
  Suppose  $\bv \in L^{2}_{\bbP}(\Omega;L^{2}(U;\R^d))$ then
  \[
  \bv(\bx,\omega) = \eset{\bv(\bx,\omega)} + \sum_{k \in \bbN} \lambda^{\frac{1}{2}}_{k} \bphi_k(\bx) Y_{k}(\omega),
  \]
where $ Y_k(\omega) = \frac{1}{\sqrt{\lambda_k}} \int_U
(\bv(\bx,\omega) - \eset{\bv(\bx,\omega)}) \T \bphi_{k}(\bx)
\,\mbox{\emph{d}} \bx$, $\eset{Y_k Y_l} = \delta_{kl}$ and $\eset{Y_k}
= 0$ for all $k,l \in \bbN$.
\label{appendix:kl}
\end{theorem}
\begin{proof}Without loss of generality (wlog), assume that $\eset{v(\bx,\by)} = 0$. From Proposition \ref{intro:prop3} we have that $\bv \in L^{2}_{\bbP}(\Omega;L^{2}(U;\R^d)) \cong L^{2}(U;\R^d) \otimes L^{2}_{\bbP}(\Omega)$. Now,  since $\{\bphi_k\}_{k \in \bbN}$  forms a cob of $L^{2}(U;\R^d)$ and $\{s_k\}_{k \in \bbN}$ forms a cob in $L^{2}_{\bbP}(\Omega)$ then from Proposition \ref{intro:prop3} we have that  $\{\bphi_k(\bx)s_l(\omega) \T \}_{k,l \in \bbN}$ forms a cob of $L^{2}(U;\R^d) \otimes L^{2}_{\bbP}(\Omega)$ and
\[
\bv(\bx,\omega) = 
\sum_{k,l \in \bbN} \alpha_{k,l} \phi_{k}(\bx) s_l(\omega)
\]
for some coefficients $\alpha_{k,l}$.  Now, taking the sum over $l$ we have that
\[
\bv(\bx,\omega) = 
\sum_{k \in \bbN} \phi_{k}(\bx) Z_k(\omega)
\]
for some  random variables $Z_k(\omega)$.  Note that this is equivalent of projecting $\bv(\bx,\omega)$ into $L^{2}(U;\R^{d})$. Now,
\[
\begin{split}
\Cov(\bv(\bx,\omega),\bv(\by,\omega)) &= 
\eset{\sum_{k \in \bbN} \phi_{k}(\bx) Z_k(\omega) \sum_{l \in \bbN} \phi_{l}(\by) Z_l(\omega)} \\
&=  \sum_{k,l \in \bbN} \eset{Z_k(\omega)Z_l(\omega)} 
\bphi_l(\bx)  \bphi_l(\by) \T.
\end{split}
\]
Alternatively,  from Lemma \ref{KL:Mercer} 
\[
\Cov(\bv(\bx,\omega),\bv(\by,\omega))
      = 
      \sum_{k \in \bbN} \lambda_k 
      \bphi_k(\bx)  \bphi_k(\by) \T.
\]
Since $\{\bphi_k(\bx) \bphi_l(\by)\T\}_{k,l \in \bbN}$ is an orthonormal set in $L^{2}(U \times U;\R^{d \times d})$, then if $k=l$ 
$\eset{Z^{2}_k(\omega)} = \lambda_k$, 
and if $k \neq l$ $\eset{Z_k(\omega)Z_l(\omega)} = 0$. It is not hard to show that $\eset{Z_k} = 0$. Now, let $Y_k := Z_k(\omega)/\sqrt{\lambda_k}$ and the result follows.
\end{proof}

We can now show optimality by using Theorem 2.7 in \cite{Schwab2006}. 
Suppose that $H,S$ are separable  Hilbert spaces with inner products $(\cdot,\cdot)_{H}$ and $(\cdot,\cdot)_{S}$ respectively. Any element $f \in H \otimes S$ can be written as a convergent series of the form $f = \sum_{k \in \bbN} f_k \otimes s_k$.


To apply Theorem 2.7 in \cite{Schwab2006}, we set $H:=L^{2}(U;\R^d) $ and
$S:=L^{2}_{\bbP}(\Omega)$. However, to apply this result, we need to explicitly construct the tensor product for these spaces. In Proposition \ref{intro:prop3} the tensor products constructions are well defined.

Suppose that $H_M \subset L^{2}(U)$ is a
finite dimensional subspace of $L^{2}(U)$ such that $\dim H_M = M$ and
$P_{H_M \otimes L^{2}_{\bbP}(\Omega)}: L^{2}(U;\R^{d}) \otimes
L^{2}_{\bbP}(\Omega) \rightarrow H_M \otimes L^{2}_{\bbP}(\Omega)$ is
an orthogonal projection operator. Suppose \corb{$\bv \in
  L^{2}(U;\R^{d}) \otimes L^{2}_{\bbP}(\Omega)$}, and  with wlog $\eset{\bv} =
0$. From Theorem \ref{appendix:kl} and Theorem 2.7 in \cite{Schwab2006} we conclude that
\[
\inf_{\begin{array}{c}
H_M \subset L^{2}(U;\R^d) \\
\mbox{dim}\, H_M = M
\end{array}
} \|\bv - P_{H_M \otimes
L^{2}_{\bbP}(\Omega)} \bv \|_{L^{2}_{\bbP}(\Omega) \otimes L^{2}(U)} =
\left( \sum_{k \geq M+1} \lambda_k \right)^{\frac{1}{2}},
\]
where the infimum is achieved \corb{only} when $H_M =
\mbox{span}\{\bphi_1,\dots,\bphi_{M}\}$.



\bibliographystyle{abbrv} 
\bibliography{juliorefs,KonRefs,SuchiRefs,PontusRefs,citations,multilevel,changedetectionreferences,FDA}

\end{document}